\definecolor{darkgreen}{rgb}{0,0.5,0}
\theoremstyle{plain}
\newtheorem{theorem}{Theorem}
\newtheorem{lemma}[theorem]{Lemma}
\theoremstyle{definition}
\newtheorem{definition}[theorem]{Definition}
\theoremstyle{remark}
\newtheorem*{remark*}{Remark}
\newcommand{\eps}{\varepsilon}
\newcommand{\N}{\mathbb{N}}
\newcommand{\cA}{\mathcal{A}}
\newcommand{\cB}{\mathcal{B}}
\newcommand{\cI}{\mathcal{I}}
\newcommand{\cC}{\mathcal{C}}
\newcommand{\cE}{\mathcal{E}}
\newcommand{\cO}{\mathcal{O}}
\newcommand{\cV}{\mathcal{V}}
\newcommand{\skel}{\mathrm{skel}}
\newcommand{\Pa}{\texttt{Pa}}
\newcommand{\Ch}{\texttt{Ch}}
\newcommand{\Anc}{\texttt{Anc}}
\newcommand{\Des}{\texttt{Des}}
\newcommand{\wt}{\widetilde}
\newcommand{\argmax}{\mathrm{argmax}}
\newcommand{\argmin}{\mathrm{argmin}}
\newcommand{\dist}{\texttt{dist}}
\title{Active causal structure learning with advice}
\author{
Davin Choo\\
National University of Singapore
\and
Themis Gouleakis\\
National University of Singapore\\
\and
Arnab Bhattacharyya\\
National University of Singapore
}
\date{}
\begin{document}

\maketitle

\begin{abstract}
We introduce the problem of active causal structure learning with advice.
In the typical well-studied setting, the learning algorithm is given the essential graph for the observational distribution and is asked to recover the underlying causal directed acyclic graph (DAG) $G^*$ while minimizing the number of interventions made.
In our setting, we are additionally given side information about $G^*$ as advice, e.g.\ a DAG $G$ purported to be $G^*$.
We ask whether the learning algorithm can benefit from the advice when it is close to being correct, while still having worst-case guarantees even when the advice is arbitrarily bad.
Our work is in the same space as the growing body of research on \emph{algorithms with predictions}.
When the advice is a DAG $G$, we design an adaptive search algorithm to recover $G^*$ whose intervention cost is at most $\cO(\max\{1, \log \psi\})$ times the cost for verifying $G^*$; here, $\psi$ is a distance measure between $G$ and $G^*$ that is upper bounded by the number of variables $n$, and is exactly 0 when $G=G^*$.
Our approximation factor matches the state-of-the-art for the advice-less setting.
\end{abstract}

\section{Introduction}
\label{sec:introduction}

A \emph{causal directed acyclic graph} on a set $V$ of $n$ variables is a Bayesian network in which the edges model direct causal effects. A causal DAG can be used to infer not only the observational distribution of $V$ but also the result of any intervention on any subset of variables $V' \subseteq V$. In this work, we restrict ourselves to the \emph{causally sufficient} setting where there are no latent confounders, no selection bias, and no missingness in data.

The goal of \emph{causal structure learning} is to recover the underlying DAG from data.
This is an important problem with applications in multiple fields including philosophy, medicine, biology, genetics, and econometrics \cite{reichenbach1956direction,hoover1990logic,king2004functional,woodward2005making,rubin2006estimating,eberhardt2007interventions,sverchkov2017review,rotmensch2017learning,pingault2018using}.
Unfortunately, in general, it is known that observational data can only recover the causal DAG up to an equivalence class \cite{pearl2009causality,spirtes2000causation}.
Hence, if one wants to avoid making parametric assumptions about the causal mechanisms, the only recourse is to obtain experimental data from interventions \cite{eberhardt2005number,eberhardt2006n,eberhardt2010causal}.

Such considerations motivate the problem of \emph{interventional design} where the task is to find a set of interventions of optimal cost
which is sufficient to recover the causal DAG. There has been a series of recent works studying this problem \cite{he2008active,hu2014randomized,shanmugam2015learning,kocaoglu2017cost, lindgren2018experimental, greenewald2019sample,squires2020active,choo2022verification, choo2023subset} under various assumptions.
In particular, assuming causal sufficiency, \cite{choo2022verification} gave an adaptive algorithm that actively generates a sequence of interventions of bounded size, so that the total number of interventions is at most $\cO(\log n)$ times the optimal.

Typically though, in most applications of causal structure learning, there are domain experts and practitioners who can provide additional ``advice''
about the causal relations.
Indeed, there has been a long line of work studying how to incorporate expert advice into the causal graph discovery process; e.g.\ see \cite{meek1995,scheines1998tetrad,de2011efficient,flores2011incorporating,li2018bayesian,andrews2020completeness,fang2020ida}.
In this work, we study in a principled way how using purported expert advice can lead to improved algorithms for interventional design.

Before discussing our specific contributions, let us ground the above discussion with a concrete problem of practical importance.
In modern virtualized infrastructure, it is increasingly common for applications to be modularized into a large number of interdependent microservices.
These microservices communicate with each other in ways that depend on the application code and on the triggering userflow.
Crucially, the communication graph between microservices is often unknown to the platform provider as the application code may be private and belong to different entities.
However, knowing the graph is useful for various critical platform-level tasks, such as fault localization \cite{zhou2019latent}, active probing \cite{tan2019netbouncer}, testing \cite{jha2019ml}, and taint analysis \cite{clause2007dytan}.
Recently, \cite{wang2023fault} and \cite{ikram2022root} suggested viewing the microservices communication graph as a sparse causal DAG.
In particular, \cite{wang2023fault} show that arbitrary interventions can be implemented as fault injections in a staging environment, so that a causal structure learning algorithm can be deployed to generate a sequence of interventions sufficient to learn the underlying communication graph.
In such a setting, it is natural to assume that the platform provider already has an approximate guess about the graph, e.g.\ the graph discovered in a previous run of the algorithm or the graph suggested by public metadata tagging microservice code.
The research program we put forth is to design causal structure learning algorithms that can take advantage of such potentially imperfect advice\footnote{Note however that the system in \cite{wang2023fault} is not causally sufficient due to confounding user behavior and \cite{ikram2022root} does not actively perform interventions. So, the algorithm proposed in this work cannot be used directly for the microservices graph learning problem.}.

\subsection{Our contributions}

In this work, we study \emph{adaptive intervention design} for recovering \emph{non-parametric} causal graphs \emph{with expert advice}.
Specifically, our contributions are as follows.

\begin{itemize}
    \item{\textbf{Problem Formulation}.}
    Our work connects the causal structure learning problem with the burgeoning research area of \emph{algorithms with predictions} or \emph{learning-augmented algorithms} \cite{mitzenmacher2022algorithms} where the goal is to design algorithms that bypass worst-case behavior by taking advantage of (possibly erroneous) advice or predictions about the problem instance.
    Most work in this area has been restricted to online algorithms, data structure design, or optimization, as described later in \cref{sec:apsurvey}.
    However, as we motivated above, expert advice is highly relevant for causal discovery, and to the best of our knowlege, ours is the first attempt to formally address the issue of \emph{imperfect} advice in this context.

    \item{\textbf{Adaptive Search Algorithm}.}
    We consider the setting where the advice is a DAG $G$ purported to be the orientations of all the edges in the graph.
    We define a distance measure which is always bounded by $n$, the number of variables, and equals 0 when $G=G^*$.
    For any integer $k \geq 1$, we propose an adaptive algorithm to generate a sequence of interventions of size at most $k$ that recovers the true DAG $G^*$, such that the total number of interventions is $\cO(\log \psi(G, G^*) \cdot \log k)$ times the optimal number of interventions of size $k$.
    Thus, our approximation factor is never worse than the factor for the advice-less setting in \cite{choo2022verification}.
    Our search algorithm also runs in polynomial time.

    \item{\textbf{Verification Cost Approximation}.}
    For a given upper bound $k \geq 1$, a verifying intervention set for a DAG $G^*$ is a set of interventions of size at most $k$ that, together with knowledge of the Markov equivalence class of $G^*$, determines the orientations of all edges in $G^*$.
    The minimum size of a verifying intervention set for $G^*$, denoted $\nu_k(G^*)$, is clearly a lower bound for the number of interventions required to learn $G^*$ (regardless of the advice graph $G$). 
    One of our key technical results is a structural result about $\nu_1$.
    We prove that for any two DAGs $G$ and $G'$ within the same Markov equivalence class, we always have $\nu_1(G)\leq 2 \cdot \nu_1(G')$ and that this is tight in the worst case.
    Beyond an improved structural understanding of minimum verifying intervention sets, which we believe is of independent interest, this enables us to ``blindly trust'' the information provided by imperfect advice to some extent.
\end{itemize}

Similar to prior works (e.g.\ \cite{squires2020active,choo2022verification,choo2023subset}), we assume causal sufficiency and faithfulness while using ideal interventions.
Under these assumptions, running standard causal discovery algorithms (e.g.\ PC \cite{spirtes2000causation}, GES \cite{chickering2002optimal}) will always successfully recover the correct essential graph from data.
We also assume that the given expert advice is consistent with observational essential graph.
See \cref{sec:appendix-assumptions} for a discussion about our assumptions.

\subsection{Paper organization}

In \cref{sec:preliminaries}, we intersperse preliminary notions with related work.
Our main results are presented in \cref{sec:results} with the high-level technical ideas and intuition given in \cref{sec:techniques}.
\cref{sec:experiments} provides some empirical validation.
See the appendices for full proofs, source code, and experimental details.

\section{Preliminaries and Related Work}
\label{sec:preliminaries}

Basic notions about graphs and causal models are defined in \cref{sec:appprelim}.
To be \emph{very} brief, if $G = (V,E)$ is a graph on $|V| = n$ nodes/vertices where $V(G)$, $E(G)$, and $A(G) \subseteq E(G)$ denote nodes, edges, and arcs of $G$ respectively, we write $u \sim v$ to denote that two nodes $u,v \in V$ are connected in $G$, and write $u \to v$ or $u \gets v$ when specifying a certain direction.
The \emph{skeleton} $\skel(G)$ refers to the underlying graph where all edges are made undirected.
A \emph{v-structure} in $G$ refers to a collection of three distinct vertices $u,v,w \in V$ such that $u \to v \gets w$ and $u \not\sim w$.
Let $G = (V,E)$ be fully unoriented.
For vertices $u,v \in V$, subset of vertices $V' \subseteq V$ and integer $r \geq 0$, we define $\dist_{G}(u,v)$ as the shortest path length between $u$ and $v$, and $N_{G}^r(V') = \{v \in V: \min_{u \in V'} \dist_{G}(u,v) \leq r \} \subseteq V$ as the set of vertices that are $r$-hops away from $V'$ in $G$.
A directed acyclic graph (DAG) is a fully oriented graph without directed cycles.
For any DAG $G$, we denote its Markov equivalence class (MEC) by $[G]$ and essential graph by $\cE(G)$.
DAGs in the same MEC have the same skeleton and the essential graph is a partially directed graph such that an arc $u \to v$ is directed if $u \to v$ in \emph{every} DAG in MEC $[G]$, and an edge $u \sim v$ is undirected if there exists two DAGs $G_1, G_2 \in [G]$ such that $u \to v$ in $G_1$ and $v \to u$ in $G_2$.
It is known that two graphs are Markov equivalent if and only if they have the same skeleton and v-structures \cite{verma1990,andersson1997characterization} and the essential graph $\cE(G)$ can be computed from $G$ by orienting v-structures in $\skel(G)$ and applying Meek rules (see \cref{sec:appendix-meek-rules}).
In a DAG $G$, an edge $u \to v$ is a \emph{covered edge} if $\Pa(u) = \Pa(v) \setminus \{u\}$.
We use $\cC(G) \subseteq E(G)$ to denote the set of covered edges of $G$.

\subsection{Ideal interventions}
\label{sec:ideal-interventions}

An \emph{intervention} $S \subseteq V$ is an experiment where all variables $s \in S$ is forcefully set to some value, independent of the underlying causal structure.
An intervention is \emph{atomic} if $|S| = 1$ and \emph{bounded size} if $|S| \leq k$ for some $k \geq 1$; observational data is a special case where $S = \emptyset$.
The effect of interventions is formally captured by Pearl's do-calculus \cite{pearl2009causality}.
We call any $\cI \subseteq 2^V$ a \emph{intervention set}: an intervention set is a set of interventions where each intervention corresponds to a subset of variables.
An \emph{ideal intervention} on $S \subseteq V$ in $G$ induces an interventional graph $G_S$ where all incoming arcs to vertices $v \in S$ are removed \cite{eberhardt2005number}.
It is known that intervening on $S$ allows us to infer the edge orientation of any edge cut by $S$ and $V \setminus S$ \cite{eberhardt2007causation,hyttinen2013experiment,hu2014randomized,shanmugam2015learning,kocaoglu2017cost}.

We now give a definition and result for graph separators.

\begin{definition}[$\alpha$-separator and $\alpha$-clique separator, Definition 19 from \cite{choo2022verification}]
Let $A,B,C$ be a partition of the vertices $V$ of a graph $G = (V,E)$.
We say that $C$ is an \emph{$\alpha$-separator} if no edge joins a vertex in $A$ with a vertex in $B$ and $|A|, |B| \leq \alpha \cdot |V|$. We call $C$ is an \emph{$\alpha$-clique separator} if it is an \emph{$\alpha$-separator} and a clique.
\end{definition}

\begin{theorem}[\cite{gilbert1984separatorchordal}, instantiated for unweighted graphs]
\label{thm:chordal-separator}
Let $G = (V,E)$ be a chordal graph with $|V| \geq 2$ and $p$ vertices in its largest clique.
There exists a $1/2$-clique-separator $C$ involving at most $p-1$ vertices.
The clique $C$ can be computed in $\cO(|E|)$ time.
\end{theorem}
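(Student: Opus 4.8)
The plan is to prove this via the clique-tree (junction-tree) representation of chordal graphs together with a centroid argument, which is the classical route behind \cite{gilbert1984separatorchordal}.

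First I would recall that every chordal graph $G$ admits a \emph{clique tree} $T$: its nodes are the maximal cliques of $G$, and it has the running-intersection property that for each $v \in V$ the maximal cliques containing $v$ induce a subtree of $T$. Such a $T$ together with a perfect elimination ordering is computable in $\cO(|V|+|E|)$ time by maximum cardinality search. Two consequences are needed. (i) For each edge $\{K,K'\}$ of $T$, the set $K \cap K'$ is a clique whose deletion separates the vertices occurring only in cliques on the $K$-side from those occurring only on the $K'$-side; and since $K \neq K'$ are both \emph{maximal}, $K \cap K'$ is a proper subset of $K$, so $|K \cap K'| \le |K|-1 \le p-1$. (ii) For a single node $K$, deleting $K$ from $G$ puts each connected component inside the union of cliques of one branch of $T$ hanging off $K$.

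For the construction: if $T$ is a single node then $G$ is a clique on $n=p$ vertices and any $C$ of size $\lceil n/2 \rceil$ works, with $|C| \le n-1 = p-1$; so assume $T$ has $\ge 2$ nodes. Root $T$ arbitrarily, assign each vertex to the clique closest to the root among those containing it, and let $w(K)$ be the number of vertices assigned to $K$, so $\sum_K w(K) = n$. Running intersection shows that, for any node $K'$, the vertices appearing in some clique of the branch below $K'$ but not in $K'$ number exactly $\sum_{K''\text{ in that branch}} w(K'')$. Now descend from the root, always moving into a child whose branch owns more than $n/2$ such vertices (at most one child can, as the branches are vertex-disjoint), stopping at a node $K^*$. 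By the stopping rule each child branch of $K^*$ owns $\le n/2$ vertices; and since we moved into $K^*$ only because its own branch owned $> n/2$ (or $K^*$ is the root, which is trivial), the complementary ``upward'' part owns $< n/2$ as well. Hence deleting $K^*$ from $G$ leaves every component of size $\le n/2$. If $|K^*| \le p-1$, output $C = K^*$ and assign the component-blocks to two sides $A,B$. If $|K^*| = p$, then $K^*$ is a maximum clique, and I would shrink $C$ — replacing $K^*$ by a minimal separator on an incident edge, or deleting a carefully chosen single vertex of $K^*$ — using that every subset of the clique $K^*$ is again a clique to certify a clique separator of size $\le p-1$ keeping both sides $\le n/2$. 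All steps are $\cO(|V|+|E|)=\cO(|E|)$ on connected $G$ (handle components separately otherwise), and the weighted version of the statement would follow verbatim with weights in place of counts.

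The step I expect to be the main obstacle is the last one: turning the centroid clique $K^*$ into a clique separator of size at most $p-1$ that keeps $|A|$ and $|B|$ both at most $|V|/2$ \emph{simultaneously}. This is delicate when $K^*$ is itself a maximum clique, and when the component-blocks have awkward sizes (e.g.\ three blocks of size close to $|V|/3$), where they cannot be split into two halves without moving vertices into $C$; resolving it is exactly where the fine structure of the clique tree is used rather than just the existence of a balanced separator.
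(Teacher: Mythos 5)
First, note that the paper does not prove \cref{thm:chordal-separator} at all: it is imported as a black box from \cite{gilbert1984separatorchordal} (via the corresponding definition and theorem in \cite{choo2022verification}), so there is no internal proof to compare against; your attempt is a reconstruction of the classical argument. Your skeleton is indeed the standard route: build a clique tree in $\cO(|V|+|E|)$ time, use the running-intersection property, and do a centroid-style descent to find a maximal clique $K^*$ (size at most $p$) whose removal leaves every connected component of $G - K^*$ with at most $n/2$ vertices. Up to that point the outline is sound.

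However, the step you defer at the end is a genuine gap, and in fact it cannot be closed in the form you set up, because the statement you are trying to prove --- a \emph{two-sided} partition $A,B,C$ with $|A|,|B| \le n/2$ and $|C| \le p-1$ --- is false as literally stated. Take the tree (hence chordal, $p=2$) consisting of a center $v_0$ with three pendant paths of three vertices each, so $n=10$ and $p-1=1$: removing $v_0$ leaves three components of size $3$, which cannot be grouped into $A$ and $B$ with $|A|,|B|\le 5$, and removing any other single vertex leaves a component of size at least $7$; the empty separator fails since the graph is connected. This is exactly the ``three blocks of size close to $|V|/3$'' configuration you worried about, and no finer use of the clique tree can fix it. What \cite{gilbert1984separatorchordal} actually proves (and what the divide-and-conquer applications in \cite{choo2022verification} and in this paper rely on) is the per-component, weighted statement: there is a clique $C$ with $|C|\le p-1$ such that every connected component of $G \setminus C$ has at most half the total weight (here, at most $n/2$ vertices). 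Under that weaker conclusion your centroid argument is essentially the right proof for the bound $|C|\le p$, with the refinement to $p-1$ (e.g.\ when $K^*$ itself is a maximum clique, including the complete-graph case where one takes roughly half of the clique) still requiring the additional argument from \cite{gilbert1984separatorchordal}; but the requirement you found delicate --- simultaneously enforcing $|A|,|B|\le n/2$ --- should simply be dropped, since at size $p-1$ it is unattainable in general.
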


For ideal interventions, an $\cI$-essential graph $\cE_{\cI}(G)$ of $G$ is the essential graph representing the Markov equivalence class of graphs whose interventional graphs for each intervention is Markov equivalent to $G_S$ for any intervention $S \in \cI$.
There are several known properties about $\cI$-essential graph properties \cite{hauser2012characterization,hauser2014two}:
Every $\cI$-essential graph is a chain graph\footnote{A partially directed graph is a \emph{chain graph} if it does \emph{not} contain any partially directed cycles where all directed arcs point in the same direction along the cycle.}
with chordal\footnote{A chordal graph is a graph where every cycle of length at least 4 has an edge that is not part of the cycle but connects two vertices of the cycle; see \cite{blair1993introduction} for an introduction.} chain components.
This includes the case of $\cI = \emptyset$.
Orientations in one chain component do not affect orientations in other components.
In other words, to fully orient any essential graph $\cE(G^*)$, it is necessary and sufficient to orient every chain component in $\cE(G^*)$.

For any intervention set $\cI \subseteq 2^V$, we write $R(G, \cI) = A(\cE_{\cI}(G)) \subseteq E$ to mean the set of oriented arcs in the $\cI$-essential graph of a DAG $G$.
For cleaner notation, we write $R(G,I)$ for single interventions $\cI = \{I\}$ for some $I \subseteq V$, and $R(G,v)$ for single atomic interventions $\cI = \{\{v\}\}$ for some $v \in V$.
For any interventional set $\cI \subseteq 2^V$, define $G^{\cI} = G[E \setminus R(G,\cI)]$ as the \emph{fully directed} subgraph DAG induced by the \emph{unoriented arcs} in $\cE_{\cI}(G)$, where $G^{\emptyset}$ is the graph obtained after removing all the oriented arcs in the observational essential graph due to v-structures.
See \cref{fig:prelim-example} for an example.
In the notation of $R(\cdot, \cdot)$, the following result justifies studying verification and adaptive search via ideal interventions only on DAGs without v-structures, i.e.\ moral DAGs (\cref{def:moraldag}): since $R(G,\cI) = R(G^{\emptyset},\cI) \;\dot\cup\; R(G, \emptyset)$, any oriented arcs in the observational graph can be removed \emph{before performing any interventions} as the optimality of the solution is unaffected.\footnote{The notation $A \;\dot\cup\; B$ denotes disjoint union of sets $A$ and $B$.}

\begin{theorem}[\cite{choo2023subset}]
\label{thm:moral-suffices}
For any DAG $G = (V,E)$ and intervention sets $\cA, \cB \subseteq 2^V$,
\[
R(G,\cA \cup \cB)
= R(G^{\cA},\cB) \;\dot\cup\ R(G^{\cB},\cA) \;\dot\cup\; (R(G,\cA) \cap R(G,\cB))
\]
\end{theorem}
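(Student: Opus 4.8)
\medskip
\noindent\textbf{Proof proposal.}
The plan is to reduce to moral DAGs, prove a one-sided ``composition'' identity for $R(\cdot,\cdot)$, and then finish by ruling out arcs oriented purely by the \emph{joint} effect of $\cA$ and $\cB$. For the reduction: using $R(H,\cI) = R(H^{\emptyset},\cI)\,\dot\cup\,R(H,\emptyset)$ and the fact that $G^{\cI} = (G^{\emptyset})^{\cI}$ for every $\cI$ (the observationally oriented arcs $R(G,\emptyset)$ always survive), the common summand $R(G,\emptyset)$ splits off both sides, so I may assume $\cE(G)$ is fully undirected. Under this assumption every residual $G^{\cI}$ is again a moral DAG: within each chain component $K$ of $\cE_{\cI}(G)$ it carries exactly $G$'s edges --- the chain-graph property forbids a directed arc between two vertices of a single chain component --- so a v-structure of $G^{\cI}$ inside $K$ would be a v-structure of $G$.

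The workhorse is the \emph{composition identity}
\[
R(G,\cA\cup\cB) \;=\; R(G,\cA)\,\dot\cup\,R(G^{\cA},\cB)
\]
together with its symmetric counterpart. Disjointness is immediate from $R(G^{\cA},\cB)\subseteq E(G^{\cA}) = E\setminus R(G,\cA)$, and monotonicity of $R(G,\cdot)$ under enlarging the intervention family gives $R(G,\cA)\subseteq R(G,\cA\cup\cB)$; so the content is to identify $R(G,\cA\cup\cB)\setminus R(G,\cA)$ with $R(G^{\cA},\cB)$. Using that $\cE_{\cA}(G)$ is a chain graph with chordal chain components, that $G^{\cA}$ is the disjoint union of those components carrying $G$'s orientations, and that orientations in distinct chain components do not interact, it suffices to check that every orientation \emph{new} relative to $\cE_{\cA}(G)$ --- whether from an edge cut by some $S\in\cB$ or forced by a Meek rule R1--R4 --- stays inside a single chain component $K$ of $\cE_{\cA}(G)$: a cut edge joining two distinct components is already oriented in $\cE_{\cA}(G)$, and if a Meek rule orients in $\cE_{\cA\cup\cB}(G)$ an edge that was undirected in $\cE_{\cA}(G)$, then all its oriented premises were also undirected in $\cE_{\cA}(G)$ (else the same rule already fired there), pinning every vertex involved into $K$. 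Hence the refinement of each $K$ is governed only by $\cB$'s cuts and Meek rules inside $K$, which is precisely $R(G^{\cA},\cB)$ restricted to $K$.

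Granting the composition identity, the set equality of the theorem follows at once: any $e\in R(G,\cA\cup\cB)$ lies in $R(G^{\cA},\cB)$ (first term) or in $R(G,\cA)$ --- and in the latter case in the third term if $e\in R(G,\cB)$, else in $R(G,\cA\cup\cB)\setminus R(G,\cB) = R(G^{\cB},\cA)$ (second term) by the symmetric identity --- while conversely all three terms sit inside $R(G,\cA\cup\cB)$, with the third disjoint from the first two since $R(G^{\cA},\cB)\subseteq E\setminus R(G,\cA)$ and $R(G^{\cB},\cA)\subseteq E\setminus R(G,\cB)$. What remains --- and what I expect to be the crux --- is disjointness of the first two terms, $R(G^{\cA},\cB)\cap R(G^{\cB},\cA) = \emptyset$; by the two composition identities this is equivalent to $R(G,\cA\cup\cB)\subseteq R(G,\cA)\cup R(G,\cB)$, hence (subtracting $R(G,\cA)$ via composition once more) to the inclusion $R(G^{\cA},\cB)\subseteq R(G,\cB)$. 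I would prove the latter by induction along a derivation of $R(G^{\cA},\cB)$ as the Meek-closure of $\cB$'s cuts in $G^{\cA}$: an edge directly cut by some $S\in\cB$ transfers to $G$ verbatim, since cut-status depends only on endpoints and $S$; and a Meek rule applied inside $G^{\cA}$ has, by the inductive hypothesis, its oriented premises in $R(G,\cB)$ and the endpoints of its conclusion in $E(G^{\cA})$, so it also applies in $G$ unless it exploits a non-adjacency on which $G$ and $G^{\cA}$ differ --- i.e.\ an edge of $R(G,\cA)$ --- which would exhibit a triangle of $G$ with exactly one edge in $R(G,\cA)$. This is impossible, because that edge's endpoints lie in distinct chain components of $\cE_{\cA}(G)$ while each of the other two triangle edges, being undirected in $\cE_{\cA}(G)$, keeps both its endpoints in one component. (Rule R3 never applies in a moral essential graph, and R4 is handled by the same adjacency-transfer argument on its four vertices.) The main obstacle is thus not a single hard combinatorial lemma but the careful interfacing of these counting-style arguments with the $\cI$-essential-graph and Meek-rule formalism --- chiefly the claims that new orientations never leave a chain component and that a Meek step valid in a residual graph lifts to the whole graph.
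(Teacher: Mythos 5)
The paper does not actually prove this statement: \cref{thm:moral-suffices} is imported verbatim from \cite{choo2023subset}, so there is no in-paper argument to compare yours against; what follows is an assessment of your reconstruction on its own terms. Your architecture is sound, and the parts I could check in detail do work: the reduction to moral DAGs (splitting off $R(G,\emptyset)$ from the left side and from the intersection term, using $G^{\cI}=(G^{\emptyset})^{\cI}$), the observation that given the two composition identities the whole theorem collapses to the ``no synergy'' inclusion $R(G,\cA\cup\cB)\subseteq R(G,\cA)\cup R(G,\cB)$, i.e.\ $R(G^{\cA},\cB)\subseteq R(G,\cB)$, and the final induction along a Meek derivation, where the only obstruction is a non-adjacency of $G^{\cA}$ that fails in $G$; your triangle argument correctly kills that case for R1 and R4 (the missing edge would be an oriented arc of $\cE_{\cA}(G)$ joining two vertices forced into one chain component by the two unoriented triangle edges), R2 has no non-adjacency premise, and R3 indeed never fires once everything is moral.

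Two places need real shoring up. First, in your proof of the composition identity, the claim that a rule orienting a new edge has \emph{all} its oriented premises new is justified only by ``else the same rule already fired there'', which covers the all-old case but not the mixed case (one premise in $R(G,\cA)$, one new), which is exactly where R2 and R4 are dangerous. The claim is true, but excluding the mixed cases requires the chain-graph machinery you only deploy later: for R2 the old premise would be a directed arc inside a single chain component (forbidden), and for R4 one must additionally run a short case analysis on the orientation of $d\sim a$ (resp.\ $a\sim c$), using Meek-closedness of $\cE_{\cA}(G)$ and the no-semi-directed-cycle property to reach a contradiction; as written this step is not established. Second, both halves of your argument silently use that $R(H,\cI)$ \emph{equals} the R1--R4 closure of the interventionally cut arcs (plus v-structure arcs) --- soundness and, crucially, completeness of the orientation rules for interventional essential graphs (Hauser--B\"uhlmann); the ``induction along a derivation'' is only legitimate with the completeness direction, so this must be cited, not treated as the definition of $R$. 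Relatedly, the identity $R(G,\cI)=R(G^{\emptyset},\cI)\,\dot\cup\,R(G,\emptyset)$ that powers your reduction is precisely the $\cB=\emptyset$ instance of the theorem as the paper presents it; either cite it as an independent prior lemma or prove the composition identity directly for non-moral $G$ --- in which case note that R3 genuinely breaks your ``all premises new'' claim (a configuration with $d\to a$, $d\to b$ oriented and $a\sim b$, $a\sim c$, $c\sim b$ undirected, $c\not\sim d$, realizes an R3 firing with one old and one new premise), so the moral reduction is doing more work for you than the write-up acknowledges.
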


\begin{definition}[Moral DAG]
\label{def:moraldag}
A DAG $G$ is called a \emph{moral DAG} if it has no v-structures.
So, $\cE(G) = \skel(G)$.
\end{definition}

\subsection{Verifying sets}

A \emph{verifying set} $\cI$ for a DAG $G \in [G^*]$ is an intervention set that fully orients $G$ from $\cE(G^*)$, possibly with repeated applications of Meek rules (see \cref{sec:appendix-meek-rules}), i.e.\ $\cE_{\cI}(G^*) = G^*$.
Furthermore, if $\cI$ is a verifying set for $G^*$, then so is $\cI \cup S$ for any additional intervention $S \subseteq V$.
While there may be multiple verifying sets in general, we are often interested in finding one with a minimum size.

\begin{definition}[Minimum size verifying set]
\label{def:min-verifying-set}
An intervention set $\cI \subseteq 2^V$ is called a verifying set for a DAG $G^*$ if $\cE_{\cI}(G^*) = G^*$.
$\cI$ is a \emph{minimum size verifying set} if $\cE_{\cI'}(G^*) \neq G^*$ for any $|\cI'| < |\cI|$.
\end{definition}

For bounded size interventions, the \emph{minimum verification number} $\nu_k(G)$ denotes the size of the minimum size verifying set for any DAG $G \in [G^*]$; we write $\nu_1(G)$ for atomic interventions.
That is, any revealed arc directions when performing interventions on $\cE(G^*)$ respects $G$.
\cite{choo2022verification} tells us that it is necessary and sufficient to intervene on a minimum vertex cover of the covered edges $\cC(G)$ in order to verify a DAG $G$, and that $\nu_1(G)$ is efficiently computable given $G$ since $\cC(G)$ induces a forest.

\begin{theorem}[\cite{choo2022verification}]
\label{thm:verification-characterization}
Fix an essential graph $\cE(G^*)$ and $G \in [G^*]$.
An atomic intervention set $\cI$ is a minimal sized verifying set for $G$ if and only if $\cI$ is a minimum vertex cover of covered edges $\cC(G)$ of $G$.
A minimal sized atomic verifying set can be computed in polynomial time since the edge-induced subgraph on $\cC(G)$ is a forest.
\end{theorem}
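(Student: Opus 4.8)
The plan is to establish the core equivalence ``$\cI$ is a verifying set for $G$ if and only if $\bigcup_{S \in \cI} S$ is a vertex cover of the covered edges $\cC(G)$'' (which immediately upgrades to the ``minimal $\iff$ minimum'' statement), then to prove that $\cC(G)$ induces a forest, and finally to deduce polynomial-time computability.

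For the necessity direction I would argue the contrapositive. Suppose $W := \bigcup_{S \in \cI} S$ fails to cover some covered edge $u \to v \in \cC(G)$, i.e.\ $u, v \notin W$, and let $G'$ be $G$ with this one arc reversed. By the classical covered-edge reversal property (see e.g.\ \cite{choo2022verification}), $G'$ is Markov equivalent to $G$, so $G' \in [G^*]$ is a legitimate competing DAG. I would then check that $G$ and $G'$ are in fact $\cI$-Markov equivalent: for every $S \in \cI$, as well as for $S = \emptyset$, since $u, v \notin S$ the parent sets of $u$ and of $v$ are unchanged from $G$ to $G_S$, so $u \to v$ is still a covered edge of $G_S$, and $(G')_S$ is precisely $G_S$ with this covered edge reversed, hence $G_S$ and $(G')_S$ are Markov equivalent. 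Therefore $\cE_\cI(G) = \cE_\cI(G')$; but $G$ and $G'$ disagree on the orientation of $u \sim v$, so this common essential graph leaves $u \sim v$ undirected and in particular $\cE_\cI(G) \neq G$. Hence $\cI$ is not a verifying set, proving that every verifying set must cover all of $\cC(G)$.

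For the sufficiency direction, suppose $W := \bigcup_{S \in \cI} S$ is a vertex cover of $\cC(G)$; I want $\cE_\cI(G) = G$. Since every arc oriented in $\cE_\cI(G)$ agrees with $G$ and the two share a skeleton, $\cE_\cI(G)$ is just $G$ with some edges left undirected, so it suffices to rule out any undirected edge. Suppose for contradiction $\cE_\cI(G) \neq G$. The crux is the structural fact that \emph{whenever $\cE_\cI(G) \neq G$, at least one edge of $G$ that is undirected in $\cE_\cI(G)$ is a covered edge of $G$}. Granting this, pick such an undirected covered edge $u \to v$. Recalling that intervening on a set reveals the orientation of every edge in its cut --- so an atomic intervention $\{w\}$ orients every edge incident to $w$ --- if either $u$ or $v$ belonged to $W$ then $u \sim v$ would be oriented in $\cE_\cI(G)$; hence $u, v \notin W$, contradicting that $W$ is a vertex cover of $\cC(G)$. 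To prove the structural fact I would invoke a transformational characterization of (interventional) Markov equivalence: if $\cE_\cI(G) \neq G$ there is a DAG $G'' \neq G$ in the same $\cI$-Markov equivalence class reachable from $G$ by a sequence of covered-edge reversals each staying within that class, and the first reversed edge then occurs with both orientations among $\cI$-equivalent DAGs --- so it is undirected in $\cE_\cI(G)$ --- while being by construction a covered edge of $G$. I expect this to be the main obstacle: one must either set up the interventional transformational characterization carefully, or argue directly on a chordal chain component $H$ of $\cE_\cI(G)$ that the v-structure-free DAG $G[H]$ necessarily contains an edge that is covered in $G$ (e.g.\ by inspecting a perfect-elimination/topological ordering of $G[H]$), which requires some care because ``covered in $G$'' constrains parents lying outside $H$ as well.

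It remains to show $\cC(G)$ induces a forest; given this, a minimum vertex cover is computable in polynomial time since forests are bipartite (by K\"onig's theorem, or by a linear-time dynamic program on each tree), completing the argument. Suppose $\cC(G)$ contained a cycle $v_1 \sim v_2 \sim \cdots \sim v_\ell \sim v_1$ with $\ell \geq 3$, all of whose edges are covered, and let $v_j$ be the latest of these vertices in a topological order of $G$; then both neighbors precede it, so $v_{j-1} \to v_j$ and $v_{j+1} \to v_j$ are covered edges (indices mod $\ell$, with $v_{j-1} \neq v_{j+1}$). From $v_{j-1} \to v_j \in \cC(G)$ and $v_{j-1} \in \Pa(v_j)$ we get $\Pa(v_j) = \Pa(v_{j-1}) \cup \{v_{j-1}\}$, and likewise $\Pa(v_j) = \Pa(v_{j+1}) \cup \{v_{j+1}\}$. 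Since $v_{j+1} \in \Pa(v_j)$ and $v_{j+1} \neq v_{j-1}$, this forces $v_{j+1} \in \Pa(v_{j-1})$; by the symmetric argument $v_{j-1} \in \Pa(v_{j+1})$. These two relations contradict acyclicity of $G$, so $\cC(G)$ is a forest.
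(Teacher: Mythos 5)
The paper does not actually prove this theorem: it is imported from \cite{choo2022verification}, with the two directions recorded separately as \cref{lem:necessary} and \cref{lem:sufficient}, and your decomposition mirrors exactly that standard argument (necessity by reversing an uncut covered edge, sufficiency by exhibiting an undirected covered edge, then the forest property plus K\"onig/DP). Two of your three components are complete and correct: the necessity argument is sound, since for $u,v\notin S$ the interventional graphs satisfy $\Pa_{G_S}(u)=\Pa_G(u)$ and $\Pa_{G_S}(v)=\Pa_G(v)$, so $u\to v$ is still covered in $G_S$ and $(G')_S$ is $G_S$ with a covered edge reversed, giving $\cI$-Markov equivalence of $G$ and $G'$ and hence $\cE_\cI(G)\neq G$; and your cycle argument that $\cC(G)$ is a forest is airtight.

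The genuine gap is the crux of sufficiency, which you name but do not close: the claim that $\cE_\cI(G)\neq G$ forces some edge undirected in $\cE_\cI(G)$ to be a covered edge of $G$. Route (a) as you state it requires a transformational characterization \emph{within} the $\cI$-Markov equivalence class (a covered-edge-reversal path whose intermediate DAGs all stay in the class), which you neither prove nor cite, and route (b) is explicitly left unfinished. Note, however, that (a) can be weakened so that only tools the paper already has suffice: if $\cE_\cI(G)\neq G$, some edge is undirected in $\cE_\cI(G)$, so there is a DAG $G''\neq G$ in the $\cI$-MEC; $G''$ is in particular observationally Markov equivalent to $G$, so applying \cref{lem:sequence} (via \cref{alg:repeatedfindedge}) to the pair $(G,G'')$ reverses, at its very first step, an arc $x\to y\in\cC(G)$ with $x\to y\in A(G)\setminus A(G'')$; since $G$ and $G''$ both lie in the $\cI$-MEC and disagree on this edge, it is undirected in $\cE_\cI(G)$, which is precisely the structural fact you need --- no within-class reversal sequence is required. (If you instead pursue route (b), you would need the lemma that all vertices of a chain component of $\cE_\cI(G)$ share the same parents outside the component, together with the fact that a connected moral DAG has a unique source; the edge from that source to the earliest vertex in a topological order of the component is then covered in $G$.) With that step supplied, your proof is complete and structurally the same as the cited one.
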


For any DAG $G$, we use $\cV(G) \subseteq 2^V$ to denote the set of all \emph{atomic} verifying sets for $G$.
That is, each \emph{atomic} intervention set in $\cV(G)$ is a minimum vertex cover of $\cC(G)$.

\subsection{Adaptive search using ideal interventions}

Adaptive search algorithms have been studied in earnest \cite{he2008active,hauser2014two,shanmugam2015learning,squires2020active,choo2022verification,choo2023subset} as they can use significantly less interventions than non-adaptive counterparts.\footnote{If the essential graph $\cE(G^*)$ is a path of $n$ nodes, then non-adaptive algorithms need $\Omega(n)$ atomic interventions to recover $G^*$ while $\cO(\log n)$ atomic interventions suffices for adaptive search.}

Most recently, \cite{choo2022verification} gave an efficient algorithm for computing adaptive interventions with provable approximation guarantees on general graphs.

\begin{theorem}[\cite{choo2022verification}]
\label{thm:fullsearch}
Fix an unknown underlying DAG $G^*$.
Given an essential graph $\cE(G^*)$ and intervention set bound $k \geq 1$, there is a deterministic polynomial time algorithm that computes an intervention set $\cI$ adaptively such that $\cE_{\cI}(G^*) = G^*$, and $|\cI|$ has size\\
1. $\cO(\log(n) \cdot \nu_1(G^*))$ when $k = 1$\\
2. $\cO(\log(n) \cdot \log (k) \cdot \nu_k(G^*))$ when $k > 1$.
\end{theorem}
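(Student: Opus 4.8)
The plan is to reproduce the divide-and-conquer strategy of \cite{choo2022verification} built around chordal clique separators. First, by \cref{thm:moral-suffices} (with $\cA = \emptyset$) together with the fact that orientations in distinct chain components are independent, we may preprocess $\cE(G^*)$ \emph{for free}: orient all v-structures, apply Meek rules, and then work on each chain component separately. Each resulting chain component is a connected chordal graph whose true orientation (under $G^*$) is a moral DAG in the sense of \cref{def:moraldag}. So it suffices to give an adaptive procedure that, on a connected chordal essential graph $H$ on $m$ vertices whose true orientation is a moral DAG, uses few interventions and terminates with $H$ fully oriented.

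The core subroutine uses \cref{thm:chordal-separator} to compute a $1/2$-clique-separator $C$ of $H$ with $|C| \le \omega(H) - 1$, where $\omega(H)$ denotes the clique number. For $k = 1$, intervene atomically on every vertex of $C$: since $C$ is a clique, every edge with an endpoint in $C$ is cut and hence oriented, and Meek rules may orient still more. Because $C$ is a $1/2$-separator and its incident edges are now all oriented, every remaining unoriented chain component lies entirely in $A$ or in $B$ and thus has at most $m/2$ vertices; after $\cO(\log m) = \cO(\log n)$ rounds everything is oriented. For $k > 1$, replace the atomic interventions on $C$ by a bounded-size schedule that \emph{separates} all pairs of vertices of the clique $C$: a covering/separating-system argument shows that $\cO(\lceil |C|/k \rceil \cdot \log k)$ interventions of size at most $k$ suffice to orient a clique and all its incident edges, after which the same halving argument applies.

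The cost accounting is where the real work lies. For $k = 1$: a clique of size $\omega$ inside a moral DAG is a transitive tournament whose covered edges form a Hamiltonian path, so by \cref{thm:verification-characterization} any verifying set must contain a vertex cover of that path, of size $\lfloor \omega/2 \rfloor$; hence $|C| \le \omega(H) - 1 \le 2\nu_1(H) - 1$, with $\nu_1(H)$ measured against $G^*$. Since $\nu_1$ decomposes additively over chain components and does not increase as more arcs are oriented, summing over all clique separators used within a single round gives $\cO(\nu_1(G^*))$ atomic interventions in that round, and multiplying by $\cO(\log n)$ rounds yields $\cO(\log n \cdot \nu_1(G^*))$. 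For $k > 1$, the same decomposition together with a matching lower bound $\nu_k(G^*) = \Omega(\sum_i \lceil \omega(H_i)/k \rceil)$ on bounded-size verification of cliques turns the per-round cost into $\cO(\log k \cdot \nu_k(G^*))$, giving $\cO(\log n \cdot \log k \cdot \nu_k(G^*))$ overall. Polynomial running time follows because each clique separator is computed in $\cO(|E|)$ time and there are polynomially many rounds and components. The main obstacle I anticipate is making precise the decomposition/monotonicity claim — that the sum of $\nu_1$ (resp.\ $\nu_k$) over the chain components of any intermediate essential graph never exceeds $\nu_1(G^*)$ (resp.\ $\nu_k(G^*)$), so that the greedy clique-separator interventions can be globally charged to an optimal verifying set — together with constructing the separating-system schedule and proving the matching lower bound needed in the $k > 1$ case.
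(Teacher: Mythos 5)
This theorem is imported from \cite{choo2022verification} without proof, so there is no internal argument to compare against; your outline does follow the same route as the cited work (reduce to chordal chain components, intervene on a $1/2$-clique separator from \cref{thm:chordal-separator}, halve, and use a labelling scheme for $k>1$), but the two places where the real content lies are exactly where your sketch breaks. First, the charging step for $k=1$: it is not true that the covered edges of a clique $K$ of $G^*$ form the Hamiltonian path of the induced transitive tournament, because covered-edge status is a property of the whole DAG, not of the induced subgraph. Take $G^*$ on $\{k_1,k_2,k_3,k_4,p\}$ with the complete tournament $k_1\to k_2\to k_3\to k_4$ together with $k_1\to p\to k_2$, where $p$ is adjacent only to $k_1,k_2$; this DAG is moral and its maximum clique has size $4$, yet $k_1\to k_2$ and $k_2\to k_3$ are \emph{not} covered (witnessed by $p\in\Pa(k_2)$), so $\cC(G^*)$ contains no Hamiltonian path of $K$ and \cref{thm:verification-characterization}/\cref{lem:necessary} do not directly force $\lfloor \omega/2\rfloor$ interventions inside $K$. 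The statement you actually need -- that $\nu_1(G^*)$ dominates the sum of $\lfloor\omega(H)/2\rfloor$ over the chain components $H$ of the \emph{current} interventional essential graph -- is precisely the ``decomposition/monotonicity'' claim you defer to the end; in \cite{choo2022verification} this is a genuinely nontrivial lemma (proved via the global structure of covered edges, morality, and the component-wise behaviour of $R(G,\cdot)$), and without it no round of clique-separator interventions can be charged to $\nu_1(G^*)$ at all. So what you flag as an anticipated obstacle is the heart of the proof rather than a loose end.

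Second, the $k>1$ accounting is incorrect as stated. The posited lower bound $\nu_k(G^*)=\Omega\bigl(\sum_i\lceil\omega(H_i)/k\rceil\bigr)$ is false: if $G^*$ is a disjoint union of $m$ single arcs (a moral DAG whose essential graph has $m$ single-edge chain components), one intervention of size $k$ containing one endpoint from each of $k$ distinct components orients $k$ of them, so $\nu_k(G^*)=\lceil m/k\rceil$ while $\sum_i\lceil\omega(H_i)/k\rceil=m$. For the same reason, running a separate separating system per separator clique costs at least one intervention per component per round, which can exceed the target $\cO(\log n\cdot\log k\cdot\nu_k(G^*))$ once $k$ is large. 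The correct bookkeeping pools the separator cliques of all chain components in a round and applies the labelling scheme of \cref{lem:labelling-scheme} once to their union (the separating property is only needed within each clique), exactly as \cref{alg:adaptive-search-with-advice-algo} does with $\wt{V}$ in its line 5, and then charges the round's $\cO\bigl((\cdot/k)\log k\bigr)$ interventions against $\nu_k(G^*)\ge\lceil\nu_1(G^*)/k\rceil$ from \cref{thm:efficient-near-optimal-bounded} combined with the clique lower bound on $\nu_1$. With those two repairs your plan coincides with the cited proof; as written, both the clique lower bound and the bounded-size case have genuine gaps.
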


Meanwhile, in the context of local causal graph discovery where one is interested in only learning a \emph{subset} of causal relationships, the \texttt{SubsetSearch} algorithm of \cite{choo2023subset} incurs a multiplicative overhead that scales logarithmically with the number of relevant nodes when orienting edges within a node-induced subgraph.

\begin{definition}[Relevant nodes]
Fix a DAG $G^* = (V,E)$ and arbitrary subset $V' \subseteq V$.
For any intervention set $\cI \subseteq 2^V$ and resulting interventional essential graph $\cE_{\cI}(G^*)$, we define the \emph{relevant nodes} $\rho(\cI, V') \subseteq V'$ as the set of nodes within $V'$ that is adjacent to some unoriented arc within the node-induced subgraph $\cE_{\cI}(G^*)[V']$.
\end{definition}

For an example of relevant nodes, see \cref{fig:prelim-example}: For the subset $V' = \{A,C,D,E,F\}$ in (II), only $\{A,C,D\}$ are relevant since incident edges to $E$ and $F$ are all oriented.

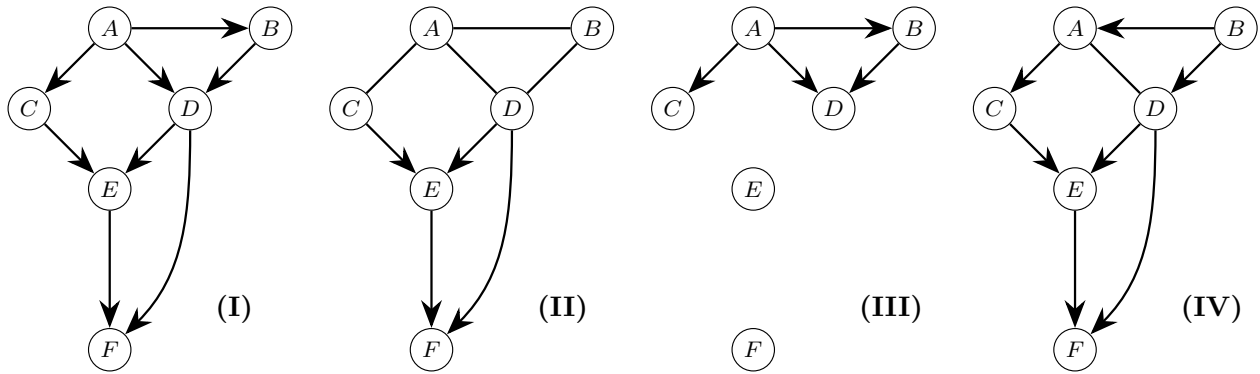
\begin{figure}[tb]
\centering
\resizebox{\linewidth}{!}{%
\begin{tikzpicture}
%
%
\node[draw, circle, minimum size=15pt, inner sep=0pt] at (0,0) (A-gstar) {\footnotesize $A$};
\node[draw, circle, minimum size=15pt, inner sep=0pt] at ($(A-gstar) + (2,0)$) (B-gstar) {\footnotesize $B$};
\node[draw, circle, minimum size=15pt, inner sep=0pt] at ($(A-gstar) + (-1,-1)$) (C-gstar) {\footnotesize $C$};
\node[draw, circle, minimum size=15pt, inner sep=0pt] at ($(A-gstar) + (1,-1)$) (D-gstar) {\footnotesize $D$};
\node[draw, circle, minimum size=15pt, inner sep=0pt] at ($(A-gstar) + (0,-2)$) (E-gstar) {\footnotesize $E$};
\node[draw, circle, minimum size=15pt, inner sep=0pt] at ($(E-gstar) + (0,-2)$) (F-gstar) {\footnotesize $F$};

\draw[thick, -{Stealth[scale=1.5]}] (A-gstar) -- (B-gstar);
\draw[thick, -{Stealth[scale=1.5]}] (A-gstar) -- (C-gstar);
\draw[thick, -{Stealth[scale=1.5]}] (A-gstar) -- (D-gstar);
\draw[thick, -{Stealth[scale=1.5]}] (B-gstar) -- (D-gstar);
\draw[thick, -{Stealth[scale=1.5]}] (C-gstar) -- (E-gstar);
\draw[thick, -{Stealth[scale=1.5]}] (D-gstar) -- (E-gstar);
\draw[thick, -{Stealth[scale=1.5]}] (D-gstar) to[in=45,out=270] (F-gstar);
\draw[thick, -{Stealth[scale=1.5]}] (E-gstar) -- (F-gstar);

\node[below right=of E-gstar] {\textbf{(I)}};

%
%
\node[draw, circle, minimum size=15pt, inner sep=0pt] at (4,0) (A-gessential) {\footnotesize $A$};
\node[draw, circle, minimum size=15pt, inner sep=0pt] at ($(A-gessential) + (2,0)$) (B-gessential) {\footnotesize $B$};
\node[draw, circle, minimum size=15pt, inner sep=0pt] at ($(A-gessential) + (-1,-1)$) (C-gessential) {\footnotesize $C$};
\node[draw, circle, minimum size=15pt, inner sep=0pt] at ($(A-gessential) + (1,-1)$) (D-gessential) {\footnotesize $D$};
\node[draw, circle, minimum size=15pt, inner sep=0pt] at ($(A-gessential) + (0,-2)$) (E-gessential) {\footnotesize $E$};
\node[draw, circle, minimum size=15pt, inner sep=0pt] at ($(E-gessential) + (0,-2)$) (F-gessential) {\footnotesize $F$};

\draw[thick] (A-gessential) -- (B-gessential);
\draw[thick] (A-gessential) -- (C-gessential);
\draw[thick] (A-gessential) -- (D-gessential);
\draw[thick] (B-gessential) -- (D-gessential);
\draw[thick, -{Stealth[scale=1.5]}] (C-gessential) -- (E-gessential);
\draw[thick, -{Stealth[scale=1.5]}] (D-gessential) -- (E-gessential);
\draw[thick, -{Stealth[scale=1.5]}] (D-gessential) to[in=45,out=270] (F-gessential);
\draw[thick, -{Stealth[scale=1.5]}] (E-gessential) -- (F-gessential);

\node[below right=of E-gessential] {\textbf{(II)}};

%
%
\node[draw, circle, minimum size=15pt, inner sep=0pt] at (8,0) (A-gnull) {\footnotesize $A$};
\node[draw, circle, minimum size=15pt, inner sep=0pt] at ($(A-gnull) + (2,0)$) (B-gnull) {\footnotesize $B$};
\node[draw, circle, minimum size=15pt, inner sep=0pt] at ($(A-gnull) + (-1,-1)$) (C-gnull) {\footnotesize $C$};
\node[draw, circle, minimum size=15pt, inner sep=0pt] at ($(A-gnull) + (1,-1)$) (D-gnull) {\footnotesize $D$};
\node[draw, circle, minimum size=15pt, inner sep=0pt] at ($(A-gnull) + (0,-2)$) (E-gnull) {\footnotesize $E$};
\node[draw, circle, minimum size=15pt, inner sep=0pt] at ($(E-gnull) + (0,-2)$) (F-gnull) {\footnotesize $F$};

\draw[thick, -{Stealth[scale=1.5]}] (A-gnull) -- (B-gnull);
\draw[thick, -{Stealth[scale=1.5]}] (A-gnull) -- (C-gnull);
\draw[thick, -{Stealth[scale=1.5]}] (A-gnull) -- (D-gnull);
\draw[thick, -{Stealth[scale=1.5]}] (B-gnull) -- (D-gnull);

\node[below right=of E-gnull] {\textbf{(III)}};

%
%
\node[draw, circle, minimum size=15pt, inner sep=0pt] at (12,0) (A-gmpdag) {\footnotesize $A$};
\node[draw, circle, minimum size=15pt, inner sep=0pt] at ($(A-gmpdag) + (2,0)$) (B-gmpdag) {\footnotesize $B$};
\node[draw, circle, minimum size=15pt, inner sep=0pt] at ($(A-gmpdag) + (-1,-1)$) (C-gmpdag) {\footnotesize $C$};
\node[draw, circle, minimum size=15pt, inner sep=0pt] at ($(A-gmpdag) + (1,-1)$) (D-gmpdag) {\footnotesize $D$};
\node[draw, circle, minimum size=15pt, inner sep=0pt] at ($(A-gmpdag) + (0,-2)$) (E-gmpdag) {\footnotesize $E$};
\node[draw, circle, minimum size=15pt, inner sep=0pt] at ($(E-gmpdag) + (0,-2)$) (F-gmpdag) {\footnotesize $F$};

\draw[thick, -{Stealth[scale=1.5]}] (B-gmpdag) -- (A-gmpdag);
\draw[thick, -{Stealth[scale=1.5]}] (A-gmpdag) -- (C-gmpdag);
\draw[thick] (A-gmpdag) -- (D-gmpdag);
\draw[thick, -{Stealth[scale=1.5]}] (B-gmpdag) -- (D-gmpdag);
\draw[thick, -{Stealth[scale=1.5]}] (C-gmpdag) -- (E-gmpdag);
\draw[thick, -{Stealth[scale=1.5]}] (D-gmpdag) -- (E-gmpdag);
\draw[thick, -{Stealth[scale=1.5]}] (D-gmpdag) to[in=45,out=270] (F-gmpdag);
\draw[thick, -{Stealth[scale=1.5]}] (E-gmpdag) -- (F-gmpdag);

\node[below right=of E-gmpdag] {\textbf{(IV)}};
\end{tikzpicture}
}
\caption{
\textbf{(I)} Ground truth DAG $G^*$;
\textbf{(II)} Observational essential graph $\cE(G^*)$ where $C \to E \gets D$ is a v-structure and Meek rules orient arcs $D \to F$ and $E \to F$;
\textbf{(III)} $G^\emptyset = G[E \setminus R(G, \emptyset)]$ where oriented arcs in $\cE(G^*)$ are removed from $G^*$;
\textbf{(IV)} MPDAG $\tilde{G} \in [G^*]$ incorporating the following partial order advice $(S_1 = \{B\}, S_2 = \{A,D\}, S_3 = \{C,E,F\})$, which can be converted to required arcs $B \to A$ and $B \to D$.
Observe that $A \to C$ is oriented by Meek R1 via $B \to A \sim C$, the arc $A \sim D$ is still unoriented, the arc $B \to A$ disagrees with $G^*$, and there are two possible DAGs consistent with the resulting MPDAG.
}
\label{fig:prelim-example}
\end{figure}

\begin{theorem}[\cite{choo2023subset}]
\label{thm:subsetsearch}
Fix an unknown underlying DAG $G^*$.
Given an interventional essential graph $\cE_{\cI}(G^*)$, node-induced subgraph $H$ with relevant nodes $\rho(\cI, V(H))$ and intervention set bound $k \geq 1$, there is a deterministic polynomial time algorithm that computes an intervention set $\cI$ adaptively such that $\cE_{\cI \cup \cI'}(G^*)[V(H)] = G^*[V(H)]$, and $|\cI'|$ has size\\
1. $\cO(\log(|\rho(\cI, V(H))|) \cdot \nu_1(G^*))$ when $k = 1$\\
2. $\cO(\log(|\rho(\cI, V(H))|) \cdot \log (k) \cdot \nu_k(G^*))$ when $k > 1$.
\end{theorem}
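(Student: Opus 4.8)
The plan is to run the adaptive clique‑separator search underlying \cref{thm:fullsearch}, modified so that the recursion is driven by the relevant nodes $\rho(\cI,V(H))$ rather than by all of $V$. As preprocessing I would invoke \cref{thm:moral-suffices}: any arc already oriented in $\cE_{\cI}(G^*)$ may be deleted without changing what remains to be learned, so the task reduces to orienting the undirected edges inside the chain components of $\cE_{\cI}(G^*)$, each of which is a connected chordal graph. The algorithm then proceeds in rounds. At the start of a round, for every current chain component $T$ that still contains an undirected edge incident to $V(H)$, place weight $1$ on each vertex of $T$ that currently lies in $\rho(\cdot,V(H))$ and weight $0$ on all other vertices, and apply a vertex‑weighted version of \cref{thm:chordal-separator} to obtain a clique separator $C_T$ of $T$ --- of size at most the largest clique size of $T$ --- such that every connected component of $T\setminus C_T$ carries at most half of the total relevant‑for‑$H$ weight of $T$. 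The round then fully orients each $C_T$: atomically vertex‑by‑vertex when $k=1$, and using the bounded‑size clique‑orientation subroutine of \cref{thm:fullsearch} (with its $\cO(\log k)$ overhead) when $k>1$. Since intervening on all of $C_T$ orients every edge incident to $C_T$, and Meek rules only orient further, each chain component surviving the round is contained in a single connected component of some $T\setminus C_T$, and every vertex of $C_T$ has become irrelevant for $H$.

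The analysis splits into two parts. For the \emph{number of rounds}: $\rho(\cdot,V(H))$ only shrinks as interventions accumulate, and by the weight guarantee each surviving chain component inherits at most half as many relevant‑for‑$H$ nodes as its parent, so after $\cO(\max\{1,\log|\rho(\cI,V(H))|\})$ rounds no chain component contains a relevant‑for‑$H$ node --- which is exactly $\cE_{\cI\cup\cI'}(G^*)[V(H)]=G^*[V(H)]$ (in particular, a component down to a single relevant node $v$ must have $v\in C_T$, so it is cleared in the very next round). For the \emph{cost of a single round}, I would reuse the accounting in the proof of \cref{thm:fullsearch} essentially verbatim: that argument shows that intervening on one clique separator per chain component of the current essential graph costs $\cO(\nu_1(G^*))$ atomic interventions, respectively $\cO(\log k\cdot\nu_k(G^*))$ size‑$\le k$ interventions, and its derivation only uses that the separators are cliques of distinct chain components bounded by the maximum clique size, together with the verification characterization \cref{thm:verification-characterization}; it never uses which balance criterion was used to select them, so it applies unchanged to the relevant‑node‑weighted separators here. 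Multiplying the number of rounds by the per‑round cost yields the two stated bounds, and all subroutines --- recomputing $\cE_{\cI}(\cdot)$ under Meek rules, the weighted chordal separator, and the labelling scheme for $k>1$ --- run in polynomial time.

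The step I expect to be the main obstacle is making the per‑round cost survive the switch to relevant‑node‑weighted separators: one must establish the vertex‑weighted form of \cref{thm:chordal-separator} with a clean size guarantee (the separator clique still bounded by the largest clique size of $T$), so that the clique/verification‑number accounting from \cref{thm:fullsearch} goes through, and one must dispose of the degenerate cases --- a clique separator that is a single cut vertex, and a chain component with only one relevant‑for‑$H$ node --- so that the number of chain components never leaks into the final bound; here one leans on the fact that, at any interventional stage, $\nu_1(G^*)$ decomposes as a sum over the chain components, so at most $\nu_1(G^*)$ of them can carry an undirected edge. By contrast the progress part is routine, following directly from the halving property of the weighted separators.
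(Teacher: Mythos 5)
A preliminary point: this paper never proves \cref{thm:subsetsearch}; it is imported as a black box from \cite{choo2023subset}, so there is no in-paper proof to compare against. Measured against the cited work, your sketch reconstructs the right strategy in outline: \texttt{SubsetSearch} is the clique-separator search of \cref{thm:fullsearch} with the separator chosen to balance weight placed on the relevant nodes, and the two ingredients you single out as the main obstacles are in fact available off the shelf. The vertex-weighted chordal separator does not need to be re-derived: the Gilbert--Rose--Tarjan theorem \cite{gilbert1984separatorchordal} is proved for chordal graphs with nonnegative vertex weights (the paper's \cref{thm:chordal-separator} is explicitly its ``unweighted instantiation''), and the separator it returns is still a clique of the component, hence of size at most that component's clique number, so your per-separator size bound stands. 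The per-round accounting also goes through, but for a sharper reason than the one you state: what is needed is the lower bound $\nu_1(G^*) \geq \sum_{H} \lfloor \omega(H)/2 \rfloor$ taken over chain components $H$ of the \emph{current interventional} essential graph, which \cite{choo2022verification} establishes for arbitrary intervention sets (each such component containing an unoriented edge contains a covered edge of $G^*$ with both endpoints inside it, and distinct components are vertex-disjoint); your phrase ``$\nu_1(G^*)$ decomposes as a sum over the chain components'' is literally true only at the observational stage, so you should lean on the interventional-stage lower bound rather than an exact decomposition. Finally, your worry about a component left with a single relevant node is vacuous: a relevant node is by definition an endpoint of an unoriented edge inside $\cE_{\cI}(G^*)[V(H)]$, so relevant nodes occur in pairs, and once every chain component carries at most one relevant node the induced subgraph on $V(H)$ is already fully oriented. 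With those two points tightened, your round-halving argument combined with the per-round cost of $\cO(\nu_1(G^*))$ atomic interventions (respectively $\cO(\log k \cdot \nu_k(G^*))$ bounded-size interventions via the labelling scheme of \cref{lem:labelling-scheme} and $\nu_k(G^*) \geq \lceil \nu_1(G^*)/k\rceil$ from \cref{thm:efficient-near-optimal-bounded}) yields exactly the stated bounds, in the same way the cited proof does.
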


Note that $k=1$ refers to the setting of atomic interventions and we always have $0 \leq |\rho(\cI, V(H))| \leq n$.

\subsection{Expert advice in causal graph discovery}
\label{sec:expert-advice-prelim}

There are three main types of information that a domain expert may provide (e.g.\ see the references given in \cref{sec:introduction}):
\begin{enumerate}[(I)]
    \item Required parental arcs: $X \to Y$
    \item Forbidden parental arcs: $X \not\to Y$
    \item Partial order or tiered knowledge: A partition of the $n$ variables into $1 \leq t \leq n$ sets $S_1, \ldots, S_t$ such that variables in $S_i$ \emph{cannot come after} $S_j$, for all $i < j$.
\end{enumerate}
In the context of orienting unoriented $X \sim Y$ edges in an essential graph, it suffices to consider only information of type (I): $X \not\to Y$ implies $Y \to X$, and a partial order can be converted to a collection of required parental arcs.\footnote{For every edge $X \sim Y$ with $X \in S_i$ and $Y \in S_j$, enforce the required parental arc $X \to Y$ if and only if $i < j$.}

Maximally oriented partially directed acyclic graphs (MPDAGs), a refinement of essential graphs under additional causal information, are often used to model such expert advice and there has been a recent growing interest in understanding them better \cite{perkovic2017interpreting,perkovic2020identifying,guo2021minimal}.
MPDAGs are obtained by orienting additional arc directions in the essential graph due to background knowledge, and then applying Meek rules.
See \cref{fig:prelim-example} for an example.

\subsection{Other related work}
\label{sec:apsurvey}

\paragraph{Causal Structure Learning}
Algorithms for causal structure learning can be grouped into three broad categories, constraint-based, score-based, and Bayesian.
Previous works on the first two approaches are described in \cref{sec:apprelated}. 
In Bayesian methods, a prior distribution is assumed on the space of all structures, and the posterior is updated as more data come in.
\cite{heckerman1995bayesian} was one of the first works on learning from interventional data in this context, which spurred a series of papers (e.g.\ \cite{heckerman1995learning, cooper1999causal, friedman2000being, heckerman2006bayesian}).
Research on active experimental design for causal structure learning with Bayesian updates was initiated by \cite{tong2000active, tong2001active} and \cite{murphy2001active}.
\cite{masegosa2013interactive} considered a combination of Bayesian and constraint-based approaches.
\cite{cho2016reconstructing} and \cite{agrawal2019abcd} have used active learning and Bayesian updates to help recover biological networks.
While possibly imperfect expert advice may be used to guide the prior in the Bayesian approach, the works mentioned above do not provide rigorous guarantees about the number of interventions performed or about optimality, and so they are not directly comparable to our results here.

\paragraph{Algorithms with predictions}
Learning-augmented algorithms have received significant attention since the seminal work of \cite{lykouris2021competitive}, where they investigated the online caching problem with predictions.
Based on that model, \cite{purohit2018improving} proposed algorithms for the ski-rental problem as well as non-clairvoyant scheduling.
Subsequently, \cite{gollapudi2019online}, \cite{wang2020online}, and \cite{angelopoulos2020online} improved the initial results for the ski-rental problem.
Several works, including \cite{rohatgi2020near,antoniadis2020online,wei2020better}, improved the initial results regarding the caching problem.
Scheduling problems with machine-learned advice have been extensively studied in the literature \cite{lattanzi2020online,bamas2020learning,antoniadis2022novel}.
There are also results for augmenting classical data structures with predictions (e.g.\ indexing \cite{kraska2018case} and Bloom filters \cite{mitzenmacher2018model}), online selection and matching problems \cite{antoniadis2020secretary, dutting2021secretaries}, online TSP \cite{bernardiniuniversal,gouleakis2023learning}, and a more general framework of online primal-dual algorithms \cite{bamas2020primal}.

In the above line of work, the extent to which the predictions are helpful in the design of the corresponding online algorithms, is quantified by the following two properties.
The algorithm is called (i) $\alpha$-\textit{consistent} if it is $\alpha$-\textit{competitive} with no prediction error and (ii) $\beta$-\textit{robust} if it is $\beta$-\textit{competitive} with any prediction error.
In the language of learning augmented algorithms or algorithms with predictions, our causal graph discovery algorithm is 1-consistent and $\cO(\log n)$-robust when competing against the verification number $\nu_1(G^*)$, the minimum number of interventions necessary needed to recover $G^*$.
Note that even with arbitrarily bad advice, our algorithm uses asymptotically the same number of interventions incurred by the best-known advice-free adaptive search algorithm \cite{choo2022verification}.

\section{Results}
\label{sec:results}

Our exposition here focuses on interpreting and contextualizing our main results while deferring technicalities to \cref{sec:techniques}.
We first focus on the setting where the advice is a fully oriented DAG $\wt{G} \in [G^*]$ within the Markov equivalence class $[G^*]$ of the true underlying causal graph $G^*$, and explain in \cref{sec:partial-advice} how to handle the case of partial advice.
Full proofs are provided in the appendix.

\subsection{Structural property of verification numbers}

We begin by stating a structural result about verification numbers of DAGs within the same Markov equivalence class (MEC) that motivates the definition of a metric between DAGs in the same MEC our algorithmic guarantees (\cref{thm:search-with-advice}) are based upon.

\begin{restatable}{mytheorem}{twoapproxratio}
\label{thm:two-approx-ratio}
For any DAG $G^*$ with MEC $[G^*]$, we have that $\max_{G \in [G^*]} \nu_1(G) \leq 2 \cdot \min_{G \in [G^*]} \nu_1(G)$.
\end{restatable}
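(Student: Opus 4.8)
The plan is to reduce to a single chain component, rewrite $\nu_1$ as a matching number using \cref{thm:verification-characterization}, and then do the real combinatorial work of comparing the covered-edge forests of two orientations of the same essential graph.

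\emph{Step 1 (reduce to one chordal component).} Orientations of different chain components of $\cE(G^*)$ are independent, and the covered edges decompose accordingly, so $\nu_1(G)=\sum_H \nu_1(G[H])$ over the chain components $H$ of $\cE(G^*)$; hence $\max_{G\in[G^*]}\nu_1(G)=\sum_H \max \nu_1(G[H])$ and likewise for the minimum, so it suffices to prove the inequality component-by-component and sum. Thus assume $\cE(G^*)=H$ is a connected chordal graph and every $G\in[G^*]$ is a moral DAG on $H$.

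\emph{Step 2 (matching reformulation and clique locality).} By \cref{thm:verification-characterization}, $\nu_1(G)$ is the size of a minimum vertex cover of the forest $\cC(G)$; since forests are bipartite, K\"onig's theorem turns this into the size of a \emph{maximum matching} of $\cC(G)$. I would also record two structural facts: (i) every covered edge lies inside a maximal clique of $H$, because if $u\to v$ is covered then $\Pa(u)\cup\{u,v\}$ is a clique (the set $\Pa(u)=\Pa(v)\setminus\{u\}$ is a clique by morality, and $u,v$ are adjacent to it and to each other); and (ii) inside a fixed maximal clique $K$ the orientation is a transitive tournament whose covered edges are exactly the consecutive pairs having the same ``parent set outside $K$'', so the covered edges of $G$ inside $K$ form vertex-disjoint subpaths of that tournament's Hamiltonian path. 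The set of maximal cliques of $H$ does not depend on the orientation.

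\emph{Step 3 (the crux: compare two orientations).} Let $G'\in[G^*]$ attain the minimum and let $U'$ be a minimum vertex cover of $\cC(G')$, so $|U'|=\nu_1(G')$ and, by \cref{thm:verification-characterization}, atomically intervening on $U'$ verifies $G'$. I would argue that for \emph{any} $G\in[G^*]$ the residual after intervening on $U'$ is ``no harder than $G'$ itself'', i.e.\ $\nu_1\!\big(\cE_{U'}(G)\big)=\nu_1(G^{U'})\le \nu_1(G')$; appending an optimal verifying set of this residual to $U'$ then yields a verifying set of $G$ of size $|U'|+\nu_1(G^{U'})\le 2\nu_1(G')$ (using the additive combination of interventional effects as in \cref{thm:moral-suffices}). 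An alternative route is to walk from $G$ to $G'$ by repeated covered-edge reversals, each of which stays inside $[G^*]$ and alters covered-edge membership only for edges incident to the two reversed endpoints, and to control how the maximum matching of $\cC(\cdot)$ evolves; I prefer the residual route since it produces an explicit verifying set and makes the ``blindly trust the advice'' reading transparent. The constant is genuinely unavoidable: for the ``paw'' (a triangle with one extra pendant vertex attached to one of its vertices), the moral orientation in which the pendant's attachment point is the source of the triangle has $\cC$ equal to a path on four vertices (so $\nu_1=2$), while every other moral orientation has $\cC$ a path on three vertices (so $\nu_1=1$), giving ratio exactly $2$.

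\emph{Main obstacle.} The bottleneck is Step 3: covered edges are \emph{not} preserved when reorienting within a Markov equivalence class, since a long covered path in one orientation can fragment into scattered covered edges in another and vice versa, so there is no naive edge-by-edge correspondence between $\cC(G)$ and $\cC(G')$. One must leverage the maximal-clique decomposition of Step 2 together with a careful bookkeeping of how the ``outside-$K$ parent sets'' (hence which consecutive pairs are covered) change under reorientation or under intervening on $U'$, and, because the factor $2$ is tight, this accounting has to be carried out with no slack.
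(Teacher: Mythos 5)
Your Steps 1 and 2 (chain-component reduction, the K\"onig/matching reformulation, and the clique-locality observations) are fine but peripheral; the theorem's entire difficulty sits in your Step 3, and there it is not proved. The pivotal inequality you assert --- that for every $G\in[G^*]$, intervening on a minimum vertex cover $U'$ of $\cC(G')$ of the minimizer $G'$ leaves a residual with $\nu_1(G^{U'})\le \nu_1(G')$ --- is a cross-DAG statement that does not follow from anything you cite: \cref{thm:moral-suffices} and \cref{lem:necessary,lem:sufficient} only give $\nu_1(G)\le |U'|+\nu_1(G^{U'})$, and the fact that $U'$ orients \emph{everything} when the truth is $G'$ transfers nothing about what it orients when the truth is $G$, since $R(\cdot,U')$ depends on the underlying DAG. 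Covered edges of the residual $G^{U'}$ are not a subset of $\cC(G)$ or $\cC(G')$, so there is no containment argument available, and bounding them against $\nu_1(G')$ requires exactly the kind of careful comparison of covered-edge structures across two members of the MEC that constitutes the theorem. You acknowledge this yourself in the ``Main obstacle'' paragraph, so what you have is a plan whose central step is missing, not a proof. (A small symptom of how delicate the bookkeeping is: in your paw example the orientations with the pendant edge pointing \emph{into} the attachment vertex have $\cC$ equal to two disjoint edges, hence $\nu_1=2$, not a path on three vertices; the ratio-2 conclusion survives, but the claimed description of ``every other orientation'' is already off.)

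The alternative route you mention in passing --- walking from $G$ to $G'$ by covered-edge reversals and controlling how a matching of $\cC(\cdot)$ evolves --- is in fact the paper's route, but it too needs the machinery you do not supply: the paper proves a status-change lemma (\cref{lem:status-changes}) describing exactly which covered edges appear/disappear under a single covered-edge reversal, and then shows (\cref{lem:same-mm-size}) that a carefully defined \emph{conditional-root-greedy} maximal matching, with conditioning set $S=A(G_s)\cap A(G_t)$ held fixed along the reversal sequence of \cref{alg:repeatedfindedge}, keeps the \emph{same size} at every step; since a maximal matching is a $2$-approximation to the minimum vertex cover of $\cC(\cdot)$, the factor $2$ follows. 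Either you prove your residual inequality from scratch (which I expect to be at least as hard as \cref{thm:two-approx-ratio} itself, if it is true at all), or you develop the reversal-invariant argument; as written, neither is done, so the proposal has a genuine gap.
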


\cref{thm:two-approx-ratio} is the first known result relating the minimum and maximum verification numbers of DAGs given a fixed MEC.
The next result tells us that the ratio of two is tight.

\begin{restatable}[Tightness of \cref{thm:two-approx-ratio}]{mylemma}{tightnessoftwoapprox}
\label{lem:tightness-of-two-approx}
There exist DAGs $G_1$ and $G_2$ from the same MEC with $\nu_1(G_1) = 2 \cdot \nu_1(G_2)$.
\end{restatable}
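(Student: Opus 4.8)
The plan is to exhibit an explicit pair of DAGs on only four vertices, relying on the vertex-cover characterization of $\nu_1$ from \cref{thm:verification-characterization}: since $\nu_1(G)$ equals the size of a minimum vertex cover of the covered-edge forest $\cC(G)$, it suffices to find two Markov-equivalent DAGs whose covered-edge forests have minimum vertex covers differing by a factor of two. The cheapest target is $\nu_1(G_1) = 2$ and $\nu_1(G_2) = 1$, so I want one DAG whose covered edges form (as an undirected graph) a four-vertex path $P_4$, and another DAG in the same MEC whose covered edges form a star (concretely, a three-vertex path $P_3$).

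Concretely, I would take the skeleton to be the ``paw'': a triangle on $\{a,b,c\}$ together with a pendant vertex $d$ adjacent only to $a$. Let $G_1$ orient this as $a \to b$, $b \to c$, $a \to c$, $a \to d$, and let $G_2$ orient it as $c \to b$, $b \to a$, $c \to a$, $a \to d$. Both are acyclic and share the same skeleton, and a quick check of parent sets shows neither contains a v-structure: the only vertex with two parents is $c$ in $G_1$ with parent set $\{a,b\}$, and $a$ in $G_2$ with parent set $\{b,c\}$, and in each case the two parents are adjacent. Hence $G_1$ and $G_2$ are moral DAGs lying in the same MEC.

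It remains to compute the covered edges. For $G_1$ one finds $\cC(G_1) = \{a\to b,\ b\to c,\ a\to d\}$: the edge $a\to c$ is not covered because $\Pa(a)=\emptyset \neq \{b\} = \Pa(c)\setminus\{a\}$, while the other three are covered since the tail of each has its parent set equal to the parent set of the head minus the tail. As an undirected graph this is the path $d - a - b - c$, whose minimum vertex cover has size $2$, so $\nu_1(G_1) = 2$. For $G_2$ one finds $\cC(G_2) = \{c\to b,\ b\to a\}$: the pendant edge $a\to d$ is now killed because $\Pa(a) = \{b,c\} \neq \emptyset = \Pa(d)\setminus\{a\}$, and $c\to a$ fails because $\Pa(c)=\emptyset \neq \{b\}=\Pa(a)\setminus\{c\}$. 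This is the path $c - b - a$, with minimum vertex cover $\{b\}$ of size $1$, so $\nu_1(G_2) = 1$. Therefore $\nu_1(G_1) = 2\cdot\nu_1(G_2)$, as desired; one may also take disjoint unions of $m$ such gadgets (the MEC and $\cC(\cdot)$ both decompose over connected components) to obtain pairs with $\nu_1(G_1) = 2m = 2\,\nu_1(G_2)$ for every $m \geq 1$.

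I do not anticipate a genuine obstacle: once the gadget is in hand, everything is a routine verification of parent sets and covered edges, and the equality $\nu_1(G_1)=2\,\nu_1(G_2)$ is immediate. The only non-mechanical step is finding the right small example, and the heuristic that guides the search is to use a pendant edge that is covered in $G_1$ because its endpoint $a$ is a source, but is destroyed in $G_2$ by reorienting the triangle so that $a$ acquires two parents --- an operation that simultaneously shrinks the $P_4$ in $\cC(G_1)$ down to the $P_3$ in $\cC(G_2)$.
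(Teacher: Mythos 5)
Your proposal is correct and matches the paper's proof in essence: the paper also establishes tightness via an explicit four-vertex example on the paw graph (triangle plus a pendant vertex), exhibiting two Markov-equivalent moral DAGs whose covered-edge forests are a $P_4$ and a $P_3$, giving $\nu_1(G_1)=2$ and $\nu_1(G_2)=1$. Your gadget is the same construction up to relabeling, with the verification of parent sets and covered edges carried out correctly.
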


\cref{thm:two-approx-ratio} tells us that we can blindly intervene on any minimum verifying set $\wt{V} \in \cV(\wt{G})$ of any given advice DAG $\wt{G}$ while incurring only at most a constant factor of 2 more interventions than the minimum verification number $\nu(G^*)$ of the unknown ground truth DAG $G^*$.

\subsection{Adaptive search with imperfect DAG advice}

Recall the definition of $r$-hop from \cref{sec:preliminaries}.
To define the quality of the advice DAG $\wt{G}$, we first define the notion of \emph{min-hop-coverage} which measures how ``far'' a given verifying set of $\wt{G}$ is from the set of covered edges of $G^*$.

\begin{definition}[Min-hop-coverage]
Fix a DAG $G^*$ with MEC $[G^*]$ and consider any DAG $\wt{G} \in [G^*]$.
For any minimum verifying set $\wt{V} \in \cV(\wt{G})$, we define the \emph{min-hop-coverage} $h(G^*, \wt{V}) \in \{0, 1, 2, \ldots, n\}$ as the minimum number of hops such that \emph{both} endpoints of covered edges $\cC(G^*)$ of $G^*$ belong in $N^{h(G^*, \wt{V})}_{\skel(\cE(G^*))}(\wt{V})$.
\end{definition}

Using min-hop-coverage, we now define a quality measure $\psi(G^*, \wt{G})$ for DAG $\wt{G} \in [G^*]$ as an advice for DAG $G^*$.

\begin{restatable}[Quality measure]{mydefinition}{qualitymeasure}
\label{def:quality-measure}
Fix a DAG $G^*$ with MEC $[G^*]$ and consider any DAG $\wt{G} \in [G^*]$.
We define $\psi(G^*, \wt{G})$ as follows:
\[
\psi(G^*, \wt{G}) = \max_{\wt{V} \in \cV(\wt{G})} \left| \rho \left( \wt{V}, N^{h(G^*, \wt{V})}_{\skel(
\cE(G^*))}(\wt{V}) \right) \right|
\]
\end{restatable}

By definition, $\psi(G^*, G^*) = 0$ and $\max_{G \in [G^*]} \psi(G^*, G) \leq n$.
In words, $\psi(G^*, \wt{G})$ only counts the relevant nodes within the min-hop-coverage neighborhood after intervening on the \emph{worst} possible verifying set $\wt{V}$ of $\wt{G}$.
We define $\psi$ via the worst set because any search algorithm \emph{cannot} evaluate $h(G^*, \wt{V})$, since $G^*$ is unknown, and can only consider an \emph{arbitrary} $\wt{V} \in \cV(\wt{G})$.
See \cref{fig:psi-example} for an example.

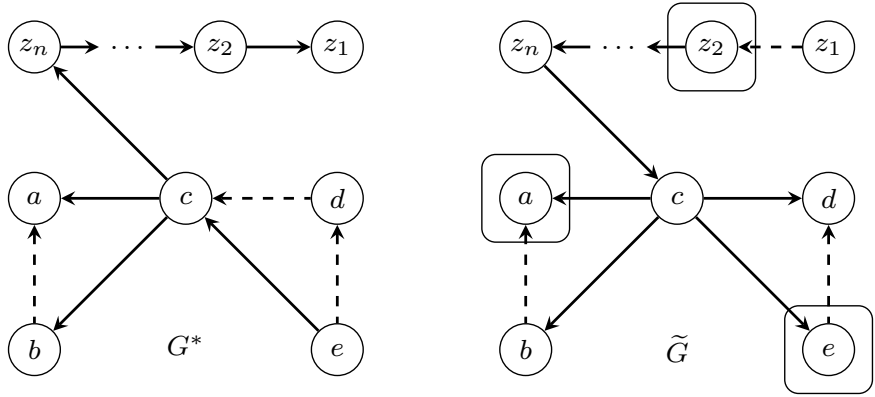
\begin{figure}[tb]
\centering
\resizebox{0.7\linewidth}{!}{%
\begin{tikzpicture}
%
%
\node[draw, circle, minimum size=15pt, inner sep=0pt] at (0,0) (a-gstar) {\footnotesize $a$};
\node[draw, circle, minimum size=15pt, inner sep=0pt, right=of a-gstar] (c-gstar) {\footnotesize $c$};
\node[draw, circle, minimum size=15pt, inner sep=0pt, below=of a-gstar] (b-gstar) {\footnotesize $b$};
\node[draw, circle, minimum size=15pt, inner sep=0pt, right=of c-gstar] (d-gstar) {\footnotesize $d$};
\node[draw, circle, minimum size=15pt, inner sep=0pt, below=of d-gstar] (e-gstar) {\footnotesize $e$};
\node[draw, circle, minimum size=15pt, inner sep=0pt, above=of a-gstar] (zn-gstar) {\footnotesize $z_n$};
\node[minimum size=15pt, inner xsep=3pt, above=of c-gstar, xshift=-17pt] (zdots-gstar) {\footnotesize $\ldots$};
\node[draw, circle, minimum size=15pt, inner sep=0pt, above=of c-gstar, xshift=10pt] (z2-gstar) {\footnotesize $z_2$};
\node[draw, circle, minimum size=15pt, inner sep=0pt, above=of d-gstar] (z1-gstar) {\footnotesize $z_1$};

\draw[thick, stealth-, dashed] (a-gstar) -- (b-gstar);
\draw[thick, stealth-] (a-gstar) -- (c-gstar);
\draw[thick, stealth-] (b-gstar) -- (c-gstar);
\draw[thick, stealth-, dashed] (c-gstar) -- (d-gstar);
\draw[thick, stealth-] (c-gstar) -- (e-gstar);
\draw[thick, stealth-, dashed] (d-gstar) -- (e-gstar);
\draw[thick, stealth-] (z1-gstar) -- (z2-gstar);
\draw[thick, stealth-] (z2-gstar) -- (zdots-gstar);
\draw[thick, stealth-] (zdots-gstar) -- (zn-gstar);
\draw[thick, stealth-] (zn-gstar) -- (c-gstar);

\node[below=of c-gstar] {\footnotesize $G^*$};

%
%
\node[draw, circle, minimum size=15pt, inner sep=0pt] at (5,0) (a-g) {\footnotesize $a$};
\node[draw, circle, minimum size=15pt, inner sep=0pt, right=of a-g] (c-g) {\footnotesize $c$};
\node[draw, circle, minimum size=15pt, inner sep=0pt, below=of a-g] (b-g) {\footnotesize $b$};
\node[draw, circle, minimum size=15pt, inner sep=0pt, right=of c-g] (d-g) {\footnotesize $d$};
\node[draw, circle, minimum size=15pt, inner sep=0pt, below=of d-g] (e-g) {\footnotesize $e$};
\node[draw, circle, minimum size=15pt, inner sep=0pt, above=of a-g] (zn-g) {\footnotesize $z_n$};
\node[minimum size=15pt, inner xsep=3pt, above=of c-g, xshift=-17pt] (zdots-g) {\footnotesize $\ldots$};
\node[draw, circle, minimum size=15pt, inner sep=0pt, above=of c-g, xshift=10pt] (z2-g) {\footnotesize $z_2$};
\node[draw, circle, minimum size=15pt, inner sep=0pt, above=of d-g] (z1-g) {\footnotesize $z_1$};

\draw[thick, stealth-, dashed] (a-g) -- (b-g);
\draw[thick, stealth-] (a-g) -- (c-g);
\draw[thick, stealth-] (b-g) -- (c-g);
\draw[thick, -stealth] (c-g) -- (d-g);
\draw[thick, -stealth] (c-g) -- (e-g);
\draw[thick, stealth-, dashed] (d-g) -- (e-g);
\draw[thick, -stealth, dashed] (z1-g) -- (z2-g);
\draw[thick, -stealth] (z2-g) -- (zdots-g);
\draw[thick, -stealth] (zdots-g) -- (zn-g);
\draw[thick, -stealth] (zn-g) -- (c-g);

\node[draw, rounded corners, fit=(a-g), inner sep=5pt] {};
\node[draw, rounded corners, fit=(e-g), inner sep=5pt] {};
\node[draw, rounded corners, fit=(z2-g), inner sep=5pt] {};
\node[below=of c-g] {\footnotesize $\wt{G}$};
\end{tikzpicture}
}
\caption{
Consider the moral DAGs $G^*$ and $\wt{G} \in [G^*]$ on $n+5$ nodes, where dashed arcs represent the covered edges in each DAG.
A minimum sized verifying set $\wt{V} = \{a, e, z_2\} \in \cV(\wt{G})$ of $\wt{G}$ is given by the boxed vertices on the right.
As $N^1_{\skel(G^*)}(\wt{V}) = \{a, b, c, d, e, z_1, z_2, z_3\}$ includes both endpoints of all covered edges of $G^*$, we see that $h(G^*, \wt{V}) = 1$.
Intervening on $\wt{V} = \{a,e,z_2\}$ in $G^*$ orients the arcs $b \to a \gets c$, $c \gets e \to d$, and $z_3 \to z_2 \to z_1$ respectively which then triggers Meek R1 to orient $c \to b$ via $e \to c \sim b$ and to orient $z_4 \to z_3$ via $e \to c \to \ldots \to z_4 \sim z_3$ (after a few invocations of R1), so $\{a, b, e, z_1, z_2, z_3\}$ will \emph{not} be relevant nodes in $\cE_{\wt{V}}(G^*)$.
Meanwhile, the edge $c \sim d$ remains unoriented in $\cE_{\wt{V}}(G^*)$, so $\rho(\wt{V}, N^1_{\skel(G^*)}(\wt{V})) = |\{c, d\}| = 2$.
One can check that $\psi(G^*, \wt{G}) = 2$ while $n$ could be arbitrarily large.
On the other hand, observe that $\psi$ is \emph{not} symmetric: in the hypothetical situation where we use $G^*$ as an advice for $\wt{G}$, the min-hop-coverage has to extend along the chain $z_1 \sim \ldots \sim z_n$ to reach $\{z_1, z_2\}$, so $h(G^*, V^*) \approx n$ and $\psi(\wt{G}, G^*) \approx n$ since the entire chain remains unoriented with respect to any $V^* \in \cV(G^*)$.
}
\label{fig:psi-example}
\end{figure}

Our main result is that it is possible to design an algorithm that leverages an advice DAG $\wt{G} \in [G^*]$ and performs interventions to fully recover an unknown underlying DAG $G^*$, whose performance depends on the advice quality $\psi(G^*, \wt{G})$.
Our search algorithm only knows $\cE(G^*)$ and $\wt{G} \in [G^*]$ but knows neither $\psi(G^*, \wt{G})$ nor $\nu(G^*)$.

\begin{restatable}{mytheorem}{searchwithadvice}
\label{thm:search-with-advice}
Fix an essential graph $\cE(G^*)$ with an unknown underlying ground truth DAG $G^*$.
Given an advice graph $\wt{G} \in [G^*]$ and intervention set bound $k \geq 1$, there exists a deterministic polynomial time algorithm (\cref{alg:adaptive-search-with-advice-algo}) that computes an intervention set $\cI$ adaptively such that $\cE_{\cI}(G^*) = G^*$, and $|\cI|$ has size\\
1. $\cO( \max\{1, \log \psi(G^*, \wt{G}) \} \cdot \nu_1(G^*))$ when $k = 1$\\
2. $\cO( \max\{1, \log \psi(G^*, \wt{G}) \} \cdot \log k \cdot \nu_k(G^*))$ when $k > 1$.
\end{restatable}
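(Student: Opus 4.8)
The algorithm has two phases: a ``blind trust'' phase exploiting \cref{thm:two-approx-ratio}, and a ``local repair'' phase built on \texttt{SubsetSearch} (\cref{thm:subsetsearch}). In the first phase I compute an arbitrary minimum atomic verifying set $\wt V \in \cV(\wt G)$ of the advice graph --- a minimum vertex cover of $\cC(\wt G)$, computable in polynomial time by \cref{thm:verification-characterization} because $\cC(\wt G)$ induces a forest --- and intervene on it. By \cref{thm:two-approx-ratio}, $|\wt V| = \nu_1(\wt G) \le 2\,\nu_1(G^*)$, so for $k = 1$ this costs $\cO(\nu_1(G^*))$ atomic interventions; for $k > 1$, I realize these atomic interventions by packing the vertices of $\wt V$ into $\cO(\lceil |\wt V|/k\rceil)$ interventions of size $\le k$ using the separating-family construction behind \cref{thm:fullsearch}, which together with the observation that the union of any size-$k$ verifying set is a vertex cover of $\cC(G^*)$ (so $\lceil \nu_1(G^*)/k\rceil \le \nu_k(G^*)$) makes this phase cost $\cO(\log k \cdot \nu_k(G^*))$.

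\textbf{Phase 2.}
Write $\cI$ for the set of interventions performed so far. Two structural facts drive the repair. First, as soon as every covered edge of $G^*$ is oriented in $\cE_{\cI}(G^*)$ we actually have $\cE_{\cI}(G^*) = G^*$: otherwise a non-trivial chain component $D$ survives, $G^*[D]$ is a moral DAG (any v-structure of $G^*[D]$ is a v-structure of $G^*$, hence already oriented) on a vertex set all of whose members share the same $G^*$-parents outside $D$, so a covered edge of $G^*[D]$ --- which exists by \cref{thm:verification-characterization}, since $\cC(G^*[D]) \neq \emptyset$ for non-trivial moral $D$ --- is also a covered edge of $G^*$, contradicting that all such are oriented. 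Second, by definition of $h(G^*,\wt V)$, every covered edge of $G^*$ has both endpoints in $N^{h(G^*,\wt V)}_{\skel(\cE(G^*))}(\wt V)$. Since neither $h(G^*,\wt V)$ nor $G^*$ is known, I guess the radius: for $r = 1, 2, 4, 8, \dots$ run \texttt{SubsetSearch} (\cref{thm:subsetsearch}) on the node-induced subgraph of the current $\cE_{\cI}(G^*)$ on $N^{r}_{\skel(\cE(G^*))}(\wt V)$, append its interventions to $\cI$, and stop once $\cE_{\cI}(G^*) = G^*$ (which is observable).

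\textbf{Correctness and cost.}
Once $r \ge h(G^*,\wt V)$ this call orients all edges inside $N^{r}(\wt V) \supseteq N^{h(G^*,\wt V)}(\wt V)$, in particular all covered edges of $G^*$, so the loop halts; hence it runs for $\cO(\log n)$ rounds and the whole procedure is polynomial time. By \cref{thm:subsetsearch}, round $r$ uses $\cO(\log|\rho(\cI, N^{r}(\wt V))|\cdot\nu_1(G^*))$ interventions when $k = 1$, and $\cO(\log|\rho(\cI, N^{r}(\wt V))| \cdot \log k \cdot \nu_k(G^*))$ when $k > 1$. The key point is that $|\rho(\cI, N^{r}(\wt V))| \le \psi(G^*,\wt G)$ throughout, up to and including the terminating round: for $r$ below that radius, $N^{r}(\wt V) \subseteq N^{h(G^*,\wt V)}(\wt V)$, enlarging the induced subgraph only enlarges the relevant set while enlarging $\cI$ (here $\cI \supseteq \wt V$) only shrinks it, so $\rho(\cI, N^{r}(\wt V)) \subseteq \rho(\wt V, N^{h(G^*,\wt V)}(\wt V))$, of size $\le \psi(G^*,\wt G)$ by definition; a slightly more delicate variant, using that the previous round already oriented all of $N^{r/2}(\wt V)$ so that every unoriented edge still touching $N^r(\wt V)$ lies in a chain component whose $G^*$-covered core is inside $N^{h(G^*,\wt V)}(\wt V)$, handles the terminating round.

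\textbf{Main obstacle.}
I expect two difficulties. The first is the structural bound above: pinning down why $\psi(G^*,\wt G)$ --- defined through the \emph{worst} minimum verifying set of $\wt G$ and the min-hop-coverage --- is precisely what \texttt{SubsetSearch} ever sees during phase 2. The second is the accounting: summing $\cO(\log n)$ rounds each costing $\cO(\log\psi(G^*,\wt G)\cdot\nu_1(G^*))$ would only yield $\cO(\log n\cdot\log\psi(G^*,\wt G)\cdot\nu_1(G^*))$, so consecutive rounds must not redo one another's work --- the radius-$r$ invocation must reuse the orientations already obtained at radius $r/2$, paying only in proportion to the logarithmic growth of the relevant set --- after which a telescoping argument collapses the total to $\cO(\log\psi(G^*,\wt G)\cdot\nu_1(G^*))$, with the $k > 1$ bound following verbatim with the extra $\log k$ factor and $\nu_k$ replacing $\nu_1$.
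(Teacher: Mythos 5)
Your overall architecture mirrors the paper's: intervene on an arbitrary minimum verifying set $\wt{V}\in\cV(\wt{G})$ and charge it to $\nu_1(G^*)$ via \cref{thm:two-approx-ratio} (with the labelling scheme for $k>1$), then repeatedly run \texttt{SubsetSearch} on growing neighborhoods $N^r(\wt{V})$ and stop once the graph is oriented, with correctness from \cref{lem:if-OPT-reached-then-blind-search-fully-orients}. However, the two places you flag as ``obstacles'' are exactly where the paper has to change the algorithm, not just the analysis, and your schedule (invoke at every radius $r=1,2,4,8,\ldots$) does not achieve the stated bound. First, the terminating round: the first power of two $r^*\geq h(G^*,\wt{V})$ can exceed $h(G^*,\wt{V})$, and the relevant-node count of $N^{r^*}(\wt{V})$ can blow up to $\Theta(n)$ within a single extra hop even when $\psi(G^*,\wt{G})=\cO(1)$ (this is precisely \cref{fig:blowup}). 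Your claim that $|\rho(\cI,N^r(\wt{V}))|\leq\psi(G^*,\wt{G})$ ``up to and including the terminating round'' is therefore false, and the proposed repair does not work: \cref{thm:subsetsearch} charges $\cO(\log|\rho(\cI,V(H))|\cdot\nu_1(G^*))$ for the \emph{whole} node-induced subgraph $H=N^{r^*}(\wt{V})$ handed to it, irrespective of the fact that the covered edges of $G^*$ sit inside $N^{h(G^*,\wt{V})}(\wt{V})$. The paper's ``Tweak 2'' exists for exactly this reason: before calling at radius $r$ it first calls at radius $r-1$, so the expensive radius-$r$ call is reached only when $h(G^*,\wt{V})\geq r$, in which case $N^{r}(\wt{V})\subseteq N^{h(G^*,\wt{V})}(\wt{V})$ and its relevant count is at most $\psi$.

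Second, the accounting. Invoking \texttt{SubsetSearch} at every doubling of the radius gives up to $\Theta(\log n)$ invocations, and there is no ``pay only in proportion to the growth of the relevant set'' property you can appeal to: each invocation independently costs up to $\cO(\log|\rho|\cdot\nu_1(G^*))$, and this cost is genuinely incurred. Concretely, take $\cE(G^*)$ an undirected path on $n$ vertices with the advice rooted at the wrong end, so $\nu_1(G^*)=1$ and $\psi=\Theta(n)$; orientations near $\wt{V}$ never propagate toward the true root, the relevant count at radius $2^j$ is $\Theta(2^j)$, and orienting the unoriented subpath there requires $\Theta(j)$ adaptive atomic interventions, so your schedule uses $\Theta(\log^2 n)$ interventions against the claimed $\cO(\log\psi\cdot\nu_1(G^*))=\cO(\log n)$. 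The paper's ``Tweak 1'' fixes this by triggering a new invocation only when the relevant-node count has \emph{squared} since the previous invocation, so the per-round logarithms form a geometric series dominated by the last term (\cref{eq:log-sum-inequality} in its proof), and combined with Tweak 2 the last term is $\cO(\log\psi)$. Without these two modifications to the algorithm of \cref{alg:adaptive-search-with-advice-algo}, neither the $\log\psi$ dependence nor even the advice-free $\cO(\log n\cdot\nu_1(G^*))$ robustness bound follows from your argument.
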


Consider first the setting of $k=1$.
Observe that when the advice is perfect (i.e. $\wt{G} = G^*$), we use $\cO(\nu(G^*))$ interventions, i.e.\ a constant multiplicative factor of the minimum number of interventions necessary.
Meanwhile, even with low quality advice, we still use $\cO(\log n \cdot \nu(G^*))$ interventions, asymptotically matching the best known guarantees for adaptive search without advice.
To the best of our knowledge, \cref{thm:search-with-advice} is the first known result that principally employs imperfect expert advice with provable guarantees in the context of causal graph discovery via interventions.

Consider now the setting of bounded size interventions where $k > 1$.
The reason why we can obtain such a result is precisely because of our algorithmic design: we deliberately designed an algorithm that invokes \texttt{SubsetSearch} as a black-box subroutine.
Thus, the bounded size guarantees of \texttt{SubsetSearch} given by \cref{thm:subsetsearch} carries over to our setting with a slight modification of the analysis.

\section{Techniques}
\label{sec:techniques}

Here, we discuss the high-level technical ideas and intuition behind how we obtain our adaptive search algorithm with imperfect DAG advice.
See the appendix for full proofs; in particular, see \cref{sec:two-overview} for an overview of \cref{thm:two-approx-ratio}.

For brevity, we write $\psi$ to mean $\psi(G^*, \wt{G})$ and drop the subscript $\skel(\cE(G^*))$ of $r$-hop neighborhoods in this section.
We also focus our discussion to the atomic interventions.
Our adaptive search algorithm (\cref{alg:adaptive-search-with-advice-algo}) uses \texttt{SubsetSearch} as a subroutine.

We begin by observing that $\texttt{SubsetSearch}(\cE(G^*), A)$ fully orients $\cE(G^*)$ into $G^*$ if the covered edges of $G^*$ lie within the node-induced subgraph induced by $A$.

\begin{restatable}{mylemma}{ifOPTreachedthenblindsearchfullyorients}
\label{lem:if-OPT-reached-then-blind-search-fully-orients}
Fix a DAG $G^* = (V,E)$ and let $V' \subseteq V$ be any subset of vertices.
Suppose $\cI_{V'} \subseteq V$ is the set of nodes intervened by $\texttt{SubsetSearch}(\cE(G^*), V')$.
If $\cC(G^*) \subseteq E(G^*[V'])$, then $\cE_{\cI_{V'}}(G^*) = G^*$.
\end{restatable}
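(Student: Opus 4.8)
The plan is to combine the guarantee of \texttt{SubsetSearch} with the structural characterization of verifying sets. First, apply \cref{thm:subsetsearch} with the empty starting intervention set ($\cI = \emptyset$, so $\cE_{\emptyset}(G^*) = \cE(G^*)$) and the node-induced subgraph $H = \cE(G^*)[V']$ with vertex set $V'$: this yields $\cE_{\cI_{V'}}(G^*)[V'] = G^*[V']$, i.e.\ every edge of $G^*$ with both endpoints in $V'$ appears, correctly oriented, in $\cE_{\cI_{V'}}(G^*)$. Since the hypothesis $\cC(G^*) \subseteq E(G^*[V'])$ says exactly that both endpoints of each covered edge of $G^*$ lie in $V'$, it follows that every edge of $\cC(G^*)$ is oriented in $\cE_{\cI_{V'}}(G^*)$.

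It then suffices to establish the following general fact (which underlies \cref{thm:verification-characterization} and can either be cited from \cite{choo2022verification} or proved directly): for any DAG $G^*$ and any intervention set $\cI$, if every covered edge of $G^*$ is oriented in $\cE_{\cI}(G^*)$, then $\cE_{\cI}(G^*) = G^*$. I would argue the contrapositive. Suppose $\cE_{\cI_{V'}}(G^*) \neq G^*$ and let $R = (G^*)^{\cI_{V'}}$ be the residual DAG on the still-unoriented edges, which then has at least one edge. One shows $R$ is a moral DAG: any v-structure of $R$ would have all three incident edges unoriented in $\cE_{\cI_{V'}}(G^*)$ and hence be confined to a single chain component, and a short case analysis using the fact that v-structures of $G^*$ are always oriented in any ($\cI$-)essential graph rules this out. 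A non-empty moral DAG has a covered edge $u \to v$ (standard: take a non-source vertex with the fewest ancestors; morality forces its unique parent to be a source, yielding a covered edge).

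Finally, I would transfer covered-ness from $R$ back to $G^*$. The key sub-claim is that, within a chain component $C$ of $\cE_{\cI_{V'}}(G^*)$, all vertices share the same set of parents lying outside $C$; this follows from Meek rules, since an external directed arc into one vertex of $C$ must, by R1, be adjacent to each neighbour of that vertex in $C$, and by R2 together with the chain structure must point into $C$. Writing $\Pa_{G^*}(x) = \Pa_R(x) \;\dot\cup\; P_{\mathrm{ext}}(C)$ for $x \in C$, the covered-edge identity $\Pa_R(u) = \Pa_R(v) \setminus \{u\}$ lifts verbatim to $\Pa_{G^*}(u) = \Pa_{G^*}(v) \setminus \{u\}$, so $u \to v \in \cC(G^*)$. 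But $u \sim v$ is unoriented in $\cE_{\cI_{V'}}(G^*)$, contradicting the conclusion of the first paragraph; hence $\cE_{\cI_{V'}}(G^*) = G^*$.

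The main obstacle is the structural fact in the second and third paragraphs — that orienting every covered edge of $G^*$ forces the whole essential graph to collapse to $G^*$ — together with the chain-component parent-sharing sub-claim it relies on. Everything else is a one-line application of the black-box results. If this equivalence is available off the shelf from \cite{choo2022verification} (it is the engine behind \cref{thm:verification-characterization}), the proof reduces to: \texttt{SubsetSearch} orients all of $G^*[V'] \supseteq \cC(G^*)$, hence orients every covered edge of $G^*$, hence $\cE_{\cI_{V'}}(G^*) = G^*$.
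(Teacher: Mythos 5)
Your proposal is correct, and its skeleton matches the paper's: first use \cref{thm:subsetsearch} (with $\cI=\emptyset$) to get $\cE_{\cI_{V'}}(G^*)[V'] = G^*[V']$, so every covered edge of $G^*$ is oriented, and then conclude that orienting all of $\cC(G^*)$ forces $\cE_{\cI_{V'}}(G^*) = G^*$. Where you diverge is in how that second step is justified: the paper simply imports Lemmas 27 and 28 of \cite{choo2022verification} (\cref{lem:necessary,lem:sufficient}) — orienting a covered edge requires intervening on one of its endpoints, so $\cI_{V'}$ contains a vertex cover $V^*$ of $\cC(G^*)$, and any such cover is a verifying set, whence $R(G^*,\cI_{V'}) \supseteq R(G^*,V^*) = A(G^*)$ — whereas you offer a self-contained structural proof: the residual DAG on the unoriented edges is moral (v-structures of $G^*$ are already oriented, and an oriented edge inside a chain component would create a partially directed cycle), a nonempty moral DAG has a covered edge (your fewest-ancestors argument is sound), and the shared-external-parents property of chain components lifts that covered edge to a covered edge of $G^*$ that is unoriented, a contradiction. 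Your route buys independence from the two cited lemmas at the cost of reproving, in essence, their content (your lifting argument is the engine behind \cref{thm:verification-characterization}); the paper's route is a three-line reduction. Two small imprecisions in your sketch, neither fatal: a v-structure $u \to v \gets w$ has only two incident edges (not three), and the covered-edge existence argument should be read per connected component of the residual graph, where it works verbatim since parents do not cross components.
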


Motivated by \cref{lem:if-OPT-reached-then-blind-search-fully-orients}, we design \cref{alg:adaptive-search-with-advice-algo} to repeatedly invoke \texttt{SubsetSearch} on node-induced subgraphs $N^r(\wt{V})$, starting from an \emph{arbitrary} verifying set $\wt{V} \in \cV(\wt{G})$ and for \emph{increasing} values of $r$.

For $i \in \mathbb{N} \cup \{0\}$, let us denote $r(i) \in \mathbb{N} \cup \{0\}$ as the value of $r$ in the $i$-th invocation of \texttt{SubsetSearch}, where we insist that $r(0) = 0$ and $r(j) > r(j-1)$ for any $j \in \mathbb{N}$.
Note that $r = 0$ simply implies that we intervene on the verifying set $\wt{V}$, which only incurs $\cO(\nu_1(G^*))$ interventions due to \cref{thm:two-approx-ratio}.
Then, we can appeal to \cref{lem:if-OPT-reached-then-blind-search-fully-orients} to conclude that $\cE(G^*)$ is completely oriented into $G^*$ in the $t$-th invocation if $r(t) \geq h(G^*, \wt{V})$.

While the high-level subroutine invocation idea seems simple, one needs to invoke \texttt{SubsetSearch} at \emph{suitably chosen intervals} in order to achieve our theoretical guarantees we promise in \cref{thm:search-with-advice}.
We now explain how to do so in three successive attempts while explaining the algorithmic decisions behind each modification introduced.

As a reminder, we \emph{do not} know $G^*$ and thus \emph{do not} know $h(G^*, \wt{V})$ for any verifying set $\wt{V} \in \cV(\wt{G})$ of $\wt{G} \in [G^*]$.

\subsubsection*{Naive attempt: Invoke for $r = 0, 1, 2, 3, \ldots$}

The most straightforward attempt would be to invoke \texttt{SubsetSearch} repeatedly each time we increase $r$ by 1 until the graph is fully oriented -- in the worst case, $t = h(G^*, \wt{V})$.
However, this may cause us to incur way too many interventions.
Suppose there are $n_i$ relevant nodes in the $i$-th invocation.
Using \cref{thm:subsetsearch}, one can only argue that the overall number interventions incurred is $\cO(\sum_{i=0}^{t} \log n_i \cdot \nu(G^*))$.
However, $\sum_i \log n_i$ could be significantly larger than $\log (\sum_i n_i)$ in general, e.g.\ $\log 2 + \ldots + \log 2 = (n/2) \cdot \log 2 \gg \log n$.
In fact, if $G^*$ was a path on $n$ vertices $v_1 \to v_2 \to \ldots \to v_n$ and $\wt{G} \in [G^*]$ misleads us with $v_1 \gets v_2 \gets \ldots \gets v_n$, then this approach incurs $\Omega(n)$ interventions in total.

\subsubsection*{Tweak 1: Only invoke periodically}

Since \cref{thm:subsetsearch} provides us a logarithmic factor in the analysis, we could instead consider only invoking \texttt{SubsetSearch} after the number of nodes in the subgraph \emph{increases by a polynomial factor}.
For example, if we invoked \texttt{SubsetSearch} with $n_i$ previously, then we will wait until the number of relevant nodes surpasses $n_i^2$ before invoking \texttt{SubsetSearch} again, where we define $n_0 \geq 2$ for simplicity.
Since $\log n_i \geq 2 \log n_{i-1}$, we can see via an inductive argument that the number of interventions used in the final invocation will dominate the total number of interventions used so far: $n_t \geq 2 \log n_{t-1} \geq \log n_{t-1} + 2 \log n_{t-2} \geq \ldots \geq \sum_{i=0}^{t-1} \log n_i$.
Since $n_i \leq n$ for any $i$, we can already prove that $\cO(\log n \cdot \nu_1(G^*))$ interventions suffice, matching the advice-free bound of \cref{thm:fullsearch}.
However, this approach and analysis does \emph{not} take into account the quality of $\wt{G}$ and is \emph{insufficient} to relate $n_t$ with the advice measure $\psi$.

\subsubsection*{Tweak 2: Also invoke one round before}

Suppose the final invocation of \texttt{SubsetSearch} is on $r(t)$-hop neighborhood while incurring $\cO(\log n_t \cdot \nu_1(G^*))$ interventions.
This means that $\cC(G^*)$ lies within $N^{r(t)}(\wt{V})$ but \emph{not} within $N^{r(t-1)}(\wt{V})$.
That is, $N^{r(t-1)}(\wt{V}) \subsetneq N^{h(G^*, \wt{V})}(\wt{V}) \subseteq N^{r(t)}(\wt{V})$.
While this tells us that $n_{t-1} \leq |\rho(\wt{V}, N^{r(t-1)}(\wt{V}))| < |\rho(\wt{V}, N^{h(G^*, \wt{V})}(\wt{V}))| = \psi$, what we want is to conclude that $n_t \in \cO(\psi)$.
Unfortunately, even when $\psi = r(t-1) + 1$, it could be the case that $|\rho(\wt{V}, N^{h(G^*, \wt{V})}(\wt{V}))| \ll |N^{r(t)}(\wt{V})|$ as the number of relevant nodes could blow up within a single hop (see \cref{fig:blowup}).
To control this potential blow up in the analysis, we can introduce the following technical fix: whenever we want to invoke \texttt{SubsetSearch} on $r(i)$, first invoke \texttt{SubsetSearch} on $r(i)-1$ and terminate earlier if the graph is already fully oriented into $G^*$.

\begin{figure}[tb]
\centering
\resizebox{0.7\linewidth}{!}{%
\begin{tikzpicture}
%
%
\node[draw, circle, minimum size=10pt, inner sep=1pt] at (0,0) (star-v1) {$v_1$};
\node[draw, circle, minimum size=10pt, inner sep=1pt] at (1,0) (star-v2) {$v_2$};
\node[draw, circle, minimum size=10pt, inner sep=1pt] at (2,0) (star-v3) {$v_3$};
\node[draw, circle, minimum size=10pt, inner sep=1pt] at (3,1) (star-v4) {$v_4$};
\node[draw, circle, minimum size=10pt, inner sep=1pt] at (3,0.25) (star-v5) {$v_5$};
\node[minimum size=10pt, inner sep=1pt] at (3,-0.25) (star-vdots) {\small $\vdots$};
\node[draw, circle, minimum size=10pt, inner sep=1pt] at (3,-1) (star-vn) {$v_n$};

\draw[thick, -stealth, dashed] (star-v2) -- (star-v1);
\draw[thick, -stealth, dashed] (star-v2) -- (star-v3);
\draw[thick, -stealth] (star-v3) -- (star-v4);
\draw[thick, -stealth] (star-v3) -- (star-v5);
\draw[thick, -stealth] (star-v3) -- (star-vn);

\node[] at (0,1) {$G^*$};
\node[fit=(star-v2), draw, rounded corners] {};

%
%
\node[draw, circle, minimum size=10pt, inner sep=1pt] at (4,0) (tilde-v1) {$v_1$};
\node[draw, circle, minimum size=10pt, inner sep=1pt] at (5,0) (tilde-v2) {$v_2$};
\node[draw, circle, minimum size=10pt, inner sep=1pt] at (6,0) (tilde-v3) {$v_3$};
\node[draw, circle, minimum size=10pt, inner sep=1pt] at (7,1) (tilde-v5) {$v_4$};
\node[draw, circle, minimum size=10pt, inner sep=1pt] at (7,0.25) (tilde-v6) {$v_5$};
\node[minimum size=10pt, inner sep=1pt] at (7,-0.25) (tilde-vdots) {\small $\vdots$};
\node[draw, circle, minimum size=10pt, inner sep=1pt] at (7,-1) (tilde-vn) {$v_n$};

\draw[thick, -stealth, dashed] (tilde-v1) -- (tilde-v2);
\draw[thick, -stealth] (tilde-v2) -- (tilde-v3);
\draw[thick, -stealth] (tilde-v3) -- (tilde-v5);
\draw[thick, -stealth] (tilde-v3) -- (tilde-v6);
\draw[thick, -stealth] (tilde-v3) -- (tilde-vn);

\node[] at (4,1) {$\wt{G}$};
\node[] at (4.5,-1) {$\wt{V} = \{v_1\}$};
\node[fit=(tilde-v1), draw, rounded corners] {};

%
%
\draw[thick] (3.5, 1.5) -- (3.5, -1.5);
\end{tikzpicture}
}
\caption{
Consider the ground truth DAG $G^*$ with unique minimum verifying set $\{v_2\}$ and an advice DAG $\wt{G} \in [G^*]$ with chosen minimum verifying set $\wt{V} = \{v_1\}$.
So, $h(G^*, \wt{V}) = 1$ and ideally we want to argue that our algorithm uses a constant number of interventions.
Without tweak 2 and $n_0 = 2$, an algorithm that increases hop radius until the number of relevant nodes is squared will \emph{not} invoke \texttt{SubsetSearch} until $r = 3$ because $\rho(\wt{V}, N^1) = 1 < n_0^2$ and $\rho(\wt{V}, N^2) = 2 < n_0^2$.
However, $\rho(\wt{V}, N^3) = n-1$ and we can only conclude that the algorithm uses $\cO(\log n)$ interventions by invoking \texttt{SubsetSearch} on a subgraph on $n-1$ nodes.
\vspace{-10pt}
}
\label{fig:blowup}
\end{figure}
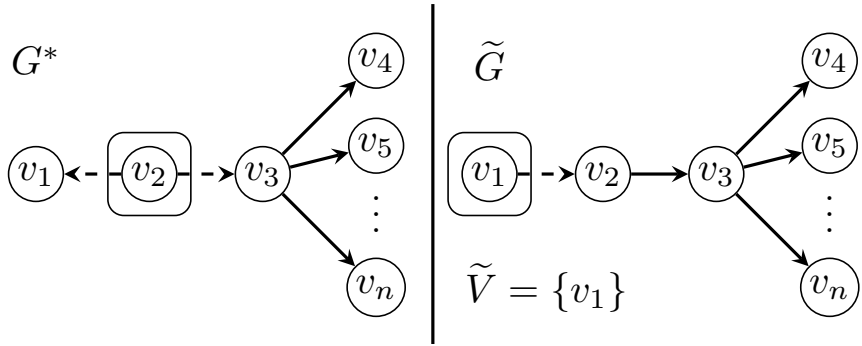

\subsubsection*{Putting together}

\cref{alg:adaptive-search-with-advice-algo} presents our full algorithm where the inequality $\rho(\cI_{i}, N^r_{\skel(\cE(G^*))}(\wt{V})) \geq n_i^2$ corresponds to the first tweak while the terms $C_i$ and $C_i'$ correspond to the second tweak.

In \cref{sec:appendix-path}, we explain why our algorithm (\cref{alg:adaptive-search-with-advice-algo}) is simply the classic ``binary search with prediction''\footnote{e.g.\ see \url{https://en.wikipedia.org/wiki/Learning_augmented_algorithm\#Binary_search}} when the given essential graph $\cE(G^*)$ is an undirected path.
So, another way to view our result is a \emph{generalization} that works on essential graphs of arbitrary moral DAGs.

\begin{algorithm}[htb]
\caption{Adaptive search algorithm with advice.}
\label{alg:adaptive-search-with-advice-algo}
\begin{algorithmic}[1]
    \Statex \textbf{Input}: Essential graph $\cE(G^*)$, advice DAG $\wt{G} \in [G^*]$, intervention size $k \in \N$
    \Statex \textbf{Output}: An intervention set $\cI$ such that each intervention involves at most $k$ nodes and $\cE_{\cI}(G^*) = G^*$.
    \State Let $\wt{V} \in \cV(\wt{G})$ be any atomic verifying set of $\wt{G}$.
    \If{$k = 1$}
        \State Define $\cI_0 = \wt{V}$ as an atomic intervention set.
    \Else
        \State Define $k' = \min\{k, |\wt{V}|/2\}$, $a = \lceil |\wt{V}|/k' \rceil \geq 2$, and $\ell = \lceil \log_a |C| \rceil$. Compute labelling scheme on $\wt{V}$ with $(|\wt{V}|, k, a)$ via \cref{lem:labelling-scheme} and define $\cI_0 = \{S_{x,y}\}_{x \in [\ell], y \in [a]}$, where $S_{x,y} \subseteq \wt{V}$ is the subset of vertices whose $x^{th}$ letter in the label is $y$.
    \EndIf
    \State Intervene on $\cI_0$ and initialize $r \gets 0$, $i \gets 0$, $n_0 \gets 2$.
	\While{$\cE_{\cI_{i}}(G^*)$ still has undirected edges}
        \If{$\rho(\cI_{i}, N^r_{\skel(\cE(G^*))}(\wt{V})) \geq n_i^2$}
            \State Increment $i \gets i + 1$ and record $r(i) \gets r$.
            \State Update $n_i \gets \rho(\cI_{i}, N^r_{\skel(\cE(G^*))}(\wt{V}))$
            \State $C_{i} \gets \texttt{SubsetSearch}(\cE_{\cI_{i}}(G^*), N^{r-1}_{\skel(\cE(G^*))}(\wt{V}), k)$
            \If{$\cE_{\cI_{i-1} \;\cup\; C_{i}}(G^*)$ still has undirected edges}
                \State $C'_{i} \gets \texttt{SubsetSearch}(\cE_{\cI_{i-1} \,\cup\, C_{i}}(G^*), N^{r}_{\skel(\cE(G^*))}(\wt{V}), k)$
                \State Update $\cI_{i} \gets \cI_{i-1} \cup C_{i} \cup C'_{i}$.
            \Else
                \State Update $\cI_{i} \gets \cI_{i-1} \cup C_{i}$.
            \EndIf
        \EndIf
        \State Increment $r \gets r + 1$.
	\EndWhile=
	\State \textbf{return} $\cI_i$
\end{algorithmic}
\end{algorithm}

For bounded size interventions, we rely on the following known results.

\begin{theorem}[Theorem 12 of \cite{choo2022verification}]
\label{thm:efficient-near-optimal-bounded}
Fix an essential graph $\cE(G^*)$ and $G \in [G^*]$.
If $\nu_1(G) = \ell$, then $\nu_k(G) \geq \lceil \frac{\ell}{k} \rceil$ and there exists a polynomial time algo.\ to compute a bounded size intervention set $\cI$ of size $|\cI| \leq \lceil \frac{\ell}{k} \rceil + 1$.
\end{theorem}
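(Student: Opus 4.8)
The plan is to handle the lower bound $\nu_k(G) \ge \lceil \ell/k \rceil$ and the algorithmic upper bound separately, with both arguments routed through the set of covered edges $\cC(G)$. For the lower bound, recall from \cref{thm:verification-characterization} that $\ell = \nu_1(G)$ equals the size of a minimum vertex cover of $\cC(G)$ and that $\cC(G)$ induces a forest $F$. Since forests are bipartite, K\"onig's theorem gives a matching $M \subseteq \cC(G)$ with $|M| = \ell$. I would then invoke the (bounded-size) verification characterization from \cite{choo2022verification}: any verifying set $\cI$ must contain, for every covered edge $e \in \cC(G)$, an intervention $S \in \cI$ with $|S \cap e| = 1$, and in particular $S$ then contains an endpoint of $e$. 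Because the edges of $M$ are pairwise vertex-disjoint, a single intervention of size at most $k$ can play this role for at most $k$ of them, so $|\cI| \ge |M|/k = \ell/k$, i.e.\ $|\cI| \ge \lceil \ell/k\rceil$.

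For the upper bound, I would first compute, in polynomial time (e.g.\ via tree dynamic programming, or bipartite matching, since $F$ is a forest), a minimum vertex cover $W$ of $\cC(G)$ with $|W| = \ell$, and let $F' = \cC(G)[W]$ be the induced subforest. The key reduction is: it suffices to partition $W$ into color classes $W_1,\dots,W_c$ with $c = \lceil \ell/k\rceil + 1$, each of size at most $k$, such that adjacent vertices of $F'$ get distinct colors. Indeed, for any covered edge $e = \{u,v\}$ with, say, $u \in W$, the class $W_{\chi(u)}$ contains $u$ but not $v$ --- immediate if $v \notin W$, and because $uv \in E(F')$ forces $\chi(v) \neq \chi(u)$ if $v \in W$ --- so $|W_{\chi(u)} \cap e| = 1$ and $e$ is cut. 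By the sufficiency direction of the verification characterization, $\cI = \{W_1,\dots,W_c\}$ is then a verifying set of size $\lceil \ell/k\rceil + 1$ whose interventions all have size at most $k$.

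To construct such a coloring I would root each tree of $F'$, process the vertices of $W$ in BFS order (isolated vertices of $F'$ last), and greedily assign each vertex any color that (a) differs from the color of its parent, when it has one, and (b) currently labels fewer than $k$ vertices. Such a color always exists: otherwise all $c-1$ colors other than the parent's color would already be full, accounting for $(c-1)k = \lceil \ell/k\rceil \cdot k \ge \ell$ colored vertices, contradicting that at most $\ell - 1$ vertices have been colored before the current one; for a root only (b) is needed, and it holds since $ck > \ell$. This establishes polynomial-time computability of an intervention set of size $\lceil \ell/k\rceil + 1$.

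The main obstacle is the upper bound: the size-$\le k$ constraint and the requirement that every covered edge be cut must be satisfied simultaneously within only $\lceil \ell/k\rceil + 1$ groups, and a naive partition of $W$ into $\lceil \ell/k\rceil$ blocks fails precisely because an intra-block covered edge is not cut. The crux is recognizing that ``cutting all covered edges'' reduces to a proper $2$-coloring of the subforest $\cC(G)[W]$, and that the single extra group is exactly what makes the greedy capacity argument go through. A secondary technical point requiring care is the bookkeeping for covered edges with one endpoint outside $W$, and citing the bounded-size (rather than atomic) form of the verification characterization from \cite{choo2022verification}.
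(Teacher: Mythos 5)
Your proof is correct. Note that this paper does not prove the statement itself --- it is imported verbatim as Theorem~12 of \cite{choo2022verification} --- so there is no in-paper argument to compare against; judged on its own terms, your two halves are sound: the lower bound correctly combines \cref{thm:verification-characterization} (minimum vertex cover of the covered-edge forest has size $\ell$), K\"onig's theorem to extract a matching of $\ell$ vertex-disjoint covered edges, and the necessity condition (\cref{lem:necessary}) with the counting observation that a size-$\le k$ intervention can separate at most $k$ disjoint edges; the upper bound correctly reduces, via the sufficiency condition (\cref{lem:sufficient}), to partitioning a minimum vertex cover $W$ into $\lceil \ell/k\rceil + 1$ groups of size at most $k$ with no covered edge inside a group, and your greedy capacity-constrained proper coloring of the induced subforest $\cC(G)[W]$ achieves this, with the capacity argument $(c-1)k \ge \ell$ closing the existence step. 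This is essentially the same strategy as the original proof in \cite{choo2022verification}; the only cosmetic difference is that the original exploits the $2$-colorability of the forest directly (chunking each side of the bipartition of $W$ into blocks of size $k$, paying the $+1$ for the two roundings), whereas you obtain the same partition property by a rooted greedy coloring --- both yield $\lceil \ell/k \rceil + 1$ interventions in polynomial time, so nothing is lost or gained beyond taste. One small point worth making explicit (though it is harmless here, since \cref{thm:verification-characterization} and \cref{lem:sufficient} are stated for all of $\cC(G)$): covered edges of $G$ are always unoriented in $\cE(G^*)$, because reversing a covered edge stays within the MEC, which is why separating all of $\cC(G)$ is the right target.
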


\begin{lemma}[Lemma 1 of \cite{shanmugam2015learning}]
\label{lem:labelling-scheme}
Let $(n,k,a)$ be parameters where $k \leq n/2$.
There exists a polynomial time labeling scheme that produces distinct $\ell$ length labels for all elements in $[n]$ using letters from the integer alphabet $\{0\} \cup [a]$ where $\ell = \lceil \log_a n \rceil$.
Further, in every digit (or position), any integer letter is used at most $\lceil n/a \rceil$ times.
This labelling scheme is a separating system: for any $i,j \in [n]$, there exists some digit $d \in [\ell]$ where the labels of $i$ and $j$ differ.
\end{lemma}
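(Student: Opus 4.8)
The plan is to build the labels by a balanced $a$-ary recursive partition of the $n$ elements and then verify the three claims — distinct labels of the stated length, the per-position frequency bound, and the separating-system property — by induction on the recursion depth. \textbf{Construction:} index the elements as $\{1,\dots,n\}$ and set $\ell=\lceil\log_a n\rceil$; if $n\le a$, give element $i$ the length-$1$ label $(i)$ (and the empty label if $n=1$); if $n>a$, partition $\{1,\dots,n\}$ into $a$ consecutive blocks $B_1,\dots,B_a$ with each $|B_v|\in\{\lfloor n/a\rfloor,\lceil n/a\rceil\}$, give every element of $B_v$ the leading letter $v$, and recurse inside each $B_v$ to produce the remaining $\ell-1$ letters, cyclically shifting the alphabet letters produced by the $v$-th recursive call. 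This recursion has depth exactly $\ell$: each $|B_v|\le\lceil n/a\rceil$, and since $\ell=\lceil\log_a n\rceil$ forces $n\le a^\ell$ we get $n/a\le a^{\ell-1}$ and hence $\lceil n/a\rceil\le a^{\ell-1}$, so $\lceil\log_a|B_v|\rceil\le\ell-1$. Each recursive call is on a strictly smaller set whenever $a\ge 2$, which is exactly the role of the hypothesis $k\le n/2$ (it guarantees $a=\lceil n/k\rceil\ge 2$). The scheme runs in $O(n\log n)$ time since the recursion has depth $O(\log n)$ and $O(n)$ work per level.

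\textbf{Distinctness, length, and separation.} Two elements in different blocks differ already in the leading letter, and two in the same block differ by induction (a cyclic shift is a bijection of the alphabet), so all $n$ labels are distinct and have length $\ell$. For the separating property, given $i\ne j$ pick any position $d$ where their labels differ and let $y=\mathrm{label}(i)_d$; the set $\{h:\mathrm{label}(h)_d=y\}$ contains $i$ but not $j$, and this is precisely one of the $\ell a$ label-induced sets, so the family is separating. (The construction only ever uses letters from $[a]\subseteq\{0\}\cup[a]$.)

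\textbf{Frequency bound — the crux.} In the leading position letter $v$ occurs exactly $|B_v|\le\lceil n/a\rceil$ times, which is immediate. For a position $d\ge 2$, the count of a letter $w$ equals the sum over $v$ of the count of the corresponding pre-shift letter in position $d-1$ of the $v$-th recursive labelling; bounding each summand crudely by $\lceil|B_v|/a\rceil$ only gives $\sum_v\lceil|B_v|/a\rceil$, which can exceed $\lceil n/a\rceil$. This is exactly why the cyclic shifts are needed and where the real work lies: I would strengthen the inductive statement so that it records not merely that every per-position letter count on a set of size $m$ is at most $\lceil m/a\rceil$, but also that the counts are as balanced as possible together with which letters are the ``heavy'' ones, and then choose the shift of the $v$-th recursive call so that these heavy letters rotate with $v$; summing the rotated heavy/light contributions then telescopes to at most $\lceil n/a\rceil$ in every position. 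Carrying this refined invariant cleanly through the recursion — in particular reconciling the floors and ceilings introduced by non-divisibility at every level — is the main obstacle; everything else is bookkeeping.
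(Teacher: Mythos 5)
This lemma is not proved in the paper at all: it is imported verbatim as Lemma 1 of the cited reference (Shanmugam et al.), so there is no in-paper argument to match your proposal against. Judged on its own terms, your proposal is not a complete proof. Distinctness, label length, polynomial running time, and the separating-system property are all easy and you handle them correctly (indeed, any family of distinct length-$\ell$ labels is automatically separating). The entire content of the lemma is the per-position frequency bound, and there your text stops at a plan. You correctly observe that the naive recursive bound only yields $\sum_{v}\lceil |B_v|/a\rceil$, which can be as large as roughly $n/a + a$ and so overshoots $\lceil n/a\rceil$; but the proposed repair --- a strengthened induction hypothesis recording ``heavy'' letters plus a choice of cyclic shifts so that the heavy letters ``rotate with $v$'' and the counts ``telescope'' --- is never formulated precisely, let alone verified. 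You acknowledge this yourself (``the main obstacle''), which is an admission that the crux of the lemma remains unproved. It is far from clear that the invariant you gesture at survives the floor/ceiling mismatches across levels, and that is exactly where such balancing arguments tend to fail.

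There is also a secondary unaddressed issue in the construction itself: with blocks of sizes $\lfloor n/a\rfloor$ and $\lceil n/a\rceil$, the recursive sub-labels need not all have the same length (e.g.\ $n=a^2+1$ gives one block of size $a+1$ needing two more digits and blocks of size $a$ needing one), so you must pad to reach length $\ell$; padding whole blocks with a common letter at a common position can itself concentrate a letter $\Theta(n/a)$ times per block and, summed over several padded blocks, can again violate the $\lceil n/a\rceil$ bound unless the padding letters are chosen carefully. In short, the approach is plausible but the balance property --- the only nontrivial claim --- is left as an unverified sketch, so the proposal has a genuine gap. To close it you would either have to carry out the strengthened invariant explicitly or switch to an explicit labeling (as in the cited source) for which the per-digit counts can be computed and bounded directly.
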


\cref{thm:efficient-near-optimal-bounded} enables us to easily relate $\nu_1(G)$ with $\nu_k(G)$ while \cref{lem:labelling-scheme} provides an efficient labelling scheme to partition a set of $n$ nodes into a set $S = \{S_1, S_2, \ldots \}$ of bounded size sets, each $S_i$ involving at most $k$ nodes.
By invoking \cref{lem:labelling-scheme} with $a \approx n'/k$ where $n'$ is related to $\nu_1(G)$, we see that $|S| \approx \frac{n'}{k} \cdot \log k$.
As $\nu_k(G) \approx \nu_1(G)/k$, this is precisely why the bounded intervention guarantees in \cref{thm:fullsearch}, \cref{thm:subsetsearch} and \cref{thm:search-with-advice} have an additional multiplicative $\log k$ factor.

\section{Empirical validation}
\label{sec:experiments}

While our main contributions are theoretical, we also performed some experiments to empirically validate that our algorithm is practical, outperforms the advice-free baseline when the advice quality is good, and still being at most a constant factor worse when the advice is poor.

Motivated by \cref{thm:moral-suffices}, we experimented on synthetic moral DAGs from \cite{wienobst2021polynomial}:
For each undirected chordal graph, we use the uniform sampling algorithm of \cite{wienobst2021polynomial} to uniformly sample 1000 moral DAGs $\wt{G}_1, \ldots, \wt{G}_{1000}$ and randomly choose one of them as $G^*$.
Then, we give $\{(\cE(G^*), \wt{G}_i)\}_{i \in [1000]}$ as input to \cref{alg:adaptive-search-with-advice-algo}.

\cref{fig:one-of-the-experiments} shows one of the experimental plots; more detailed experimental setup and results are given in \cref{sec:appendix-experiments}.
On the X-axis, we plot $\psi(G^*, \wt{V}) = \left| \rho \left( \wt{V}, N_{\skel(\cE(G^*))}^{h(G^*, \wt{V})}(\wt{V}) \right) \right|$, which is a \emph{lower bound} and proxy\footnote{We do not know if there is an efficient way to compute $\psi(G^*, \wt{G})$ besides the naive (possibly exponential time) enumeration over all possible minimum verifying sets.} for $\psi(G^*, \wt{G})$.
On the Y-axis, we aggregate advice DAGs based on their quality measure and also show (in dashed lines) the empirical distribution of quality measures of all DAGs within the Markov equivalence class.

As expected from our theoretical analyses, we see that the number of interventions by our advice search starts from $\nu_1(G^*)$, is lower than advice-free search of \cite{choo2022verification} when $\psi(G^*, \wt{V})$ is low, and gradually increases as the advice quality degrades.
Nonetheless, the number of interventions used is always theoretically bounded below $\cO(\psi(G^*, \wt{V}) \cdot \nu_1(G^*))$; we do not plot $\psi(G^*, \wt{V}) \cdot \nu_1(G^*)$ since plotting it yields a ``squashed'' graph as the empirical counts are significantly smaller.
In this specific graph instance, \cref{fig:one-of-the-experiments} suggests that our advice search outperforms its advice-free counterpart when given an advice DAG $\wt{G}$ that is better than $\sim 40\%$ of all possible DAGs consistent with the observational essential graph $\cE(G^*)$.

\begin{figure}[tb]
    \centering
    \includegraphics[width=0.7\linewidth]{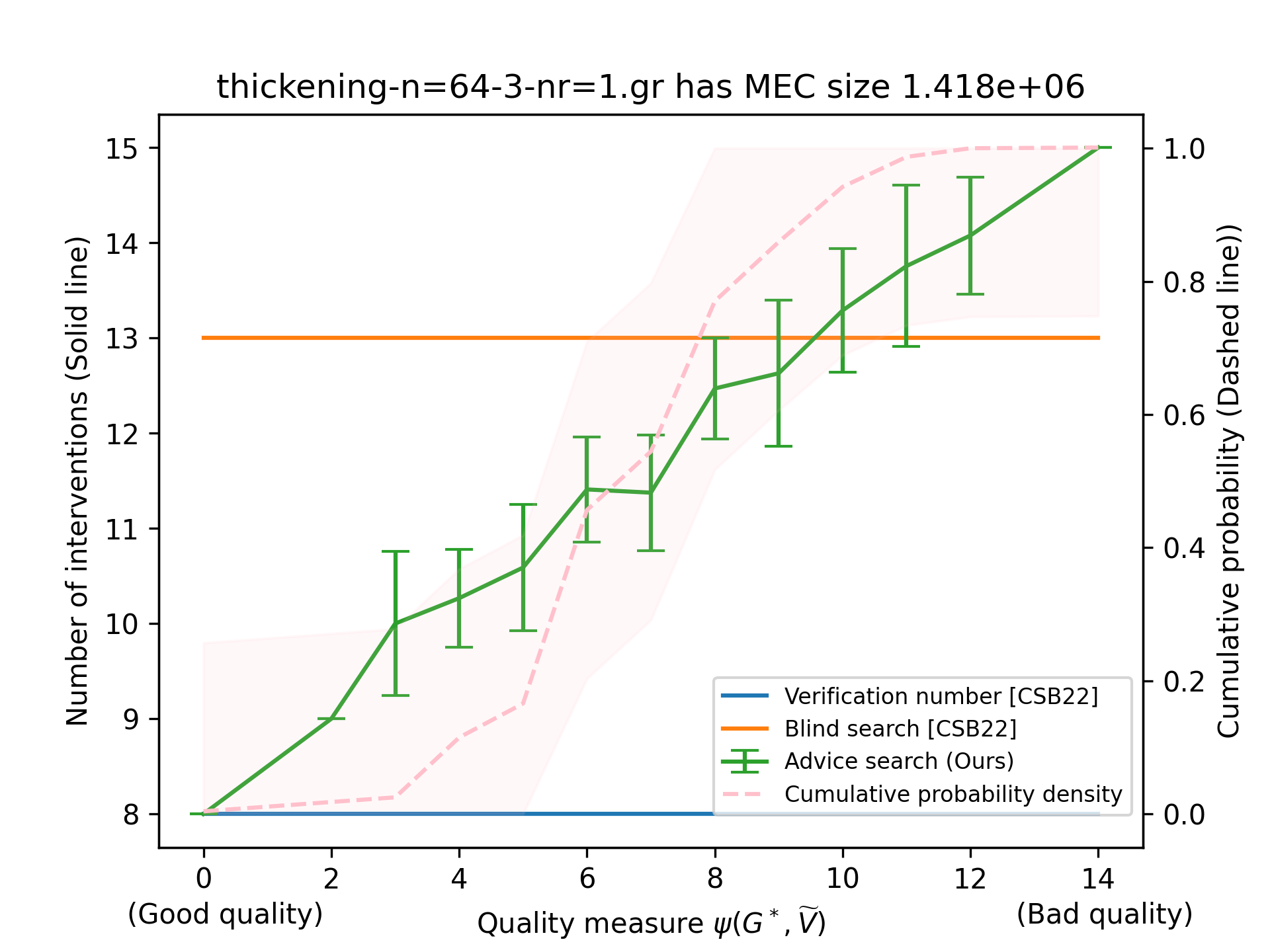}
    \caption{
    Experimental plot for one of the synthetic graphs $G^*$, with respect to $1000 \ll |[G^*]| \approx 1.4 \times 10^{6}$ uniformly sampled advice DAGs $\wt{G}$ from the MEC $[G^*]$. The solid lines indicate the number of atomic interventions used while the dotted lines indicate the empirical cumulative probability density of $\wt{G}$. The true cumulative probability density lies within the shaded area with probability at least 0.99 (see \cref{sec:appendix-experiments} for details).
    \vspace{-10pt}
    }
    \label{fig:one-of-the-experiments}
\end{figure}

\section{Conclusion and discussion}
\label{sec:conclusion}

In this work, we gave the first result that utilizes imperfect advice in the context of causal discovery.
We do so in a way that the performance (i.e.\ the number of interventions in our case) does not degrade significantly even when the advice is inaccurate, which is consistent with the objectives of learning-augmented algorithms. 
Specifically, we show a smooth bound that matches the number of interventions needed for verification of the causal relationships in a graph when the advice is completely accurate and also depends logarithmically on the distance of the advice to the ground truth.
This ensures robustness to ``bad'' advice, the number of interventions needed is asymptotically the same as in the case where no advice is available.    

Our results do rely on the widely-used assumptions of sufficiency and faithfulness as well as access to ideal iterventions; see \cref{sec:appendix-assumptions} for a more detailed discussion.
Since wrong causal conclusions may be drawn when these assumptions are violated by the data, thus it is of great interest to remove/weaken these assumptions while maintaining strong theoretical guarantees in future work.

\subsection{Interesting future directions to explore}

\paragraph{Partial advice}

In \cref{sec:partial-advice}, we explain why having a DAG $\wt{G}$ as advice may not always be possible and explain how to extend our results to the setting of \emph{partial advice} by considering the worst case DAG consistent with the given partial advice $\cA$.
The question is whether one can design and analyze a better algorithm than a trivial $\max_{\wt{G} \in \cA}$.
For example, maybe one could pick $\wt{G} = \argmin_{G \in \cA} \max_{H \in [G^*]} \psi(H, G)$?
The motivation is as follows: If $[G^*]$ is a disc in $\mathbb{R}^2$ and $\psi$ is the Euclidean distance, then $\wt{G}$ should be the point within $\cA$ that is closest to the center of the disc. Note that we can only optimize with respect to $\max_{H \in [G^*]}$ because we do not actually know $G^*$.
It remains to be seen if such an object can be efficiently computed and whether it gives a better bound than $\max_{\wt{G} \in \cA}$.

\paragraph{Incorporating expert confidence}

The notion of ``confidence level'' and ``correctness'' of an advice are orthogonal issues -- an expert can be confidently wrong.
In this work, we focused on the case where the expert is fully confident but may be providing imperfect advice.
It is an interesting problem to investigate how to principally handle both issues simultaneously; for example, what if the advice is not a DAG $\wt{G} \in [G^*]$ in the essential graph but a distribution over all DAGs in $[G^*]$?
Bayesian ideas may apply here.

\paragraph{Better analysis?}

Empirically, we see that the log factor is a rather loose upper bound both for blind search and advice search.
\emph{Can there be a tighter analysis?}
\cite{choo2022verification} tells us that $\Omega(\log n \cdot \nu_1(G^*))$ is unavoidable when $\cE(G^*)$ is a path on $n$ vertices with $\nu_1(G^*) = 1$ but this is a special class of graphs.
What if $\nu_1(G^*) > 1$?
Can we give tighter bounds in other graph parameters?
Furthermore, in some preliminary testing, we observed that implementing tweak 2 or ignoring it yield similar empirical performance and we wonder if there is a tighter analysis without tweak 2 that has similar guarantees.

\section*{Acknowledgements}
This research/project is supported by the National Research Foundation, Singapore under its AI Singapore Programme (AISG Award No: AISG-PhD/2021-08-013).
TG and AB are supported by the National Research Foundation Fellowship for AI (Award NRF-NRFFAI-0002), an Amazon Research Award, and a Google South \& Southeast Asia Research Award.
Part of this work was done while the authors were visiting the Simons Institute for the Theory of Computing.
We would like to thank Kirankumar Shiragur and Joy Qiping Yang for valuable feedback and discussions.

\bibliography{refs}
\bibliographystyle{alpha}

\newpage
\appendix

\section{Remark about assumptions}
\label{sec:appendix-assumptions}

Under \emph{causal sufficiency}, there are no hidden confounders (i.e.\ unobserved common causes to the observed variables).
While causal sufficiency may not always hold, it is still a reasonable assumption to make in certain applications such as studying gene regulatory networks (e.g.\ see \cite{wang2017permutation}).

\emph{Faithfulness} assumes that independencies that occur in the data do not occur due to ``cancellations'' in the functional relationships, but rather due to the causal graph structure.
It is known \cite{meek1995strong,spirtes2000causation} that, under many natural parameterizations and settings, the set of unfaithful parameters for any given causal DAG has zero Lebesgue measure (i.e.\ faithfulness holds; see also Section 3.2 of \cite{zhang2002strong} for a discussion about faithfulness).
However, one should be aware that the faithfulness assumption may be violated in reality \cite{andersen2013expect,uhler2013geometry}, especially in the presence of sampling errors in the finite sample regime.

\emph{Ideal interventions} assume hard interventions (forcefully setting a variable value) and the ability to obtain as many interventional samples as desired, ensuring that we always recover the directions of all edges cut by interventions.
Without this assumption, we may fail to correctly infer some arc directions and our algorithms will only succeed with some success probability.

Our assumption that the given expert advice is consistent with observational essential graph is purely for simplicity and can be removed by deciding which part of the given advice to discard so that the remaining advice is consistent.
However, we feel that deciding which part of the inconsistent advice to discard will unnecessarily complicate our algorithmic contributions without providing any useful insights, and thus we made such an assumption.
\section{Additional Preliminaries}\label{sec:appprelim}

For any set $A$, we denote its powerset by $2^A$.
We write $\{1, \ldots, n\}$ as $[n]$ and hide absolute constant multiplicative factors in $n$ using standard asymptotic notations $\cO(\cdot)$, $\Omega(\cdot)$, and $\Theta(\cdot)$.
The indicator function $\mathbbm{1}_{\text{predicate}}$ is 1 if the predicate is true and 0 otherwise.
Throughout, we use $G^*$ to denote the (unknown) ground truth DAG, its Markov equivalence class by $[G^*]$ and the corresponding essential graph by $\cE(G^*)$.
We write $A \,\dot\cup\, B$ and $A \setminus B$ to represent the disjoint union and set difference of two sets $A$ and $B$ respectively.

\subsection{Graph basics}

We consider partially oriented graphs without parallel edges.

Let $G = (V,E)$ be a graph on $|V| = n$ nodes/vertices where $V(G)$, $E(G)$, and $A(G) \subseteq E(G)$ denote nodes, edges, and arcs of $G$ respectively.
The graph $G$ is said to be fully oriented if $A(G) = E(G)$, fully unoriented if $A(G) = \emptyset$, and partially oriented otherwise.
For any subset $V' \subseteq V$ and $E' \subseteq E$, we use $G[V']$ and $G[E']$ to denote the node-induced and edge-induced subgraphs respectively.
We write $u \sim v$ to denote that two nodes $u,v \in V$ are connected in $G$, and write $u \to v$ or $u \gets v$ when specifying a certain direction.
The \emph{skeleton} $\skel(G)$ refers to the underlying graph where all edges are made undirected.
A \emph{v-structure} in $G$ refers to a collection of three distinct vertices $u,v,w \in V$ such that $u \to v \gets w$ and $u \not\sim w$.
A directed cycle refers to a sequence of $k \geq 3$ vertices where $v_1 \to v_2 \to \ldots \to v_k \to v_1$.
An \emph{acyclic completion / consistent extension} of a partially oriented graph refers to an assignment of edge directions to the unoriented edges $E(G) \setminus A(G)$ such that the resulting fully oriented graph has no directed cycles.

Suppose $G = (V,E)$ is fully unoriented.
For vertices $u,v \in V$, subset of vertices $V' \subseteq V$ and integer $r \geq 0$, define $\dist_{G}(u,v)$ as the shortest path length between $u$ and $v$, $\dist_{G}(V',v) = \min_{u \in V'} \dist_{G}(u,v)$, and $N_{G}^r(V') = \{v \in V: \dist_{G}(v,V') \leq r \} \subseteq V$ as the set of vertices that are $r$-hops away from $V'$, i.e.\ $r$-hop neighbors of $V'$.
We omit the subscript $G$ when it is clear from context.

Suppose $G = (V,E)$ is fully oriented.
For any vertex $v \in V$, we write $\Pa(v), \Anc(v), \Des(v)$ to denote the parents, ancestors and descendants of $v$ respectively and we write $\Des[v] = \Des(v) \cup \{v\}$ and $\Anc[v] = \Anc(v) \cup \{v\}$ to include $v$ itself.
We define $\Ch(v) \subseteq \Des(v)$ as the set of \emph{direct children} of $v$, that is, for any $w \in \Ch(v)$ there does \emph{not} exists $z \in V \setminus \{v,w\}$ such that $z \in \Des(v) \cap \Anc(w)$.
Note that, $\Ch(v) \subseteq \{w \in V : v \to w\} \subseteq \Des(v)$.

\subsection{Causal graph basics}
\label{sec:causal-graph-basics}

A directed acyclic graph (DAG) is a fully oriented graph without directed cycles.
By representing random variables by nodes, DAGs are commonly used as graphical causal models \cite{pearl2009causality}, where the joint probability density $f$ factorizes according to the Markov property: $f(v_1, \ldots, v_n) = \prod_{i=1}^n f(v_i \mid pa(v))$, where $pa(v)$ denotes the values taken by $v$'s parents.
One can associate a (not necessarily unique) \emph{valid permutation / topological ordering} $\pi : V \to [n]$ to any (partially directed) DAG such that oriented arcs $(u,v)$ satisfy $\pi(u) < \pi(v)$ and unoriented arcs $\{u,v\}$ can be oriented as $u \to v$ without forming directed cycles when $\pi(u) < \pi(v)$.

For any DAG $G$, we denote its Markov equivalence class (MEC) by $[G]$ and essential graph by $\cE(G)$.
DAGs in the same MEC have the same skeleton and the essential graph is a partially directed graph such that an arc $u \to v$ is directed if $u \to v$ in \emph{every} DAG in MEC $[G]$, and an edge $u \sim v$ is undirected if there exists two DAGs $G_1, G_2 \in [G]$ such that $u \to v$ in $G_1$ and $v \to u$ in $G_2$.
It is known that two graphs are Markov equivalent if and only if they have the same skeleton and v-structures \cite{verma1990,andersson1997characterization}.
In fact, the essential graph $\cE(G)$ can be computed from $G$ by orienting v-structures in the skeleton $\skel(G)$ and applying Meek rules (see \cref{sec:appendix-meek-rules}).
An edge $u \to v$ is a \emph{covered edge} \cite{chickering2013transformational} if $\Pa(u) = \Pa(v) \setminus \{u\}$.
We use $\cC(G) \subseteq E(G)$ to denote the set of covered edges of $G$.
The following is a well-known result relating covered edges and MECs.

\begin{lemma}[\cite{chickering2013transformational}]
\label{lem:sequence}
If $G$ and $G'$ belong in the same MEC if and only if there exists a sequence of covered edge reversals to transform between them.
\end{lemma}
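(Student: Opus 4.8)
The plan is to prove the two directions separately, the easy direction first. For the ``if'' direction, I would show that reversing a single covered edge $u \to v$ in a DAG $G$ produces a DAG $G'$ in the same MEC. First I would check acyclicity: if reversing $u\to v$ created a directed cycle, there would be a directed path from $v$ to $u$ in $G$ avoiding the edge $u\to v$, of length $\geq 2$; its first vertex $w$ after $v$ satisfies $w\in\Pa(v)\setminus\{u\}=\Pa(u)\setminus\{u\}$, so $w\to u$ in $G$, contradicting $w$ lying strictly between $v$ and $u$ on a directed path. Then I would verify that $G$ and $G'$ have the same skeleton (obvious) and the same v-structures: because $\Pa(u)=\Pa(v)\setminus\{u\}$, the parent sets of $u$ and $v$ are ``interchangeable'' up to the edge between them, so no new v-structure of the form $x\to u \gets \cdot$, $x \to v \gets \cdot$, $\cdot \to x \gets u$, or $\cdot \to x \gets v$ is created or destroyed; a short case analysis over which of $u,v$ is the collider handles all cases. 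By \cite{verma1990,andersson1997characterization}, $G'\in[G]$. Chaining single reversals gives the full ``if'' direction by induction on the length of the sequence.

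For the ``only if'' direction, suppose $G$ and $G'$ lie in the same MEC; I would construct the sequence of covered-edge reversals by induction, say on the number of edges oriented differently in $G$ versus $G'$. The standard approach (Chickering) is: pick a valid topological ordering $\pi'$ of $G'$, and among the edges of $G$ that disagree with $G'$, take the one $y\to x$ (in $G$) such that $x$ is $\pi'$-minimal, and among those the one with $y$ of maximum $\pi'$-value; then argue (i) $y \to x$ is a covered edge of $G$, using the fact that $G,G'$ share skeleton and v-structures together with the maximality/minimality choices, and (ii) reversing it strictly decreases the disagreement count. Step (i) is where one leverages Markov equivalence: if $y\to x$ were not covered, there would be a parent of one endpoint not shared with the other, and one traces through whether that parent is adjacent to the other endpoint — a shared edge forces a contradiction with the choice of $y\to x$ as the ``latest reversible disagreement,'' while a non-edge forces a v-structure present in one graph but not the other. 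Applying the ``if'' direction shows each intermediate graph stays in $[G]=[G']$, and the induction terminates at $G'$.

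The main obstacle I expect is step (i) of the ``only if'' direction: showing the carefully-chosen disagreeing edge is genuinely a covered edge. This requires juggling three things at once — the topological order $\pi'$ of $G'$, the shared skeleton, and the shared v-structures — and ruling out both the ``extra parent is adjacent'' and ``extra parent is non-adjacent'' sub-cases without circularity in the induction. Establishing the right extremal choice (which endpoint to minimize, which to maximize) so that both acyclicity of the reversal and the covered-edge property come out cleanly is the delicate combinatorial heart of the argument; the rest is bookkeeping. Since this is a known result (\cite{chickering2013transformational}), I would either cite it directly or reproduce Chickering's induction essentially verbatim.
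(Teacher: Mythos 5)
The paper itself gives no proof of this lemma---it cites \cite{chickering2013transformational} and merely reproduces the transformation procedure as \cref{alg:repeatedfindedge} in the appendix---and your plan, which sketches Chickering's standard two-direction argument (reversal of a covered edge preserves skeleton, v-structures, and acyclicity; the converse via the extremal-choice induction) and then explicitly defers to that same reference, is essentially the same approach and is sound. Two small corrections if you do write it out: in the acyclicity check the offending path runs from $u$ to $v$ in $G$ with length at least $2$ (a directed path from $v$ to $u$ cannot exist in the DAG $G$ at all), and the contradiction comes from the last vertex before $v$ on that path, which by the covered-edge condition is also a parent of $u$; moreover, the selection rule in \cref{alg:repeatedfindedge} uses a valid ordering of the \emph{current source} graph (the $\pi$-minimal vertex $y$ with a differing incoming arc, then the $\pi$-maximal such parent $x$), not an ordering of the target, so follow that convention if you reproduce Chickering's induction verbatim rather than citing it.
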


\section{Additional Related Works on Causal Structure Learning}
\label{sec:apprelated}

Constraint-based algorithms, such as ours, use information about conditional independence relations to identify the underlying structure.
From purely observational data, the PC \cite{spirtes2000causation}, FCI \cite{spirtes2000causation} and RFCI algorithms \cite{colombo2012learning} have been shown to consistently recover the essential graph, assuming causal sufficiency, faithfulness, and i.i.d.\ samples.
The problem of recovering the DAG using constraints from interventional data was first studied by \cite{eberhardt2006n, eberhardt2005number, eberhardt2007causation}.
Many recent works \cite{hu2014randomized, shanmugam2015learning, kocaoglu2017cost, lindgren2018experimental,greenewald2019sample,squires2020active,choo2022verification, choo2023subset} have followed up on these themes.

Score-based methods maximize a particular score function over the space of graphs. For observational data, the GES algorithm \cite{chickering2002optimal} uses the BIC to iteratively add edges.
Extending the GES, \cite{hauser2012characterization} proposed the GIES algorithm that uses passive interventional data to orient more edges.
Hybrid methods, like \cite{solus2021consistency} for observational and \cite{wang2017permutation} for interventional data, use elements of both approaches. 

\section{Meek rules}
\label{sec:appendix-meek-rules}

Meek rules are a set of 4 edge orientation rules that are sound and complete with respect to any given set of arcs that has a consistent DAG extension \cite{meek1995}.
Given any edge orientation information, one can always repeatedly apply Meek rules till a unique fixed point (where no further rules trigger) to maximize the number of oriented arcs.

\begin{definition}[The four Meek rules \cite{meek1995}, see \cref{fig:meek-rules} for an illustration]
\hspace{0pt}
\begin{description}
    \item [R1] Edge $\{a,b\} \in E(G) \setminus A(G)$ is oriented as $a \to b$ if $\exists$ $c \in V$ such that $c \to a$ and $c \not\sim b$.
    \item [R2] Edge $\{a,b\} \in E(G) \setminus A(G)$ is oriented as $a \to b$ if $\exists$ $c \in V$ such that $a \to c \to b$.
    \item [R3] Edge $\{a,b\} \in E(G) \setminus A(G)$ is oriented as $a \to b$ if $\exists$ $c,d \in V$ such that $d \sim a \sim c$, $d \to b \gets c$, and $c \not\sim d$.
    \item [R4] Edge $\{a,b\} \in E(G) \setminus A(G)$ is oriented as $a \to b$ if $\exists$ $c,d \in V$ such that $d \sim a \sim c$, $d \to c \to b$, and $b \not\sim d$.
\end{description}
\end{definition}

\begin{figure}[htbp]
\centering
\resizebox{\linewidth}{!}{%
\begin{tikzpicture}
%
%
\node[draw, circle, inner sep=2pt] at (0,0) (R1a-before) {\small $a$};
\node[draw, circle, inner sep=2pt, right=of R1a-before] (R1b-before) {\small $b$};
\node[draw, circle, inner sep=2pt, above=of R1a-before](R1c-before) {\small $c$};
\draw[thick, -stealth] (R1c-before) -- (R1a-before);
\draw[thick] (R1a-before) -- (R1b-before);

\node[draw, circle, inner sep=2pt] at (3,0) (R1a-after) {\small $a$};
\node[draw, circle, inner sep=2pt, right=of R1a-after] (R1b-after) {\small $b$};
\node[draw, circle, inner sep=2pt, above=of R1a-after](R1c-after) {\small $c$};
\draw[thick, -stealth] (R1c-after) -- (R1a-after);
\draw[thick, -stealth] (R1a-after) -- (R1b-after);

\node[single arrow, draw, minimum height=2em, single arrow head extend=1ex, inner sep=2pt] at (2.2,0.75) (R1arrow) {};
\node[above=5pt of R1arrow] {\footnotesize R1};

%
%
\node[draw, circle, inner sep=2pt] at (6,0) (R2a-before) {\small $a$};
\node[draw, circle, inner sep=2pt, right=of R2a-before] (R2b-before) {\small $b$};
\node[draw, circle, inner sep=2pt, above=of R2a-before](R2c-before) {\small $c$};
\draw[thick, -stealth] (R2a-before) -- (R2c-before);
\draw[thick, -stealth] (R2c-before) -- (R2b-before);
\draw[thick] (R2a-before) -- (R2b-before);

\node[draw, circle, inner sep=2pt] at (9,0) (R2a-after) {\small $a$};
\node[draw, circle, inner sep=2pt, right=of R2a-after] (R2b-after) {\small $b$};
\node[draw, circle, inner sep=2pt, above=of R2a-after](R2c-after) {\small $c$};
\draw[thick, -stealth] (R2a-after) -- (R2c-after);
\draw[thick, -stealth] (R2c-after) -- (R2b-after);
\draw[thick, -stealth] (R2a-after) -- (R2b-after);

\node[single arrow, draw, minimum height=2em, single arrow head extend=1ex, inner sep=2pt] at (8.2,0.75) (R2arrow) {};
\node[above=5pt of R2arrow] {\footnotesize R2};

%
%
\node[draw, circle, inner sep=2pt] at (12,0) (R3d-before) {\small $d$};
\node[draw, circle, inner sep=2pt, above=of R3d-before](R3a-before) {\small $a$};
\node[draw, circle, inner sep=2pt, right=of R3a-before] (R3c-before) {\small $c$};
\node[draw, circle, inner sep=2pt, right=of R3d-before](R3b-before) {\small $b$};
\draw[thick, -stealth] (R3c-before) -- (R3b-before);
\draw[thick, -stealth] (R3d-before) -- (R3b-before);
\draw[thick] (R3c-before) -- (R3a-before) -- (R3d-before);
\draw[thick] (R3a-before) -- (R3b-before);

\node[draw, circle, inner sep=2pt] at (15,0) (R3d-after) {\small $d$};
\node[draw, circle, inner sep=2pt, above=of R3d-after](R3a-after) {\small $a$};
\node[draw, circle, inner sep=2pt, right=of R3a-after] (R3c-after) {\small $c$};
\node[draw, circle, inner sep=2pt, right=of R3d-after](R3b-after) {\small $b$};
\draw[thick, -stealth] (R3c-after) -- (R3b-after);
\draw[thick, -stealth] (R3d-after) -- (R3b-after);
\draw[thick] (R3c-after) -- (R3a-after) -- (R3d-after);
\draw[thick, -stealth] (R3a-after) -- (R3b-after);

\node[single arrow, draw, minimum height=2em, single arrow head extend=1ex, inner sep=2pt] at (14.2,0.75) (R3arrow) {};
\node[above=5pt of R3arrow] {\footnotesize R3};

%
%
\node[draw, circle, inner sep=2pt] at (18,0) (R4a-before) {\small $a$};
\node[draw, circle, inner sep=2pt, above=of R4a-before](R4d-before) {\small $d$};
\node[draw, circle, inner sep=2pt, right=of R4d-before] (R4c-before) {\small $c$};
\node[draw, circle, inner sep=2pt, right=of R4a-before](R4b-before) {\small $b$};
\draw[thick, -stealth] (R4d-before) -- (R4c-before);
\draw[thick, -stealth] (R4c-before) -- (R4b-before);
\draw[thick] (R4d-before) -- (R4a-before) -- (R4c-before);
\draw[thick] (R4a-before) -- (R4b-before);

\node[draw, circle, inner sep=2pt] at (21,0) (R4a-after) {\small $a$};
\node[draw, circle, inner sep=2pt, above=of R4a-after](R4d-after) {\small $d$};
\node[draw, circle, inner sep=2pt, right=of R4d-after] (R4c-after) {\small $c$};
\node[draw, circle, inner sep=2pt, right=of R4a-after](R4b-after) {\small $b$};
\draw[thick, -stealth] (R4d-after) -- (R4c-after);
\draw[thick, -stealth] (R4c-after) -- (R4b-after);
\draw[thick] (R4d-after) -- (R4a-after) -- (R4c-after);
\draw[thick, -stealth] (R4a-after) -- (R4b-after);

\node[single arrow, draw, minimum height=2em, single arrow head extend=1ex, inner sep=2pt] at (20.2,0.75) (R4arrow) {};
\node[above=5pt of R4arrow] {\footnotesize R4};

\draw[thick] (5.25,1.75) -- (5.25,-0.25);
\draw[thick] (11.25,1.75) -- (11.25,-0.25);
\draw[thick] (17.25,1.75) -- (17.25,-0.25);
\end{tikzpicture}
}
\caption{An illustration of the four Meek rules}
\label{fig:meek-rules}
\end{figure}
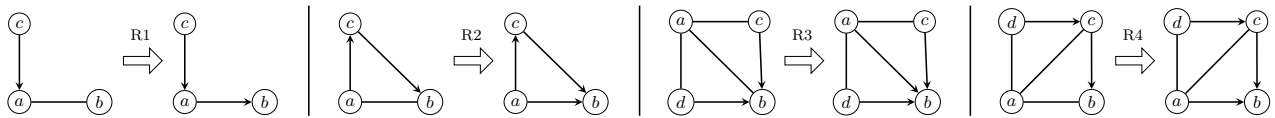

There exists an algorithm (Algorithm 2 of \cite{wienobst2021extendability}) that runs in $\cO(d \cdot | E(G) |)$ time and computes the closure under Meek rules, where $d$ is the degeneracy of the graph skeleton\footnote{A $d$-degenerate graph is an undirected graph in which every subgraph has a vertex of degree at most $d$. Note that the degeneracy of a graph is typically smaller than the maximum degree of the graph.}.

\section{Imperfect \emph{partial} advice via MPDAGs}
\label{sec:partial-advice}

In the previous sections, we discuss advice that occurs in the form of a DAG $\wt{G} \in [G^*]$.
However, this may be too much to ask for in certain situations.
For example:
\begin{itemize}
    \item The Markov equivalence class may be too large for an expert to traverse through and propose an advice DAG.
    \item The expert only has opinions about a subset of a very large causal graph involving millions of nodes / edges.
\end{itemize}

As discussed in \cref{sec:expert-advice-prelim}, we can formulate such partial advice as MPDAGs.
Given a MPDAG as expert advice, a natural attempt would be to sample a DAG $\wt{G}$ from it to use the full advice.
Unfortunately, it is \#P-complete even to count the number of DAGs consistent with a given MPDAG in general \cite{wienobst2021polynomial} and we are unaware of any efficient way to sample uniformly at random from it.
Instead, we propose to pick an arbitrary DAG $\wt{G}$ as advice within the given MPDAG: pick any unoriented edge, orient arbitrarily, apply Meek rules, repeat until fully oriented.
The following result follows naturally by maximizing over all possible DAGs consistent with the given partial advice.

\begin{restatable}{mytheorem}{searchwithpartialadvice}
\label{thm:search-with-partial-advice}
Fix an essential graph $\cE(G^*)$ with an unknown underlying ground truth DAG $G^*$.
Given a set $\cA$ of DAGs consistent with the given partial advice and intervention set bound $k \geq 1$, there exists a deterministic polynomial time algorithm that computes an intervention set $\cI$ adaptively such that $\cE_{\cI}(G^*) = G^*$, and $|\cI|$ has size\\
1. $\cO( \max\{1, \log \max_{\wt{G} \in \cA} \psi(G^*, \wt{G}) \} \cdot \nu_1(G^*))$\\
2. $\cO( \max\{1, \log \max_{\wt{G} \in \cA} \psi(G^*, \wt{G}) \} \cdot \log k \cdot \nu_k(G^*))$\\
when $k = 1$ and $k > 1$ respectively.
\end{restatable}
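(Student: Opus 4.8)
The plan is to reduce \cref{thm:search-with-partial-advice} to \cref{thm:search-with-advice} via the ``pick an arbitrary consistent DAG'' strategy described just before the statement. Concretely, given the partial advice (an MPDAG), I would first repeatedly orient an arbitrary unoriented edge, apply Meek rules, and iterate until a fully oriented DAG $\wt{G}$ is obtained; by soundness/completeness of Meek rules (\cref{sec:appendix-meek-rules}) this terminates in polynomial time and produces some $\wt{G} \in \cA$, where $\cA$ is the set of all DAGs consistent with the partial advice. Since the partial advice is assumed consistent with $\cE(G^*)$ and $G^*$ is one completion, we have $\wt{G} \in [G^*]$, so \cref{thm:search-with-advice} applies verbatim to this $\wt{G}$.

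The second step is to feed $\wt{G}$ into \cref{alg:adaptive-search-with-advice-algo} and invoke \cref{thm:search-with-advice}, which yields an adaptively computed intervention set $\cI$ with $\cE_{\cI}(G^*) = G^*$ and $|\cI|$ of size $\cO(\max\{1, \log \psi(G^*, \wt{G})\} \cdot \nu_1(G^*))$ when $k=1$ and $\cO(\max\{1, \log \psi(G^*, \wt{G})\} \cdot \log k \cdot \nu_k(G^*))$ when $k>1$. The final step is simply to bound $\psi(G^*, \wt{G}) \leq \max_{\wt{G}' \in \cA} \psi(G^*, \wt{G}')$, since $\wt{G} \in \cA$; substituting this into the bound from \cref{thm:search-with-advice} and using monotonicity of $\log$ and of $\max\{1, \cdot\}$ gives exactly the claimed guarantees. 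Polynomial running time is preserved because the completion step and the call to \cref{alg:adaptive-search-with-advice-algo} are each polynomial.

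I do not anticipate a genuine obstacle here: the theorem is essentially a corollary obtained by ``maximizing over all possible DAGs consistent with the given partial advice,'' exactly as the text preceding the statement indicates. The only points requiring mild care are (i) verifying that the arbitrary-orientation-plus-Meek-closure procedure always yields a DAG in $[G^*]$ rather than getting stuck or producing an extension outside the MEC --- this follows from the fact that Meek rules never orient an edge in a way inconsistent with some valid DAG extension, and that every MPDAG built from advice consistent with $\cE(G^*)$ has $G^*$ itself as a consistent completion, so $\cA \neq \emptyset$ and $G^* \in \cA \subseteq [G^*]$; and (ii) noting that the algorithm's guarantees in \cref{thm:search-with-advice} hold for \emph{any} choice of $\wt{G} \in [G^*]$, including the one we happened to pick, so no adversarial argument is needed. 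Thus the ``hard part,'' such as it is, is purely bookkeeping: confirming that the $\psi$-bound we actually incur is at most the stated $\max_{\wt{G} \in \cA} \psi(G^*, \wt{G})$, which is immediate.
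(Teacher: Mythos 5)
Your proposal is correct and follows essentially the same route as the paper: the paper's own proof is a one-liner that applies \cref{thm:search-with-advice} to an arbitrarily chosen DAG $\wt{G}\in\cA$ (obtained exactly by the orient-arbitrarily-plus-Meek-closure procedure described in \cref{sec:partial-advice}) and then bounds $\psi(G^*,\wt{G})$ by $\max_{\wt{G}\in\cA}\psi(G^*,\wt{G})$. Your additional bookkeeping about why the completion lies in $[G^*]$ is a fine (and harmless) elaboration of the same argument.
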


\section{Technical Overview for \cref{thm:two-approx-ratio}}
\label{sec:two-overview}

As discussed in \cref{sec:preliminaries}, it suffices to prove \cref{thm:two-approx-ratio} with respect to moral DAGs.

Our strategy for proving \cref{thm:two-approx-ratio} is to consider two arbitrary DAGs $G_s$ (source) and $G_t$ (target) in the same equivalence class and transform a verifying set for $G_s$ into a verifying set for $G_t$ using \cref{lem:sequence} (see \cref{alg:repeatedfindedge} for the explicit algorithm\footnote{Lemma 2 of \cite{chickering2013transformational} guarantees that $x \to y$ is a covered edge of the current $G_s$ whenever step 9 is executed.}).
Instead of proving \cref{thm:two-approx-ratio} by analyzing the exact sequence of covered edges produced by \cref{alg:repeatedfindedge}\footnote{The correctness of \cref{alg:repeatedfindedge} is given in \cite{chickering2013transformational} where the key idea is to show that $x \to y$ found in this manner is a covered edge. This is proven in Lemma 2 of \cite{chickering2013transformational}.} when transforming between the DAGs $G_{\min} = \argmin_{G \in [G^*]} \nu_1(G)$ and $G_{\max} = \argmax_{G \in [G^*]} \nu_1(G)$, we will prove something more general.

\begin{algorithm}[ht]
\caption{\cite{chickering2013transformational}: Transforms between two DAGs within the same MEC via covered edge reversals}
\label{alg:repeatedfindedge}
\begin{algorithmic}[1]
    \Statex \textbf{Input}: Two DAGs $G_s = (V, E_s)$ and $G_t = (V, E_t)$
    \Statex \textbf{Output}: A sequence $\texttt{seq}$ of covered edge reversals that transforms $G_s$ to $G_t$
    \State $\texttt{seq} \gets \emptyset$
    \While{$G_s \neq G_t$}
        \State Fix an arbitrary valid ordering $\pi$ for $G_s$.
        \State Let $A \gets A(G_s) \setminus A(G_t)$ be the set of differing arcs.
        \State Let $y \gets \argmin_{y \;\in\; V \;:\; \Pa_{A}(y) \neq \emptyset} \{ \pi(y) \}$.
        \State Let $x \gets \argmax_{z \;\in\; \Pa_{A}(y)} \{ \pi(z) \}$.
        \State Add $x \to y$ to $\texttt{seq}$. \Comment{\cite[Lemma 2]{chickering2013transformational}: $x \to y \in \cC(G_s)$}
        \State Update $G_s$ by replacing $x \to y$ with $y \to x$.
    \EndWhile
	\State \textbf{return} $\texttt{seq}$
\end{algorithmic}
\end{algorithm}

Observe that taking both endpoints of any maximal matching of covered edges is a valid verifying set that is at most \emph{twice} the size of the minimum verifying set.
This is because maximal matching is a 2-approximation to the minimum vertex cover.
Motivated by this observation, our proof for \cref{thm:two-approx-ratio} uses the following transformation argument (\cref{lem:same-mm-size}): for two DAGs $G$ and $G'$ that differ only on the arc direction of a single covered edge $x \sim y$, we show that given a conditional-root-greedy (CRG) maximal matching\footnote{A special type of maximal matching (see \cref{def:greedymm}).} on the covered edges of $G$, we can obtain another CRG maximal matching \emph{of the same size} on the covered edges of $G'$, after reversing $x \sim y$ and transforming $G$ to $G'$.

So, starting from $G_s$, we compute a CRG maximal matching, then we apply the transformation argument above on the sequence of covered edges given by \cref{alg:repeatedfindedge} until we get a CRG maximal matching of $G_t$ \emph{of the same size}.
Thus, we can conclude that the minimum vertex cover sizes of $G_s$ and $G_t$ differ by a factor of at most two.
This argument holds for \emph{any} pair of DAGs $(G_s, G_t)$ from the same MEC.

We now define what is a conditional-root-greedy (CRG) maximal matching.
As the set of covered edges $\cC(G)$ of any DAG $G$ induces a forest (see \cref{thm:verification-characterization}), we define the CRG maximal matching using a particular greedy process on the tree structure of $\cC(G)$.
The CRG maximal matching is unique with respect to a fixed valid ordering $\pi$ of $G$ and subset $S$.
We will later consider CRG maximal matchings with $S = A(G_s) \cap A(G_t)$, where the arc set $S$ \emph{remains unchanged throughout the entire transformation process}.

\begin{definition}[Conditional-root-greedy (CRG) maximal matching]
\label{def:greedymm}
Given a DAG $G = (V,E)$ with a valid ordering $\pi_{G}$ and a subset of edges $S \subseteq E$, we define the conditional-root-greedy (CRG) maximal matching $M_{G, \pi_{G}, S}$ as the \emph{unique} maximal matching on $\cC(G)$ computed via \cref{alg:condition-root-greedy}: greedily choose arcs $x \to y$ where the $x$ has no incoming arcs by minimizing $\pi_{G}(y)$, conditioned on \emph{favoring arcs outside of $S$}.
\end{definition}

\begin{algorithm}[htbp]
\caption{Conditional-root-greedy maximal matching}
\label{alg:condition-root-greedy}
\begin{algorithmic}[1]
    \Statex \textbf{Input}: A DAG $G = (V,E)$, a valid ordering $\pi_{G}$, a subset of edges $S \subseteq E$
    \Statex \textbf{Output}: A CRG maximal matching $M_{G, \pi_{G}, S}$
    \State Initialize $M_{G, \pi_{G}, S} \gets \emptyset$ and $C \gets \cC(G)$
    \While{$C \neq \emptyset$}
        \State $x \gets \argmin_{z \;\in\; \{ u \in V \;\mid\; u \to v \in C \}} \{ \pi_{G}(z) \}$ \Comment{$x$ is a root (i.e.\ no incoming arcs)}
        \State $y \gets \argmin_{z \in V \;:\; x \to z \;\in\; C} \{ \pi_{G}(z) + n^2 \cdot \mathbbm{1}_{x \to z \in S} \}$
        \State Add the arc $x \to y$ to $M_{G, \pi_{G}, S}$
        \State Remove all arcs with $x$ or $y$ as endpoints from $C$
    \EndWhile
	\State \textbf{return} $M_{G, \pi_{G}, S}$
\end{algorithmic}
\end{algorithm}

To prove the transformation argument (\cref{lem:same-mm-size}), we need to first understand how the status of covered edges evolve when we perform a single edge reversal.
The following lemma may be of independent interest beyond this work.

\begin{restatable}[Covered edge status changes due to covered edge reversal]{mylemma}{statuschanges}
\label{lem:status-changes}
Let $G^*$ be a moral DAG with MEC $[G^*]$ and consider any DAG $G \in [G^*]$.
Suppose $G = (V,E)$ has a covered edge $x \to y \in \cC(G) \subseteq E$ and we reverse $x \to y$ to $y \to x$ to obtain a new DAG $G' \in [G^*]$.
Then, all of the following statements hold:
\begin{enumerate}
    \item $y \to x \in \cC(G')$. Note that this is the covered edge that was reversed.
    \item If an edge $e$ does not involve $x$ or $y$, then $e \in \cC(G)$ if and only if $e \in \cC(G')$.
    \item If $x \in \Ch_{G}(a)$ for some $a \in V \setminus \{x,y\}$, then $a \to x \in \cC(G)$ if and only if $a \to y \in \cC(G')$.
    \item If $b \in \Ch_{G}(y)$ and $x \to b \in E(G)$ for some $b \in V \setminus \{x,y\}$, then $y \to b \in \cC(G)$ if and only if $x \to b \in \cC(G')$.
\end{enumerate}
\end{restatable}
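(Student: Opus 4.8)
The plan is to reduce everything to a single bookkeeping observation about how parent sets change under one covered-edge reversal, and then obtain each of the four items by unwinding the definition $u\to v\in\cC(\cdot)\iff\Pa(u)=\Pa(v)\setminus\{u\}$. First I would fix the notation $P:=\Pa_G(x)$. Since $x\to y\in\cC(G)$ we get $\Pa_G(y)=P\cup\{x\}$, and the presence of the arc $x\to y$ forces $x\notin P$ and $y\notin P$. Because reversing $x\to y$ to $y\to x$ changes the orientation of \emph{no} edge except the edge $\{x,y\}$ itself (and $G'\in[G^*]$ is again a DAG, by \cref{lem:sequence}), I obtain the four facts: $\Pa_{G'}(x)=P\cup\{y\}$; $\Pa_{G'}(y)=P$; $\Pa_{G'}(w)=\Pa_G(w)$ for every $w\in V\setminus\{x,y\}$; and any edge avoiding both $x$ and $y$ has the same orientation in $G$ and $G'$.

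With these in hand, items~1 and~2 are immediate: for item~1, $\Pa_{G'}(y)=P=(P\cup\{y\})\setminus\{y\}=\Pa_{G'}(x)\setminus\{y\}$, which is exactly $y\to x\in\cC(G')$; for item~2, an edge $u\to v$ with $u,v\notin\{x,y\}$ has the same orientation and the same endpoint parent sets in both graphs, so the covered-edge condition transfers verbatim. For item~3 I would first note that $x\in\Ch_G(a)$ gives the arc $a\to x$ in $G$, hence $a\in P$, hence the arc $a\to y$ is present in $G$ and (since $\Pa_{G'}(y)=P\ni a$) also in $G'$, so both edges in the asserted equivalence genuinely exist. Then, using $\Pa_{G'}(a)=\Pa_G(a)$ and $\Pa_{G'}(y)=P=\Pa_G(x)$, the condition $a\to x\in\cC(G)$ and the condition $a\to y\in\cC(G')$ both unwind to the single equality $\Pa_G(a)=P\setminus\{a\}$. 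Item~4 is the same computation with one extra vertex: $b\in\Ch_G(y)$ and $x\to b\in E(G)$ supply the arcs $y\to b$ and $x\to b$ in $G$, which persist in $G'$, and then using $\Pa_{G'}(b)=\Pa_G(b)$, $\Pa_G(y)=P\cup\{x\}$, and $\Pa_{G'}(x)=P\cup\{y\}$, I would check that both $y\to b\in\cC(G)$ and $x\to b\in\cC(G')$ unwind to $\Pa_G(b)=P\cup\{x,y\}$.

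I do not anticipate a real obstacle; the entire lemma is a careful-bookkeeping exercise. The two things to watch are (a) keeping track of the memberships $x,y\notin P$ and $a,b\notin\{x,y\}$ so that the set-difference manipulations $(\,\cdot\,)\setminus\{z\}$ are valid, and (b) verifying, \emph{before} invoking the covered-edge definition (which presupposes a genuine directed edge), that the relevant arc is actually present in the relevant graph. Point (b) is precisely what the hypotheses ``$x\in\Ch_G(a)$'' and ``$x\to b\in E(G)$'' are for: they furnish the missing arc in the ``backward'' direction of the equivalences in items~3 and~4, where it cannot be recovered otherwise. The three-vertex case (item~4) is the most error-prone purely in terms of set casework, so that is the step I would write out in the most detail.
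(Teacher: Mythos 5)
Your proof is correct, and it follows the same basic strategy as the paper's (record how parent sets change under the single reversal, then unwind the definition $u \to v \in \cC(\cdot) \iff \Pa(u) = \Pa(v)\setminus\{u\}$), but it handles the equivalences in items~3 and~4 differently. The paper proves the forward directions exactly as you do, by the parent-set computation, but proves the backward directions by contrapositive with a case split: assuming, say, $a \to x \notin \cC(G)$, it distinguishes a witness $u \to a$, $u \not\to x$ (which transfers to $G'$) from a witness $v \to x$, $v \not\to a$, and rules out the latter using the no-v-structure property of the moral DAG together with the direct-child hypothesis $x \in \Ch_G(a)$ (and symmetrically for item~4). You avoid this entirely by observing that, once the relevant arcs are known to exist, both sides of each equivalence reduce to one and the same identity ($\Pa_G(a) = \Pa_G(x)\setminus\{a\}$ for item~3, $\Pa_G(b) = \Pa_G(x)\cup\{x,y\}$ for item~4), so both directions follow at once; the hypotheses $x \in \Ch_G(a)$, $b \in \Ch_G(y)$, $x \to b \in E(G)$ are used only to furnish the arcs $a \to x$, $a \to y$, $y \to b$, $x \to b$ in the appropriate graphs, which you correctly check. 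This buys a shorter argument that never invokes morality or v-structure-freeness (so the lemma, as you prove it, holds for covered-edge reversals in arbitrary DAGs), whereas the paper's casework additionally establishes, as a side fact, that the second failure mode cannot occur in the moral setting — information the lemma itself does not need.
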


Using \cref{lem:status-changes}, we derive our transformation argument.

\begin{restatable}{mylemma}{samemmsize}
\label{lem:same-mm-size}
Consider two moral DAGs $G_1$ and $G_2$ from the same MEC such that they differ only in one covered edge direction: $x \to y \in E(G_1)$ and $y \to x \in E(G_2)$.

Let vertex $a$ be the direct parent of $x$ in $G_1$, if it exists.
Let $S \subseteq E$ be a subset such that $a \to x \in S$ and $x \to y, y \to x \not\in S$ (if $a$ does not exist, ignore condition $a \to x \in S$).

Suppose $\pi_{G_1}$ is an ordering for $G_1$ such that $y = \argmin_{z \;:\; x \to z \in \cC(G_1)} \{ \pi_{G_1}(z) + n^2 \cdot \mathbbm{1}_{x \to z \in S} \}$ and denote $M_{G_1, \pi_{G_1}, S}$ as the corresponding CRG maximal matching for $\cC(G_1)$.
Then, there exists an explicit modification of $\pi_{G_1}$ to $\pi_{G_2}$, and $M_{G_1, \pi_{G_1}, S}$ to a CRG maximal matching $M_{G_2, \pi_{G_2}, S}$ for $\cC(G_2)$ such that $|M_{G_1, \pi_{G_1}, S}| = |M_{G_2, \pi_{G_2}, S}|$.
\end{restatable}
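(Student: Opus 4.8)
The plan is to construct $\pi_{G_2}$ and $M_{G_2,\pi_{G_2},S}$ by an explicit surgery on $\pi_{G_1}$ and $M_{G_1,\pi_{G_1},S}$ localized to the two vertices $x$ and $y$, and then check that (a) $\pi_{G_2}$ is a valid ordering of $G_2$, (b) the surgered matching agrees, decision by decision, with the output of \cref{alg:condition-root-greedy} on $(G_2,\pi_{G_2},S)$, so it really is $M_{G_2,\pi_{G_2},S}$, and (c) it has the same size as $M_{G_1,\pi_{G_1},S}$. The simplification that makes (c) manageable is to read \cref{alg:condition-root-greedy} as a sweep over the forest $\cC(G)$: repeatedly take the $\pi$-smallest covered-edge root (a vertex with no incoming covered edge still in $C$); if it is still unmatched, match it to its penalty-minimizing covered child; then delete both endpoints. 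It then suffices to run this sweep on $G_1$ and on $G_2$ in lockstep and to show that the two runs make identical decisions everywhere outside an $\cO(1)$-sized neighbourhood of $\{x,y\}$, while inside that neighbourhood each run adds exactly one matched arc.

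First I would use \cref{lem:status-changes} to localize the change between the two covered-edge forests: item (2) gives that every covered edge avoiding $\{x,y\}$ is common to $\cC(G_1)$ and $\cC(G_2)$; item (1) turns $x\to y$ into $y\to x$; item (3) turns the at-most-one covered edge $a\to x$ into $a\to y$; and item (4) turns each covered edge $y\to b$ (which necessarily has $x\to b\in E$) into $x\to b$. Along the way I would record the structural facts, forced by $x\to y\in\cC(G_1)$, that $x$ is the unique covered parent of $y$ in $G_1$ and that $x$ has at most one covered parent; these imply that no covered child of $x$ is deleted by the sweep before $x$ itself is processed. Next I would define $\pi_{G_2}$ by the explicit local modification of moving $y$ to the slot immediately before $x$ in $\pi_{G_1}$, so that $\pi_{G_2}$ and $\pi_{G_1}$ induce the same relative order on $V\setminus\{x,y\}$ with $\pi_{G_2}(y)=\pi_{G_1}(x)$ and $\pi_{G_2}(x)=\pi_{G_1}(x)+1$; its validity for $G_2$ is checked from the fact that the only re-oriented edge is $x\sim y$ together with $\Pa_{G_1}(x)=\Pa_{G_1}(y)\setminus\{x\}$, and the slide is order-preserving so it introduces no new ties in the penalized-argmin comparisons. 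Combining the hypothesis on $\pi_{G_1}$ (that $y$ is the penalty-minimizing covered child of $x$) with the ``no covered child of $x$ deleted early'' fact produces the dichotomy that drives the rest: either $x\to y\in M_{G_1,\pi_{G_1},S}$, or $a$ exists with $a\to x\in\cC(G_1)\cap S$ and $a\to x\in M_{G_1,\pi_{G_1},S}$.

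Finally I would run the two sweeps in parallel. Outside $\{x,y\}$ the covered edges (item (2)), the orderings, and $S$ agree, so the runs take identical steps until one of them first touches $x$ or $y$, at which point I split on the dichotomy. If $x\to y\in M_{G_1,\pi_{G_1},S}$, then in the $G_2$-run $y$ is a covered-edge root reached before $x$ (any covered parent $a$ of $y$ in $G_2$ corresponds to a covered parent of $x$ in $G_1$ already matched to another child, hence deleted), $x$ is the unique covered child of $y$ in $\cC(G_2)$, and $x\to y\notin S\Rightarrow y\to x\notin S$, so $y\to x$ is added; deleting $\{x,y\}$ leaves both runs facing the identical residual forest, so they re-synchronize and the matchings differ only by the swap $x\to y\leftrightarrow y\to x$. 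If instead $a\to x\in M_{G_1,\pi_{G_1},S}$, then $a$ is reached by the $G_2$-run in the same state, its covered children are unchanged except $x$ is replaced by $y$ sitting at position $\pi_{G_2}(y)=\pi_{G_1}(x)$, so $a$ makes the analogous choice and adds $a\to y$; deleting $\{a,y\}$ leaves $x$ as a covered-edge root whose covered children in $\cC(G_2)$ are exactly the item-(4) images of $y$'s covered children in $\cC(G_1)$, and one checks the sweeps then delete the same vertices and terminate with matchings differing only by the swap $a\to x\leftrightarrow a\to y$. In either case $|M_{G_1,\pi_{G_1},S}|=|M_{G_2,\pi_{G_2},S}|$. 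I expect the main obstacle to be exactly this lockstep synchronization, and in particular the second case: one must verify that migrating a covered edge between $x$ and $y$ never changes its $S$-membership in a way that flips a penalized-argmin, and that after the surgery the residual forests genuinely coincide so that the ``everything else is identical'' bookkeeping closes. The hypotheses $a\to x\in S$ and $x\to y,y\to x\notin S$ are precisely what is needed to keep the relevant migrations benign.
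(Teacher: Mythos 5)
Your proposal is correct and follows essentially the same route as the paper's proof: both use \cref{lem:status-changes} to localize the change in covered edges, move $y$ into $x$'s position in the ordering, split on whether $x$ is matched via $a \to x$ or via $x \to y$, and argue that the greedy of \cref{alg:condition-root-greedy} makes identical choices elsewhere, so the matching changes only by swapping $a \to x \leftrightarrow a \to y$ (and possibly $y \to b \leftrightarrow x \to b$) or $x \to y \leftrightarrow y \to x$. The paper packages this as an explicit three-case construction that is then verified to be a CRG maximal matching (using a permutation that cycles $x$, $y$, and the minimal child $u$ of $x$), whereas you run the two CRG executions in lockstep -- a presentational difference rather than a genuinely different argument.
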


To be precise, given $\pi_{G_1}$, we will define $\pi_{G_2}$ in our proofs as follows:
\begin{equation}
\label{eq:pi-g2}
\pi_{G_2}(v) = 
\begin{cases}
\pi_{G_1}(x) & \text{if $v = y$}\\
\pi_{G_1}(u) & \text{if $v = x$}\\
\pi_{G_1}(y) & \text{if $v = u$}\\
\pi_{G_1}(v) & \text{else}
\end{cases}
\end{equation}

As discussed earlier, \cref{thm:two-approx-ratio} follows by picking $G_s = \argmax_{G \in [G^*]} \nu_1(G)$ and $G_t = \argmin_{G \in [G^*]} \nu_1(G)$, applying \cref{alg:repeatedfindedge} to find a transformation sequence of covered edge reversals between them, and repeatedly applying \cref{lem:same-mm-size} with the conditioning set $S = A(G_s) \cap A(G_t)$ to conclude that $G_s$ and $G_t$ have the same sized CRG maximal matchings, and thus implying that $\min_{G \in [G^*]} \nu_1(G) = \nu_1(G_s) \leq 2 \cdot \nu_1(G_t) = 2 \cdot \argmax_{G \in [G^*]} \nu_1(G)$.
Note that we keep the conditioning set $S$ \emph{unchanged throughout the entire transformation process} from $G_s$ to $G_t$.

For an illustrated example of conditional-root-greedy (CRG) maximal matchings and how we update the permutation ordering, see \cref{fig:two-approx-example} and \cref{fig:two-approx-example-without-a}.

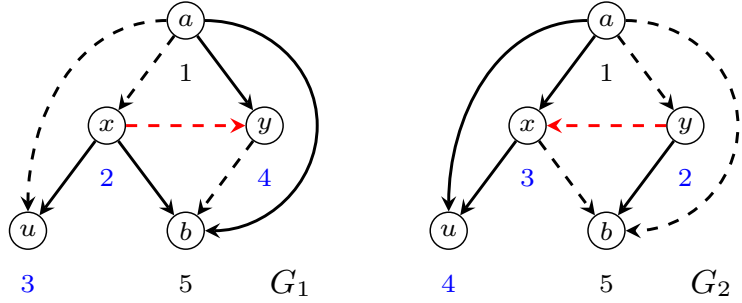
\begin{figure}[htb]
\centering
\resizebox{0.6\linewidth}{!}{%
\begin{tikzpicture}
%
%
\node[draw, circle, minimum size=10pt, inner sep=0pt] at (0,0) (a-g1) {\scriptsize $a$};
\node[draw, circle, minimum size=10pt, inner sep=0pt] at ($(a-g1) + (-0.75,-1)$) (x-g1) {\scriptsize $x$};
\node[draw, circle, minimum size=10pt, inner sep=0pt] at ($(a-g1) + (0.75,-1)$) (y-g1) {\scriptsize $y$};
\node[draw, circle, minimum size=10pt, inner sep=0pt] at ($(a-g1) + (0,-2)$) (b-g1) {\scriptsize $b$};
\node[draw, circle, minimum size=10pt, inner sep=0pt] at ($(b-g1) + (-1.5,0)$) (u-g1) {\scriptsize $u$};

\draw[thick, -stealth, dashed] (a-g1) -- (x-g1);
\draw[thick, -stealth] (a-g1) -- (y-g1);
\draw[thick, -stealth, dashed] (a-g1) to[out=180,in=90] (u-g1);
\draw[thick, -stealth] (a-g1) to[out=0,in=90] ($(y-g1) + (0.5,0)$) to[out=270,in=0] (b-g1);
\draw[thick, -stealth] (x-g1) -- (b-g1);
\draw[red, thick, -stealth, dashed] (x-g1) -- (y-g1);
\draw[thick, -stealth] (x-g1) -- (u-g1);
\draw[thick, -stealth, dashed] (y-g1) -- (b-g1);

\node[] at ($(a-g1) + (0,-0.5)$) {\scriptsize 1};
\node[blue] at ($(x-g1) + (0,-0.5)$) {\scriptsize 2};
\node[blue] at ($(y-g1) + (0,-0.5)$) {\scriptsize 4};
\node[blue] at ($(u-g1) + (0,-0.5)$) {\scriptsize 3};
\node[] at ($(b-g1) + (0,-0.5)$) {\scriptsize 5};

\node[] at ($(y-g1) + (0.25,-1.5)$) {\small $G_1$};

%
%
\node[draw, circle, minimum size=10pt, inner sep=0pt] at (4,0) (a-g2) {\scriptsize $a$};
\node[draw, circle, minimum size=10pt, inner sep=0pt] at ($(a-g2) + (-0.75,-1)$) (x-g2) {\scriptsize $x$};
\node[draw, circle, minimum size=10pt, inner sep=0pt] at ($(a-g2) + (0.75,-1)$) (y-g2) {\scriptsize $y$};
\node[draw, circle, minimum size=10pt, inner sep=0pt] at ($(a-g2) + (0,-2)$) (b-g2) {\scriptsize $b$};
\node[draw, circle, minimum size=10pt, inner sep=0pt] at ($(b-g2) + (-1.5,0)$) (u-g2) {\scriptsize $u$};

\draw[thick, -stealth] (a-g2) -- (x-g2);
\draw[thick, -stealth, dashed] (a-g2) -- (y-g2);
\draw[thick, -stealth] (a-g2) to[out=180,in=90] (u-g2);
\draw[thick, -stealth, dashed] (a-g2) to[out=0,in=90] ($(y-g2) + (0.5,0)$) to[out=270,in=0] (b-g2);
\draw[thick, -stealth, dashed] (x-g2) -- (b-g2);
\draw[thick, -stealth] (x-g2) -- (u-g2);
\draw[thick, -stealth] (y-g2) -- (b-g2);
\draw[red, thick, -stealth, dashed] (y-g2) -- (x-g2);

\node[] at ($(a-g2) + (0,-0.5)$) {\scriptsize 1};
\node[blue] at ($(x-g2) + (0,-0.5)$) {\scriptsize 3};
\node[blue] at ($(y-g2) + (0,-0.5)$) {\scriptsize 2};
\node[blue] at ($(u-g2) + (0,-0.5)$) {\scriptsize 4};
\node[] at ($(b-g2) + (0,-0.5)$) {\scriptsize 5};

\node[] at ($(y-g2) + (0.25,-1.5)$) {\small $G_2$};
\end{tikzpicture}
}
\caption{
Consider the following simple setup of two DAGs $G_1$ and $G_2$ which agree on all arc directions except for $x \to y$ in $G_1$ and $y \to x$ in $G_2$.
Dashed arcs represent the covered edges in each DAG.
The numbers below each vertex indicate the $\pi_{G_1}$ and $\pi_{G_2}$ orderings respectively.
In $G_1$, $u = \argmin_{z \in \Ch_{G_1}(x)} \{ \pi_{G_1}(z) \}$.
Observe that \cref{eq:pi-g2} modifies the ordering only for $\{x,y,u\}$ (in blue) while keeping the ordering of all other vertices fixed.
Suppose $S = A(G_1) \cap A(G_2) = \{a \to b, a \to x, a \to y, a \to u, x \to b, x \to u, y \to b\}$.
With respect to $\pi_{G_1}$ and $S$,
The conditional-root-greedy maximal matchings (see \cref{alg:condition-root-greedy}) are $M_{G_1, \pi_{G_1}, S} = \{a \to x, y \to b\}$ and $M_{G_2, \pi_{G_2}, S} = \{a \to y, x \to b\}$.
}
\label{fig:two-approx-example}
\end{figure}

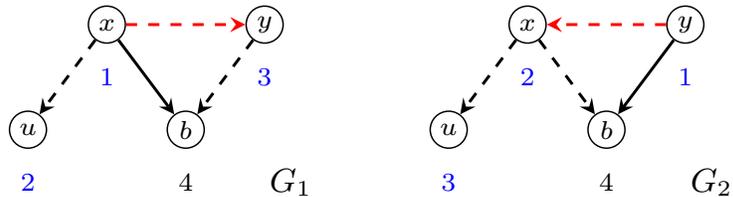
\begin{figure}[htb]
\centering
\resizebox{0.6\linewidth}{!}{%
\begin{tikzpicture}
%
%
\node[minimum size=10pt, inner sep=0pt] at (0,0) (a-g1) {};
\node[draw, circle, minimum size=10pt, inner sep=0pt] at ($(a-g1) + (-0.75,0)$) (x-g1) {\scriptsize $x$};
\node[draw, circle, minimum size=10pt, inner sep=0pt] at ($(a-g1) + (0.75,0)$) (y-g1) {\scriptsize $y$};
\node[draw, circle, minimum size=10pt, inner sep=0pt] at ($(a-g1) + (0,-1)$) (b-g1) {\scriptsize $b$};
\node[draw, circle, minimum size=10pt, inner sep=0pt] at ($(b-g1) + (-1.5,0)$) (u-g1) {\scriptsize $u$};

\draw[thick, -stealth] (x-g1) -- (b-g1);
\draw[red, thick, -stealth, dashed] (x-g1) -- (y-g1);
\draw[thick, -stealth, dashed] (x-g1) -- (u-g1);
\draw[thick, -stealth, dashed] (y-g1) -- (b-g1);

\node[blue] at ($(x-g1) + (0,-0.5)$) {\scriptsize 1};
\node[blue] at ($(y-g1) + (0,-0.5)$) {\scriptsize 3};
\node[blue] at ($(u-g1) + (0,-0.5)$) {\scriptsize 2};
\node[] at ($(b-g1) + (0,-0.5)$) {\scriptsize 4};

\node[] at ($(y-g1) + (0.25,-1.5)$) {\small $G_1$};

%
%
\node[minimum size=10pt, inner sep=0pt] at (4,0) (a-g2) {};
\node[draw, circle, minimum size=10pt, inner sep=0pt] at ($(a-g2) + (-0.75,0)$) (x-g2) {\scriptsize $x$};
\node[draw, circle, minimum size=10pt, inner sep=0pt] at ($(a-g2) + (0.75,0)$) (y-g2) {\scriptsize $y$};
\node[draw, circle, minimum size=10pt, inner sep=0pt] at ($(a-g2) + (0,-1)$) (b-g2) {\scriptsize $b$};
\node[draw, circle, minimum size=10pt, inner sep=0pt] at ($(b-g2) + (-1.5,0)$) (u-g2) {\scriptsize $u$};

\draw[thick, -stealth, dashed] (x-g2) -- (b-g2);
\draw[thick, -stealth, dashed] (x-g2) -- (u-g2);
\draw[thick, -stealth] (y-g2) -- (b-g2);
\draw[red, thick, -stealth, dashed] (y-g2) -- (x-g2);

\node[blue] at ($(x-g2) + (0,-0.5)$) {\scriptsize 2};
\node[blue] at ($(y-g2) + (0,-0.5)$) {\scriptsize 1};
\node[blue] at ($(u-g2) + (0,-0.5)$) {\scriptsize 3};
\node[] at ($(b-g2) + (0,-0.5)$) {\scriptsize 4};

\node[] at ($(y-g2) + (0.25,-1.5)$) {\small $G_2$};
\end{tikzpicture}
}
\caption{
Consider the following simple setup of two DAGs $G_3$ and $G_4$ which agree on all arc directions except for $x \to y$ in $G_3$ and $y \to x$ in $G_4$.
Dashed arcs represent the covered edges in each DAG.
The numbers below each vertex indicate the $\pi_{G_3}$ and $\pi_{G_4}$ orderings respectively.
Observe that $\cC(G_3) = \{ x \to u, x \to y, y \to b\}$.
If we define $S = A(G_3) \cap A(G_4) = \{x \to b, x \to u, y \to b\}$, we see that the conditional-root-greedy maximal matchings (see \cref{alg:condition-root-greedy}) are $M_{G_3, \pi_{G_3}, S} = \{x \to y\}$ and $M_{G_4, \pi_{G_4}, S} = \{y \to x\}$.
Note that \cref{alg:condition-root-greedy} does \emph{not} choose $x \to u \in \cC(G_1)$ despite $\pi(u) < \pi(y)$ because $x \to u \in S$, so $\pi(y) < \pi(u) + n^2$.
}
\label{fig:two-approx-example-without-a}
\end{figure}

\section{Deferred proofs}

\subsection{Preliminaries}

Our proofs rely on some existing results which we first state and explain below.

\begin{lemma}[Lemma 27 of \cite{choo2022verification}]
\label{lem:necessary}
Fix an essential graph $\cE(G^*)$ and $G \in [G^*]$.
If $\cI \subseteq 2^V$ is a verifying set, then $\cI$ separates all unoriented covered edge $u \sim v$ of $G$.
\end{lemma}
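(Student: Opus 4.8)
The plan is a proof by contradiction exploiting covered-edge reversals. Suppose $\cI$ is a verifying set for $G$, i.e.\ $\cE_{\cI}(G) = G$, but there is an unoriented covered edge $u \to v \in \cC(G)$ that $\cI$ fails to separate; by definition of separation this means that for every $S \in \cI$ we have either $\{u,v\} \subseteq S$ or $\{u,v\} \cap S = \emptyset$. Let $G'$ be the DAG obtained from $G$ by reversing $u \to v$ into $v \to u$. Since a single covered-edge reversal stays within the same MEC (a special case of \cref{lem:sequence}), we have $G' \in [G^*]$ and $G' \neq G$. I will show $\cE_{\cI}(G) = \cE_{\cI}(G')$; this is the desired contradiction, because $\cE_{\cI}(G) = G$ is fully oriented, so it represents only the single DAG $G$, yet it also (as we will see) represents $G' \neq G$.

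The heart of the argument is the claim that $G_S$ and $G'_S$ are Markov equivalent for every $S \in \cI$. Write $P = \Pa_G(u)$, so that $\Pa_G(v) = P \cup \{u\}$ by coveredness, and recall that $G$ and $G'$ agree on every arc except the one between $u$ and $v$. If $\{u,v\} \cap S = \emptyset$, the intervention deletes no arc pointing into $u$ or $v$; hence $G_S$ and $G'_S$ still differ only in the orientation of the $u$--$v$ arc, and from $\Pa_{G_S}(u) = P$ and $\Pa_{G_S}(v) = P \cup \{u\}$ we see that $u \to v$ is a covered edge of the DAG $G_S$, so reversing it produces the Markov-equivalent DAG $G'_S$ (again by \cref{lem:sequence}, applied inside $G_S$). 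If instead $\{u,v\} \subseteq S$, the intervention deletes the (unique) $u$--$v$ arc in both $G_S$ and $G'_S$, since that arc points into a vertex of $S$ in each of $G$ and $G'$, while every other arc is treated identically; thus $G_S = G'_S$ as graphs. In either case $[G_S] = [G'_S]$.

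Because the $\cI$-essential graph of a DAG is a function only of the Markov equivalence classes of its interventional graphs $\{G_S : S \in \cI\}$ together with the observational class, and $[G] = [G'] = [G^*]$, the claim yields $\cE_{\cI}(G) = \cE_{\cI}(G')$. Combined with $\cE_{\cI}(G) = G$ this gives $\cE_{\cI}(G') = G$, a fully oriented graph; but every fully oriented $\cI$-essential graph is itself the unique DAG consistent with it, namely $G'$, so $G = G'$, contradicting $G' \neq G$. Hence $\cI$ must separate every unoriented covered edge of $G$.

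I expect the only real obstacle to be the first case of the claim: one must verify carefully that an ideal intervention modifies parent sets only by deletion and only at intervened vertices, so that when neither $u$ nor $v$ is intervened on the identity $\Pa(u) = \Pa(v) \setminus \{u\}$ is inherited by $G_S$ and the edge remains covered there. Everything else is bookkeeping, modulo the two standard facts invoked above (a covered-edge reversal preserves the MEC, and a fully oriented essential graph determines the DAG).
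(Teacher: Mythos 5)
Your proof is correct. Note that the paper does not prove this statement itself---it imports it as Lemma 27 of \cite{choo2022verification}---and your argument is essentially the standard one used there: if an unoriented covered edge $u \sim v$ is not cut by any $S \in \cI$, then the DAG $G'$ obtained by reversing it satisfies $[G']=[G]$ and $G'_S$ Markov equivalent (indeed equal, when $\{u,v\}\subseteq S$) to $G_S$ for every $S \in \cI$, so $G'$ lies in the same $\cI$-Markov equivalence class and $\cE_{\cI}(G)$ cannot be fully oriented, contradicting $\cE_{\cI}(G)=G$. Your case analysis of how ideal interventions delete only arcs whose head lies in $S$ is exactly the bookkeeping needed to make the reversal-invariance claim rigorous (via the Hauser--B\"uhlmann characterization of $\cI$-essential graphs), so there is no gap.
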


\begin{lemma}[Lemma 28 of \cite{choo2022verification}]
\label{lem:sufficient}
Fix an essential graph $\cE(G^*)$ and $G \in [G^*]$.
If $\cI \subseteq 2^V$ is an intervention set that separates every unoriented covered edge $u \sim v$ of $G$, then $\cI$ is a verifying set.
\end{lemma}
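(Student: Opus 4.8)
The plan is to argue by contradiction: suppose $\cI$ separates every unoriented covered edge of $G$ but $\cE_{\cI}(G) \neq G$. Since the $\cI$-essential graph $\cE_{\cI}(G)$ is then not fully oriented, it contains an undirected edge, so the $\cI$-Markov equivalence class of $G$ contains a DAG $G' \neq G$ that is $\cI$-Markov equivalent to $G$ (orient that undirected edge oppositely to $G$ and complete to a consistent DAG). First I would record that both $G$ and $G'$ lie in $[G^*]$: performing interventions only adds orientations to $\cE(G)$ and never removes any, so any DAG consistent with $\cE_{\cI}(G)$ is consistent with $\cE(G)$, hence Markov equivalent to $G^*$.

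Next I would extract a single covered edge of $G$ on which $G$ and $G'$ disagree. Since $G \neq G'$ lie in the same MEC, I would invoke the transformational characterization of Chickering (\cref{lem:sequence}, via \cref{alg:repeatedfindedge} together with Lemma 2 of \cite{chickering2013transformational}): run \cref{alg:repeatedfindedge} on input $(G, G')$; its first iteration picks an arc $x \to y$ that is a covered edge of $G$ and belongs to $A(G) \setminus A(G')$. As $G$ and $G'$ share the skeleton of $\cE(G^*)$, this forces $x \to y \in E(G)$ and $y \to x \in E(G')$. Because $x \to y$ in $G \in [G^*]$ while $y \to x$ in $G' \in [G^*]$, the edge $x \sim y$ is undirected in $\cE(G^*)$; hence it is an \emph{unoriented} covered edge of $G$, and the hypothesis yields an intervention $S \in \cI$ with $|S \cap \{x,y\}| = 1$.

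Finally I would derive the contradiction from the standard fact (cited in \cref{sec:ideal-interventions}) that intervening on a set reveals the orientation of every edge it cuts. Since $G$ and $G'$ are $\cI$-Markov equivalent, $G_S$ and $G'_S$ are Markov equivalent and in particular share a skeleton; but with $S$ cutting $x \sim y$, the edge $x \sim y$ survives the intervention (i.e.\ remains in $G_S$) exactly when it points away from the endpoint lying in $S$, and likewise for $G'_S$. As $G$ and $G'$ orient $x \sim y$ oppositely, exactly one of $G_S$, $G'_S$ contains this edge, contradicting equal skeletons. Equivalently, $\cI$ reveals the direction of $x \sim y$, so $x \sim y$ cannot be undirected in $\cE_{\cI}(G)$, contradicting the existence of $G'$. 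Hence $\cE_{\cI}(G) = G$, i.e.\ $\cI$ is a verifying set.

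The step I expect to require the most care — the main obstacle in a rigorous writeup — is the bookkeeping around $\cI$-Markov equivalence: making precise that a DAG consistent with $\cE_{\cI}(G)$ both lies in $[G^*]$ and satisfies $G_S \equiv_{\mathrm{MEC}} G'_S$ for every $S \in \cI$, and that an edge cut by some $S \in \cI$ cannot remain undirected in $\cE_{\cI}(G)$. The covered-edge extraction is essentially a black-box appeal to Chickering's results and should be routine, and the cut-edge/skeleton argument is a short direct check.
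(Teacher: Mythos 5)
Your proposal is correct. Note that this paper does not prove the statement at all: it is imported verbatim as Lemma 28 of \cite{choo2022verification}, so there is no internal proof to compare against; your argument is a valid self-contained reconstruction, and it follows the standard route for this fact. Concretely, the chain ``$\cE_{\cI}(G) \neq G$ $\Rightarrow$ some $G' \neq G$ in the $\cI$-MEC $\Rightarrow$ first covered-edge reversal $x \to y$ in Chickering's sequence (\cref{lem:sequence}, \cref{alg:repeatedfindedge}) is an unoriented covered edge of $G$ $\Rightarrow$ the separating intervention $S$ deletes $x \sim y$ in exactly one of $G_S$, $G'_S$, contradicting equal interventional skeletons'' is sound. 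Two small tidying remarks for a rigorous writeup: the existence of $G'$ follows immediately from the definition of an undirected edge in the $\cI$-essential graph (two members of the $\cI$-MEC disagree on it), so you do not need the ``orient oppositely and extend'' step, which would otherwise require justifying that such an extension stays in the class; and the step you correctly flag as the main bookkeeping burden --- that $\cI$-Markov equivalence forces $G_S$ and $G'_S$ to share a skeleton for every $S \in \cI$, and hence that an edge cut by some $S \in \cI$ cannot remain undirected in $\cE_{\cI}(G)$ --- is exactly the Hauser--B\"uhlmann characterization of interventional Markov equivalence \cite{hauser2012characterization}, so it can be cited rather than reproved.
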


\cref{lem:necessary} tells us that we have to intervene on one of the endpoints of \emph{any} covered edge in order to orient it while \cref{lem:sufficient} tells us that doing so for all covered edges suffices to orient the entire causal DAG.

\subsection{Verification numbers of DAGs within same MEC are bounded by a factor of two}

We use the following simple lemma in our proof of \cref{lem:status-changes}.

\begin{restatable}{lemma}{coveredisdirect}
\label{lem:covered-is-direct}
For any covered edge $x \to y$ in a DAG $G = (V,E)$, we have $y \in \Ch_G(x)$.
Furthermore, each vertex only appears as an endpoint in the collection of covered edges $\cC(G)$ at most once.
\end{restatable}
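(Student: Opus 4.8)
The plan is to prove the two assertions separately, each by a short acyclicity argument on parent sets and reachability.

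For the first assertion, I would argue by contradiction. Since $x \to y$ is an arc, $y$ is a child of $x$, so by the definition of $\Ch_G$ it suffices to rule out an intermediate witness $z \in V \setminus \{x,y\}$ with $z \in \Des(x) \cap \Anc(y)$. Suppose such a $z$ exists; fix a directed path from $z$ to $y$ and let $p$ be the vertex immediately preceding $y$ on it. Then $p \to y$, so $p \in \Pa(y)$, and $p$ is reachable from $x$ through $z$ (or $p = z$), so $p \in \Des(x)$ and in particular $p \neq x$. Because $x \to y$ is covered we have $\Pa(y) = \Pa(x) \cup \{x\}$, hence $p \in \Pa(x) \cup \{x\}$; as $p \neq x$ this forces $p \in \Pa(x)$, i.e.\ $p \to x$. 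Together with $x \rightsquigarrow p$ this gives a directed cycle, contradicting acyclicity. Thus no such $z$ exists and $y \in \Ch_G(x)$.

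For the second assertion, I would establish the vertex-occurrence bound in the form in which it holds: each vertex is the head (child-side endpoint) of at most one covered edge, while it may be the common tail of several (a source with two children already exhibits this). It is precisely this in-degree bound that, together with acyclicity, makes $\cC(G)$ a branching and hence the forest invoked in \cref{thm:verification-characterization}. To prove it, suppose for contradiction that some vertex $v$ is the head of two distinct covered edges $x \to v$ and $w \to v$ with $x \neq w$. Covering of $x \to v$ gives $\Pa(v) = \Pa(x) \cup \{x\}$ and covering of $w \to v$ gives $\Pa(v) = \Pa(w) \cup \{w\}$. Reading $w \in \Pa(v)$ through the first identity and using $w \neq x$ yields $w \in \Pa(x)$, i.e.\ $w \to x$; symmetrically $x \in \Pa(v)$ through the second identity yields $x \to w$. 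These two arcs form a $2$-cycle, contradicting acyclicity, so $v$ heads at most one covered edge.

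Both arguments are essentially bookkeeping with parent sets, so I do not expect a genuinely hard step. The one place to be careful is the first assertion: I must convert the mere existence of a descendant--ancestor witness $z$ into a single concrete in-neighbor $p$ of $y$ lying in $\Des(x)$, since only then can the covered-edge identity $\Pa(y) = \Pa(x) \cup \{x\}$ be applied pointwise. Taking $p$ to be the last vertex before $y$ on a directed $z \rightsquigarrow y$ path is what simultaneously delivers $p \in \Pa(y)$ and $p \in \Des(x)$, and acyclicity then supplies $p \neq x$, so the identity pins $p$ into $\Pa(x)$ and closes the cycle.
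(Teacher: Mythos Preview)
Your proof is correct and matches the paper's approach: for the first assertion the paper selects $z^* = \argmax_{z \in \Des(x) \cap \Anc(y)} \pi(z)$ via a topological ordering rather than the predecessor of $y$ on a $z \rightsquigarrow y$ path, but both devices produce a parent of $y$ lying in $\Des(x)$ and yield the same cycle contradiction. Your reading of the second assertion is also exactly what the paper actually proves---only head-endpoint uniqueness (two covered edges $u \to x$, $v \to x$ forcing $u \to v$ and $v \to u$)---so your counterexample and argument are on target.
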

\begin{proof}
For the first statement, suppose, for a contradiction, that $y \not\in \Ch(x)$.
Then, there exists some $z \in V \setminus \{x,y\}$ such that $z \in \Des(x) \cap \Anc(y)$.
Fix an arbitrary ordering $\pi$ for $G$ and let $z^* = \argmax_{z \in \Des(x) \cap \Anc(y)} \{\pi(z)\}$.
Then, we see that $z^* \to y$ while $z^* \not\to x$ since $z^* \in \Des(x)$.
So, $x \to y$ \emph{cannot} be a covered edge.
Contradiction.

For the second statement, suppose, for a contradiction, that there are two covered edges $u \to x, v \to x \in \cC(G)$ that ends with $x$.
Since $u \to x \in \cC(G)$, we must have $v \to u$.
Since $v \to x \in \cC(G)$, we must have $u \to v$.
We cannot have both $u \to v$ and $v \to u$ simultaneously.
Contradiction.
\end{proof}

\statuschanges*
\begin{proof}
The only parental relationships that changed when we reversing $x \to y$ to $y \to x$ are $\Pa_{G'}(y) = \Pa_{G}(y) \setminus \{x\}$ and $\Pa_{G'}(x) = \Pa_{G}(x) \cup \{y\}$.
For any other vertex $u \in V \setminus \{x,y\}$, we have $\Pa_{G'}(u) = \Pa_{G}(u)$.
The first two points have the same proof: as parental relationships of both endpoints are unchanged, the covered edge status is unchanged.
\begin{enumerate}
    \setcounter{enumi}{2}
    \item Since $x \to y \in \cC(G)$, we have $a \to y \in E(G)$.
    We prove both directions separately.
    
    Suppose $a \to x \in \cC(G)$.
    Then, $\Pa_{G}(a) = \Pa_{G}(x) \setminus \{a\}$.
    Since $x \to y \in \cC(G)$, then $\Pa_{G}(x) = \Pa_{G}(y) \setminus \{x\}$.
    So, we have $\Pa_{G'}(a) = \Pa_{G}(a) = \Pa_{G}(x) \setminus \{a\} = \Pa_{G}(y) \setminus \{x,a\} = \Pa_{G'}(y) \setminus \{a\}$.
    Thus, $a \to y \in \cC(G')$.
    
    Suppose $a \to x \not\in \cC(G)$.
    Then, one of the two cases must occur:
    \begin{enumerate}
        \item There exists some vertex $u$ such that $u \to a$ and $u \not\to x$ in $G$.\\
        Since $x \to y$ is a covered edge, $u \not\to x$ implies $u \not\to y$ in $G$.
        Therefore, $a \to y \not\in \cC(G')$ due to $u \to a$.
        \item There exists some vertex $v$ such that $v \to x$ and $v \not\to a$ in $G$.\\
        There are two possibilities for $v \not\to a$: $v \not\sim a$ or $v \gets a$.
        If $v \not\sim a$, then  $v \to x \gets a$ is a v-structure.
        If $v \gets a$, then $x \not\in \Ch(a)$ since we have $a \to v \to x$.
        Both possibilities lead to contradictions.
    \end{enumerate}
    The first case implies $a \to y \not\in \cC(G')$ while the second case cannot happen.
    
    \item We prove both directions separately.
    
    Suppose $y \to b \in \cC(G)$.
    Then, $\Pa_{G}(y) = \Pa_{G}(b) \setminus \{y\}$.
    Since $x \to y \in \cC(G)$, then $\Pa_{G}(x) = \Pa_{G}(y) \setminus \{x\}$.
    So, we have $\Pa_{G'}(b) \setminus \{x\} = \Pa_{G}(b) \setminus \{x\} = \Pa_{G}(y) \cup \{y\} \setminus \{x\} = \Pa_{G}(x) \cup \{y\} = \Pa_{G'}(x)$.
    Thus, $x \to b \in \cC(G')$.
    
    Suppose $y \to b \not\in \cC(G)$.
    Then, one of the two cases must occur:
    \begin{itemize}
        \item There exists some vertex $u \to y$ and $u \not\to b$.\\
        Since $x \to y$ is a covered edge, $u \to y$ implies $u \to x$.
        Therefore, $x \to b \not\in \cC(G')$ due to $u \not\to b$.
        \item There exists some vertex $v \to b$ and $v \not\to y$.\\
        There are two possibilities for $v \not\to y$: $v \not\sim y$ or $v \gets y$.
        If $v \not\sim y$, then $v \to b \gets y$ is a v-structure.
        If $v \gets y$, then $b \not\in \Ch(y)$ since we have $y \to v \to b$.
        Both possibilities lead to contradictions.
    \end{itemize}
    The first case implies $x \to b \not\in \cC(G')$ while the second case cannot happen. \qedhere
\end{enumerate}
\end{proof}

\samemmsize*
\begin{proof}
Define $u = \argmin_{z \in \Ch_{G_1}(x)} \{ \pi_{G_1}(z) \}$ as the lowest ordered child of $x$.
Note that \cref{alg:condition-root-greedy} chooses $x \to y$ instead of $x \to u$ by definition of $y$.
This implies that $x \to u \in S$ whenever $u \neq y$.

Let us define $\pi_{G_2}$ as follows:
\[
\pi_{G_2}(v) = 
\begin{cases}
\pi_{G_1}(x) & \text{if $v = y$}\\
\pi_{G_1}(u) & \text{if $v = x$}\\
\pi_{G_1}(y) & \text{if $v = u$}\\
\pi_{G_1}(v) & \text{else}
\end{cases}
\]
Clearly, $\pi_{G_1}(x) < \pi_{G_1}(y)$ and $\pi_{G_2}(x) > \pi_{G_2}(y)$.
Meanwhile, for any other two adjacent vertices $v$ and $v'$, observe that $\pi_{G_1}(v) < \pi_{G_1}(v') \iff \pi_{G_2}(v) < \pi_{G_2}(v')$ so $\pi_{G_2}$ agrees with the arc orientations of $\pi_{G_1}$ except for $x \sim y$.
See \cref{fig:two-approx-example} for an illustrated example.

Define vertex $b$ as follows:
\[
b = \argmin_{z \in V \;:\; z \in \Des(x) \text{ and } y \to z \in \cC(G_1)} \{ \pi_{G_1}(z) + n^2 \cdot \mathbbm{1}_{x \to z \in S} \}
\]
If vertex $b$ exists, then we know that $b \in \Ch_{G_1}(y)$ and $x \to b \in \cC(G_2)$ by \cref{lem:covered-is-direct} and \cref{lem:status-changes}.
By minimality of $b$, \cref{def:greedymm} will choose $y \to b$ if picking a covered edge starting with $y$ for $M_{G_1, \pi_{G_1}, S}$.
So, we can equivalently define vertex $b$ as follows:
\[
b = \argmin_{z \in V \;:\; z \in \Des(y) \text{ and } x \to z \in \cC(G_2)} \{ \pi_{G_2}(z) + n^2 \cdot \mathbbm{1}_{x \to z \in S} \}
\]
By choice of $\pi_{G_2}$, \cref{def:greedymm} will choose $x \to b$ if picking a covered edge starting with $x$ for $M_{G_2, \pi_{G_2}, S}$.

We will now construct a same-sized maximal matching $M_{G_2, \pi_{G_2}, S}$ from $M_{G_1, \pi_{G_1}, S}$ (Step 1), argue that it is maximal matching of $\cC(G_2)$ (Step 2), and that it is indeed a conditional-root-greedy matching for $\cC(G_2)$ with respect to $\pi_{G_2}$ and $S$ (Step 3).
There are three cases that cover all possibilities:
\begin{description}
    \item[Case 1] Vertex $a$ exists, $a \to x \in M_{G_1, \pi_{G_1}, S}$, and vertex $b$ exists.
    \item[Case 2] Vertex $a$ exists, $a \to x \in M_{G_1, \pi_{G_1}, S}$, and vertex $b$ does not exist.
    \item[Case 3] $a \to x \not\in M_{G_1, \pi_{G_1}, S}$.\\
    This could be due to vertex $a$ not existing, or $a \to x \not\in \cC(G_1)$, or $M_{G_1, \pi_{G_1}, S}$ containing a covered edge ending at $a$ so $a \to x$ was removed from consideration. 
\end{description}

\paragraph{Step 1: Construction of $M_{G_2, \pi_{G_2}, S}$ such that $|M_{G_2, \pi_{G_2}, S}| = |M_{G_1, \pi_{G_1}, S}|$.}\hspace{0pt}

By \cref{lem:status-changes}, covered edge statuses of edges whose endpoints do not involve $x$ or $y$ will remain unchanged.
By definition of $y$, we know that \cref{def:greedymm} will choose $x \to y$ if picking a covered edge starting with $x$ for $M_{G_1, \pi_{G_1}, S}$.

Since $a \to x \in M_{G_1, \pi_{G_1}}$ in cases 1 and 2, we know that there is no arcs of the form $x \to \cdot$ in $M_{G_1, \pi_{G_1}, S}$.
Since there is no arc of the form $\cdot \to x$ in $M_{G_1, \pi_{G_1}, S}$ in case 3, we know that $x \to y \in M_{G_1, \pi_{G_1}, S}$.

\begin{description}
    \item[Case 1] Define $M_{G_2, \pi_{G_2}, S} = M_{G_1, \pi_{G_1}, S} \cup \{a \to y, x \to b\} \setminus \{a \to x, y \to b\}$.
    \item[Case 2] Define $M_{G_2, \pi_{G_2}, S} = M_{G_1, \pi_{G_1}, S} \cup \{a \to y\} \setminus \{a \to x\}$.
    \item[Case 3] Define $M_{G_2, \pi_{G_2}, S} = M_{G_1, \pi_{G_1}, S} \cup \{y \to x\} \setminus \{x \to y\}$.
\end{description}

By construction, we see that $|M_{G_2, \pi_{G_2}, S}| = |M_{G_1, \pi_{G_1}, S}|$.

\paragraph{Step 2: $M_{G_2, \pi_{G_2}, S}$ is a maximal matching of the covered edge $\cC(G_2)$ of $G_2$.}\hspace{0pt}

To prove that $M_{G_2, \pi_{G_2}, S}$ is a maximal matching of $\cC(G_2)$, we argue in three steps:
\begin{enumerate}[2(i)]
    \item Edges of $M_{G_2, \pi_{G_2}, S}$ belong to $\cC(G_2)$.
    \item $M_{G_2, \pi_{G_2}, S}$ is a matching of $\cC(G_2)$.
    \item $M_{G_2, \pi_{G_2}, S}$ is maximal matching of $\cC(G_2)$.
\end{enumerate}

\paragraph{Step 2(i): Edges of $M_{G_2, \pi_{G_2}, S}$ belong to $\cC(G_2)$.}\hspace{0pt}

By \cref{lem:status-changes}, covered edge statuses of edges whose endpoints do not involve $x$ or $y$ will remain unchanged.
Since $M_{G_1, \pi_{G_1}, S}$ is a matching, it has at most one edge $e$ involving endpoint $x$ and at most one edge $e'$ involving endpoint $y$ ($e'$ could be $e$).

\begin{description}
    \item[Case 1] Since $b$ exists, the edges in $M_{G_1, \pi_{G_1}, S}$ with endpoints involving $\{x,y\}$ are $a \to x$ and $y \to b$.
        By \cref{lem:status-changes}, we know that $a \to y, x \to b \in \cC(G_2)$.
    \item[Case 2] Since $b$ does not exist, the only edge in $M_{G_1, \pi_{G_1}, S}$ with endpoints involving $\{x,y\}$ is $a \to x$.
        By \cref{lem:status-changes}, we know that $a \to y \in \cC(G_2)$.
    \item[Case 3] Since $a \to x \not\in M_{G_1, \pi_{G_1}, S}$, we have $x \to y \in M_{G_1, \pi_{G_1}, S}$ by minimality of $y$.
\end{description}

In all cases, we see that $M_{G_2, \pi_{G_2}, S} \subseteq \cC(G_2)$.

\paragraph{Step 2(ii): $M_{G_2, \pi_{G_2}, S}$ is a matching of $\cC(G_2)$.}\hspace{0pt}

It suffices to argue that there are \emph{no} two edges in $M_{G_2, \pi_{G_2}, S}$ sharing an endpoint.
Since $M_{G_1, \pi_{G_1}, S}$ is a matching, this can only happen via newly added endpoints in $M_{G_2, \pi_{G_2}, S}$.

\begin{description}
    \item[Case 1] The endpoints of newly added edges are exactly the endpoints of removed edges.
    \item[Case 2] Since we removed $a \to x$ and added $a \to y$, it suffices to check that there are no edges in $M_{G_1, \pi_{G_1}, S}$ involving $y$.
    This is true since $b$ does not exist in Case 2.
    \item[Case 3] The endpoints of newly added edges are exactly the endpoints of removed edges.
\end{description}

Therefore, we conclude that $M_{G_2, \pi_{G_2}, S}$ is a matching of $\cC(G_2)$.

\paragraph{Step 2(iii): $M_{G_2, \pi_{G_2}, S}$ is a maximal matching of $\cC(G_2)$.}\hspace{0pt}

For any $u \to v \in \cC(G_2)$, we show that there is some edge in $M_{G_2, \pi_{G_2}, S}$ with at least one of $u$ or $v$ is an endpoint.
By \cref{lem:status-changes}, covered edge statuses of edges whose endpoints do not involve $x$ or $y$ will remain unchanged, so it suffices to consider $|\{u,v\} \cap \{x,y\}| \geq 1$.

We check the following 3 scenarios corresponding to $|\{u,v\} \cap \{x,y\}| \geq 1$ below:
\begin{enumerate}[(i)]
    \item $y \in \{u,v\}$.
    
    The endpoints of $M_{G_2, \pi_{G_2}}$ always contains $y$.
    
    \item $y \not\in \{u,v\}$ and $x \to v \in \cC(G_2)$, for some $v \in V \setminus \{x,y\}$.
    
    Since $x \to v \in \cC(G_2)$ and $y \to x$ in $G_2$, it must be the case that $y \to v$ in $G_2$.
    Since $G_1$ and $G_2$ agrees on all arcs except $x \sim y$, we have that $y \to v$ in $G_1$ as well.
    Since $x \to v \in \cC(G_2)$, we know that $v \in \Ch_{G_2}(x)$ via \cref{lem:covered-is-direct}.
    So, we have $y \to v \in \cC(G_1)$ via \cref{lem:status-changes}.
    Since the set $\{ v : y \to v \in \cC(G_1) \}$ is non-empty, vertex $b$ exists.
    In both cases 1 and 3, the endpoints of $M_{G_2, \pi_{G_2}}$ includes $x$.
    
    \item $y \not\in \{u,v\}$ and $u \to x \in \cC(G_2)$, for some $u \in V \setminus \{x,y\}$.
    
    By \cref{lem:covered-is-direct}, we know that $x \in \Ch_{G_2}(u)$.
    Meanwhile, since $y \to x \in \cC(G_2)$, we must have $u \to y$ in $G_2$.
    However, this implies that $x \not\in \Ch_{G_2}(u)$ since $u \to y \to x$ exists.
    This is a contradiction, so this situation cannot happen.
\end{enumerate}

As the above argument holds for any $u \to v \in \cC(G_2)$, we see that $M_{G_2, \pi_{G_2}}$ is maximal matching for $\cC(G_2)$.

\paragraph{Step 3: $M_{G_2, \pi_{G_2}, S}$ is a conditional-root-greedy maximal matching.}\hspace{0pt}

We now compare the execution of \cref{alg:condition-root-greedy} on $(\pi_{G_1}, S)$ and $(\pi_{G_2}, S)$.
Note that $S$ remains unchanged.

We know the following:
\begin{itemize}
    \item Since $\pi_{G_2}(y) = \pi_{G_1}(x)$ and $a \to x \in S$, if $a$ exists and $a \to x$ is chosen by \cref{alg:condition-root-greedy} on $(\pi_{G_1}, S)$, then it means that there are \emph{no} $a \to v$ arc in $\cC(G_1)$ such that $a \to v \not\in S$.
    So, $a \to y$ will be chosen by \cref{alg:condition-root-greedy} on $(\pi_{G_2}, S)$ if $a$ exists.
    \item Since $\pi_{G_2}(y) = \pi_{G_1}(x)$, $x$ is chosen as a root by \cref{alg:condition-root-greedy} on $(\pi_{G_1}, S)$ if and only if $y$ is chosen as a root by \cref{alg:condition-root-greedy} on $(\pi_{G_2}, S)$.
    \item By definition of $b$, if it exists, then $y \to b \in M_{G_1, \pi_{G_1}, S} \iff x \to b \in M_{G_2, \pi_{G_2}, S}$.
    \item By the definition of $\pi_{G_2}$, we see that \cref{alg:condition-root-greedy} makes the ``same decisions'' when choosing arcs rooted on $V \setminus \{a,x,y,b\}$.
\end{itemize}

Therefore, $M_{G_2, \pi_{G_2}, S}$ is indeed a conditional-root-greedy maximal matching for $\cC(G_2)$ with respect to $\pi_{G_2}$ and $S$.
\end{proof}

\twoapproxratio*
\begin{proof}
Consider any two DAGs $G_s, G_t \in [G^*]$.
To transform $G_s = (V, E_s)$ to $G_t = (V, E_t)$, \cref{alg:repeatedfindedge} flips covered edges one by one such that $|E_s \setminus E_t|$ decreases in a monotonic manner.
We will repeatedly apply \cref{lem:same-mm-size} with $S = A(G_s) \cap A(G_t)$ on the sequence of covered edge reversals produced by \cref{alg:repeatedfindedge}.

Let $\pi_{G_s}$ be an arbitrary ordering for $G_s$ and we compute an initial conditional-root-greedy maximal matching for $\cC(G_s)$ with respect to some ordering $\pi_{G_s}$ and conditioning set $S$.
To see why \cref{lem:same-mm-size} applies at each step for reversing a covered edge from $x \to y$ to $y \to x$, we need to ensure the following:
\begin{enumerate}
    \item If $x$ has a parent vertex $a$ (i.e.\ $x \in \Ch_{G_1}(a)$), then $a \to x \in S$.
    
    If $a \to x \not\in S$, then then $a \to x$ is a covered edge that should be flipped to transform from $G_s$ to $G_t$.
    However, this means that \cref{alg:repeatedfindedge} would pick $a \to x$ to reverse instead of picking $x \to y$ to reverse.
    Contradiction.
    
    \item $x \to y, y \to x \not\in S$.
    
    This is satisfied by the definition of $S = E_s \cap E_t$ since reversing $x \to y$ to $y \to x$ implies that neither of them are in $S$.
    
    \item $y = \argmin_{z \;:\; x \to z \in \cC(G_1)} \{ \pi_{G_1}(z) + n^2 \cdot \mathbbm{1}_{x \to z \in S} \}$.
    
    Since $x \to y \not\in S$, this is equivalent to checking if $y = \argmin_{z \;:\; x \to z \in \cC(G_1)} \{ \pi_{G_1}(z) \}$.
    This is satisfied by line 7 of \cref{alg:repeatedfindedge}.
    
    \item $M_{G_1, \pi_{G_1}, S}$ is a conditional-root-greedy maximal matching for $\cC(G_1)$ with respect to some ordering $\pi_{G_1}$ and conditioning set $S$.
    
    This is satisfied since we always maintain a conditional-root-greedy maximal matching and $S$ is unchanged throughout.
\end{enumerate}

By applying \cref{lem:same-mm-size} with $S = A(G_s) \cap A(G_t)$ repeatedly on the sequence of covered edge reversals produced by \cref{alg:repeatedfindedge}, we see that there exists a conditional-root-greedy maximal matching $M_{G_s, \pi_{G_s}}$ for $\cC(G_s)$ and a conditional-root-greedy maximal matching $M_{G_t, \pi_{G_t}}$ for $\cC(G_t)$ such that $| M_{G_s, \pi_{G_s}} | = | M_{G_t, \pi_{G_t}} |$.

The claim follows since maximal matching is a 2-approximation to minimum vertex cover, and the verification number $\nu(G)$ of any DAG $G$ is the size of the minimum vertex cover of its covered edges $\cC(G)$.
\end{proof}

\tightnessoftwoapprox*
\begin{proof}
See \cref{fig:two-approx-tight}.
\end{proof}

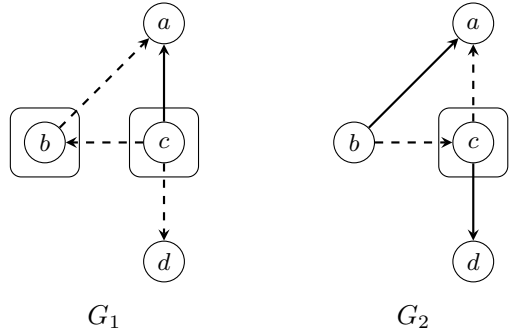
\begin{figure}[htbp]
\centering
\resizebox{0.4\linewidth}{!}{%
\begin{tikzpicture}
%
%
\node[draw, circle, minimum size=15pt, inner sep=2pt] at (0,0) (a1) {\small $a$};
\node[draw, circle, minimum size=15pt, inner sep=2pt, below=of a1] (c1) {\small $c$};
\node[draw, circle, minimum size=15pt, inner sep=2pt, left=of c1] (b1) {\small $b$};
\node[draw, circle, minimum size=15pt, inner sep=2pt, below=of c1] (d1) {\small $d$};
\draw[thick, -stealth, dashed] (b1) -- (a1);
\draw[thick, -stealth] (c1) -- (a1);
\draw[thick, -stealth, dashed] (c1) -- (b1);
\draw[thick, -stealth, dashed] (c1) -- (d1);
\node[draw, rounded corners, fit=(b1), inner sep=5pt] {};
\node[draw, rounded corners, fit=(c1), inner sep=5pt] {};
\node[] at ($(b1)!0.5!(c1) + (0,-2.25)$) {$G_1$};

%
%
\node[draw, circle, minimum size=15pt, inner sep=2pt] at (4,0) (a2) {\small $a$};
\node[draw, circle, minimum size=15pt, inner sep=2pt, below=of a2] (c2) {\small $c$};
\node[draw, circle, minimum size=15pt, inner sep=2pt, left=of c2] (b2) {\small $b$};
\node[draw, circle, minimum size=15pt, inner sep=2pt, below=of c2] (d2) {\small $d$};
\draw[thick, -stealth] (b2) -- (a2);
\draw[thick, -stealth, dashed] (c2) -- (a2);
\draw[thick, -stealth, dashed] (b2) -- (c2);
\draw[thick, -stealth] (c2) -- (d2);
\node[draw, rounded corners, fit=(c2), inner sep=5pt] {};
\node[] at ($(b2)!0.5!(c2) + (0,-2.25)$) {$G_2$};
\end{tikzpicture}
}
\caption{
The ratio of 2 in \cref{thm:two-approx-ratio} is tight: $G_1$ and $G_2$ belong in the same MEC with $\nu(G_1) = 2$ and $\nu(G_2) = 1$.
The dashed arcs represent the covered edges and the boxed vertices represent a minimum vertex cover of the covered edges.
}
\label{fig:two-approx-tight}
\end{figure}

\subsection{Adaptive search with imperfect advice}

\ifOPTreachedthenblindsearchfullyorients*
\begin{proof}
By \cref{thm:subsetsearch}, \texttt{SubsetSearch} fully orients edges within the node-induced subgraph induced by $V'$, i.e.\ \texttt{SubsetSearch} will perform atomic interventions on $\cI_{V'} \subseteq V$ resulting in $\cE_{\cI_{V'}}(G^*)[V'] = G^*[V']$.
Since $\cC(G^*) \subseteq E(G^*[V'])$ and all covered edges $\cC(G^*)$ were oriented, then according to \cref{lem:necessary}, it must be the case that $V^* \subseteq \cI_{V'}$ for some minimum vertex cover $V^*$ of $\cC(G^*)$, so we see that $R(G^*,V^*) \subseteq R(G^*, \cI_{V'})$.
By \cref{lem:sufficient}, we have $R(G^*,V^*) = A(G^*)$ and so $\texttt{SubsetSearch}(\cE(G^*), V')$ fully orients $\cE(G^*)$.
\end{proof}

We will now prove our main result (\cref{thm:search-with-advice}) which shows that the number of interventions needed is a function of the quality of the given advice DAG.
Let us first recall how we defined the quality of a given advice and restate our algorithm.

\qualitymeasure*

\setcounter{algorithm}{0}
\begin{algorithm}[htb]
\caption{Adaptive search algorithm with advice.}
\begin{algorithmic}[1]
    \Statex \textbf{Input}: Essential graph $\cE(G^*)$, advice DAG $\wt{G} \in [G^*]$, intervention size $k \in \N$
    \Statex \textbf{Output}: An intervention set $\cI$ such that each intervention involves at most $k$ nodes and $\cE_{\cI}(G^*) = G^*$.
    \State Let $\wt{V} \in \cV(\wt{G})$ be any atomic verifying set of $\wt{G}$.
    \If{$k = 1$}
        \State Define $\cI_0 = \wt{V}$ as an atomic intervention set.
    \Else
        \State Define $k' = \min\{k, |\wt{V}|/2\}$, $a = \lceil |\wt{V}|/k' \rceil \geq 2$, and $\ell = \lceil \log_a |C| \rceil$. Compute labelling scheme on $\wt{V}$ with $(|\wt{V}|, k, a)$ via \cref{lem:labelling-scheme} and define $\cI_0 = \{S_{x,y}\}_{x \in [\ell], y \in [a]}$, where $S_{x,y} \subseteq \wt{V}$ is the subset of vertices whose $x^{th}$ letter in the label is $y$.
    \EndIf
    \State Intervene on $\cI_0$ and initialize $r \gets 0$, $i \gets 0$, $n_0 \gets 2$.
	\While{$\cE_{\cI_{i}}(G^*)$ still has undirected edges}
        \If{$\rho(\cI_{i}, N^r_{\skel(\cE(G^*))}(\wt{V})) \geq n_i^2$}
            \State Increment $i \gets i + 1$ and record $r(i) \gets r$.
            \State Update $n_i \gets \rho(\cI_{i}, N^r_{\skel(\cE(G^*))}(\wt{V}))$
            \State $C_{i} \gets \texttt{SubsetSearch}(\cE_{\cI_{i}}(G^*), N^{r-1}_{\skel(\cE(G^*))}(\wt{V}), k)$
            \If{$\cE_{\cI_{i-1} \;\cup\; C_{i}}(G^*)$ still has undirected edges}
                \State $C'_{i} \gets \texttt{SubsetSearch}(\cE_{\cI_{i-1} \,\cup\, C_{i}}(G^*), N^{r}_{\skel(\cE(G^*))}(\wt{V}), k)$
                \State Update $\cI_{i} \gets \cI_{i-1} \cup C_{i} \cup C'_{i}$.
            \Else
                \State Update $\cI_{i} \gets \cI_{i-1} \cup C_{i}$.
            \EndIf
        \EndIf
        \State Increment $r \gets r + 1$.
	\EndWhile
	\State \textbf{return} $\cI_i$
\end{algorithmic}
\end{algorithm}

\searchwithadvice*
\begin{proof}
Consider \cref{alg:adaptive-search-with-advice-algo}.
Observe that $n_0 = 2$ ensures that $n_0^2 > n_0$.

In this proof, we will drop the subscript $\skel(\cE(G^*))$ when we discuss the $r$-hop neighbors $N^r_{\skel(\cE(G^*))}(\cdot)$.
We first prove the case where $k = 1$ then explain how to tweak the proof for the case of $k > 1$.

If \cref{alg:adaptive-search-with-advice-algo} terminates when $i = 0$, then $\cI = \cI_0 = \wt{V}$ and \cref{thm:two-approx-ratio} tells us that $|\cI| \in \cO(\nu_1(G^*))$.

Now, suppose \cref{alg:adaptive-search-with-advice-algo} terminates with $i = t$, for some final round $t >0$.
As \cref{alg:adaptive-search-with-advice-algo} uses an arbitrary verifying set of $\wt{G}$ in step 3, we will argue that $\cO(\max\{1, \log |N^{h(G^*, \wt{V})}(\wt{V})|\} \cdot \nu(G^*))$ interventions are used in the while-loop, for any arbitrary chosen $\wt{V} \in \cV(\wt{G})$.
The theorem then follows by taking a maximization over all possibilities in $\cV(\wt{G})$.

In Line 12, $r(i)$ records the hop value such that $\rho(\cI_{i}, N^{r(i)}(\wt{V})) \geq n_i^2$, for any $0 \leq i < t$.
By construction of the algorithm, we know the following:
\begin{enumerate}
    \item For any $0 < i \leq t$,
    \begin{equation}
    \label{eq:n-gamma-relationship}
    n_{i} = \rho(\cI_{i}, N^{r(i)}(\wt{V})) \geq n_{i-1}^2 > \rho(\cI_{i}, N^{r(i)-1}(\wt{V}))
    \end{equation}
    because $r(i)-1$ did \emph{not} trigger \cref{alg:adaptive-search-with-advice-algo} to record $r(i)$.
    
    \item By \cref{thm:subsetsearch} and \cref{eq:n-gamma-relationship}, for any $1 \leq i \leq t$,
    \begin{equation}
    \label{eq:C-bound}
    \begin{array}{cll}
    |C_i|
    &\in \cO(\log \rho(\cI_{i}, N^{r(i)-1}(\wt{V})) \cdot \nu_1(G^*))
    &\subseteq \cO(\log n_{i-1} \cdot \nu_1(G^*))\\
    |C'_i|
    &\in \cO(\log \rho(\cI_{i}, N^{r(i)}(\wt{V})) \cdot \nu_1(G^*))
    &\subseteq \cO(\log n_i \cdot \nu_1(G^*))
    \end{array}
    \end{equation}
    Note that the bound for $|C'_i|$ is an over-estimation (but this is okay for our analytical purposes) since some nodes previously counted for $\rho(\cI_{i}, N^{r(i)}(\wt{V}))$ may no longer be relevant in $\cE_{\cI_{i} \;\cup\; C_{i}}(G^*)$ after intervening on $C_i$.
    
    \item Since $n_{i-1} \leq \sqrt{n_i}$ for any $0 < i \leq t$, we know that $n_j \leq n_t^{1/2^{t-j}}$ for any $0 \leq j \leq t$.
    So, for any $0 \leq t' \leq t$, we have
    \begin{equation}
    \label{eq:log-sum-inequality}
    \sum_{i=0}^{t'} \log(n_i)
    \leq \sum_{i=0}^{t'} \log \left( n_{t'}^{1/2^{t'-i}} \right)
    = \sum_{i=0}^{t'} \frac{\log (n_{t'}) }{2^{t'-i}}
    \leq 2 \cdot \log (n_{t'})
    \end{equation}
    
    \item By definition of t, $h(G^*, \wt{V})$, and \cref{lem:if-OPT-reached-then-blind-search-fully-orients},
    \begin{equation}
    r(t-1) < h(G^*, \wt{V}) \leq r(t)
    \end{equation}
    and
    \begin{equation}
    N^{r(t-1)}(\wt{V}) \subsetneq N^{h(G^*, \wt{V})}(\wt{V}) \subseteq N^{r(t)}(\wt{V})    
    \end{equation}
\end{enumerate}

Combining \cref{eq:n-gamma-relationship}, \cref{eq:C-bound}, and \cref{eq:log-sum-inequality}, we get
\begin{equation}
\label{eq:sum-of-previous-C}
\sum_{i=1}^{t-1} \left( |C_i| + |C'_i| \right)
\in \cO \left( \left( \sum_{i=1}^{t-1} \log n_{i-1} + \log n_{i} \right) \cdot \nu_1(G^*) \right)
\subseteq \cO \left( \sum_{i=1}^{t-1} \log n_{i} \cdot \nu_1(G^*) \right)
\subseteq \cO \left( \log n_{t-1} \cdot \nu_1(G^*) \right)
\end{equation}

To relate $|\cI_t|$ with $|N^{h(G^*, \wt{V})}(\wt{V})|$, we consider two scenarios depending on whether the essential graph was fully oriented after intervening on $C_t$ or $C'_t$.

\textbf{Scenario 1: Fully oriented after intervening on $C_t$, i.e.\ $\cE_{\cI_{t-1} \;\cup\; C_{t}}(G^*) = G^*$.}
Then,
\[
\cI_t
= C_{t} \;\dot\cup\; \cI_{t-1}
= C_{t} \;\dot\cup\; (C_{t-1} \;\dot\cup\; C'_{t-1}) \;\dot\cup\; \cI_{t-2}
= \ldots
= C_{t} \;\dot\cup\; \bigcup_{i=1}^{t-1} (C_i \;\dot\cup\; C'_i) \;\dot\cup\; \wt{V}
\]

In this case, $h(G^*, \wt{V}) = r(t)-1$.
By definition, $n_{t-1} \leq |N^{r(t-1)}(\wt{V})|$ and we have
\begin{equation}
\label{eq:scenario1-bound}
n_{t-1} \leq |N^{r(t-1)}(\wt{V})| < |N^{h(G^*, \wt{V})}(\wt{V})|
\end{equation}
since $N^{r(t-1)}(\wt{V}) \subsetneq N^{h(G^*, \wt{V})}(\wt{V})$.
So,
\begin{align*}
|\cI_t| - |\wt{V}|
&= |C_{t}| + \sum_{i=1}^{t-1} \left( |C_i| + |C'_i| \right)\\
& \in \cO \left( \log n_{t-1} \cdot \nu_1(G^*) \right) + \cO \left( \log n_{t-1} \cdot \nu_1(G^*) \right) && \text{By \cref{eq:C-bound} and \cref{eq:sum-of-previous-C}}\\
&\subseteq \cO \left( \log |N^{h(G^*, \wt{V})}(\wt{V})| \cdot \nu_1(G^*) \right) && \text{\cref{eq:scenario1-bound}}
\end{align*}

\textbf{Scenario 2: Fully oriented after intervening on $C'_t$, i.e.\ $\cE_{\cI_{t-1} \;\cup\; C_{t} \;\cup\; C'_{t}}(G^*) = G^*$.}
Then,
\[
\cI_t
= C_{t} \;\dot\cup\; C'_{t} \;\dot\cup\; \cI_{t-1}
= \ldots
= C_{t} \;\dot\cup\; C'_{t} \;\dot\cup\; \bigcup_{i=1}^{t-1} (C_i \;\dot\cup\; C'_i) \;\dot\cup\; \wt{V}
\]

In this case, $h(G^*, \wt{V}) = r(t)$ and $N^{h(G^*, \wt{V})}(\wt{V}) = N^{r(t)}(\wt{V})$. So,
\begin{equation}
\label{eq:scenario2-bound}
n_t \leq |N^{r(t)}(\wt{V})| = |N^{h(G^*, \wt{V})}(\wt{V})|
\end{equation}
So,
\begin{align*}
|\cI_t| - |\wt{V}|
&= |C_{t}| + |C'_{t}| + \sum_{i=1}^{t-1} \left( |C_i| + |C'_i| \right)\\
& \in \cO \left( \left( \log n_{t-1} + n_t \right) \cdot \nu_1(G^*) \right) + \cO \left( \log n_{t-1} \cdot \nu_1(G^*) \right) && \text{By \cref{eq:C-bound} and \cref{eq:sum-of-previous-C}}\\
&\subseteq \cO \left( \log |N^{h(G^*, \wt{V})}(\wt{V})| \cdot \nu_1(G^*) \right) && \text{\cref{eq:scenario2-bound}}
\end{align*}

Since $|\wt{V}| \in \cO(\nu_1(G^*))$, we can conclude
\[
|\cI_t|
\in \cO \left( \nu(G^*) + \log |N^{h(G^*, \wt{V})}(\wt{V})| \cdot \nu_1(G^*) \right)
\subseteq \cO \left( \max \left\{1, \log |N^{h(G^*, \wt{V})}(\wt{V})| \right\} \cdot \nu_1(G^*) \right)
\]
in either scenario, as desired.
The theorem then follows by taking a maximization over all $\wt{V} \in \cV(\wt{G})$.

\textbf{Adapting the proof for $k > 1$}

By \cref{thm:efficient-near-optimal-bounded}, $\nu_k(G^*) \geq \lceil \nu_1(G^*)/k \rceil$.
So, $|\cI_0| \in \cO(\log k \cdot \nu_k(G^*))$ via \cref{lem:labelling-scheme}.
The rest of the proof follows the same structure except that we use the bounded size guarantee of \cref{thm:subsetsearch}, which incurs an additional multiplicative $\log k$ factor.

\textbf{Polynomial running time}

By construction, the \cref{alg:adaptive-search-with-advice-algo} is deterministic.
Furthermore, \cref{alg:adaptive-search-with-advice-algo} runs in polynomial time because:
\begin{itemize}
    \item Hop information and relevant nodes can be computed in polynomial time via breadth first search and maintaining suitable neighborhood information.
    \item It is known that performing Meek rules to obtain essential graphs takes polynomial time (\cite{wienobst2021extendability}).
    \item \cref{alg:adaptive-search-with-advice-algo} makes at most two calls to \texttt{SubsetSearch} whenever the number of relevant nodes is squared.
    Each \texttt{SubsetSearch} call is known to run in polynomial time (\cref{thm:subsetsearch}).
    Since this happens each time the number of relevant nodes is squared, this can happen at most $\cO(\log n)$ times.
\end{itemize}
\end{proof}

\searchwithpartialadvice*
\begin{proof}
Apply \cref{thm:search-with-advice} while taking a maximization over all possible advice DAGs $\wt{G}$ consistent with the given partial advice.
\end{proof}

\section{Path essential graph}
\label{sec:appendix-path}

In this section, we explain why our algorithm (\cref{alg:adaptive-search-with-advice-algo}) is simply the classic ``binary search with prediction''\footnote{e.g.\ see \url{https://en.wikipedia.org/wiki/Learning_augmented_algorithm\#Binary_search}} when the given essential graph $\cE(G^*)$ is an undirected path on $n$ vertices.
So, another way to view our result is a \emph{generalization} that works on essential graphs of arbitrary moral DAGs.

When the given essential graph is a path $\cE(G^*)$ on $n$ vertices, we know that there are $n$ possible DAGs in the Markov equivalence class where each DAG corresponds to choosing a single root node and having all edges pointing away from it.
Observe that a verifying set of any DAG is then simply the root node as the set of of covered edges in any rooted tree are precisely the edges incident to the root.

Therefore, given any $\wt{G} \in [G^*]$, we se that $h(G^*, \wt{V})$ measures the number of hops between the root of the advice DAG $\wt{G}$ and the root of the true DAG $G^*$.
Furthermore, by Meek rule R1, whenever we intervene on a vertex $u$ on the path, we will fully orient the ``half'' of the path that points away from the root while the subpath between $u$ and the root remains unoriented (except the edge directly incident to $u$).
So, one can see that \cref{alg:adaptive-search-with-advice-algo} is actually mimicking exponential search from the root of $\wt{G}$ towards the root of $G^*$.
Then, once the root of $G^*$ lies within the $r$-hop neighborhood $H$, \texttt{SubsetSearch} uses $\cO(\log |V(H)|)$ interventions, which matches the number of queries required by binary search within a fixed interval over $|V(H)|$ nodes.

\section{Experiments}
\label{sec:appendix-experiments}

In this section, we provide more details about our experiments.

All experiments were run on a laptop with Apple M1 Pro chip and 16GB of memory.
Source code implementation and experimental scripts are available at\\
\url{https://github.com/cxjdavin/active-causal-structure-learning-with-advice}.

\subsection{Experimental setup}

For experiments, we evaluated our advice algorithm on the synthetic graph instances of \cite{wienobst2021polynomial}\footnote{See Appendix E of \cite{wienobst2021polynomial} for details about each class of synthetic graphs. Instances are available at \url{https://github.com/mwien/CliquePicking/tree/master/aaai_experiments}} on graph instances of sizes $n = \{16, 32, 64\}$.
For each undirected chordal graph instance, we do the following:
\begin{enumerate}
    \item Set $m = 1000$ as the number of advice DAGs that we will sample.
    \item Use the uniform sampling algorithm of \cite{wienobst2021polynomial} to uniformly sample $m$ advice DAGs $\wt{G}_1, \ldots, \wt{G}_{m}$.
    \item Randomly select $G^*$ from one of $\wt{G}_1, \ldots, \wt{G}_{m}$.
    \item For each $\wt{G} \in \{\wt{G}_1, \ldots, \wt{G}_{m}\}$,
    \begin{itemize}
        \item Compute a minimum verifying set $\wt{V}$ of $\wt{G}$.
        \item Define and compute $\psi(G^*, \wt{V}) = \left| \rho \left( \wt{V}, N_{\skel(\cE(G^*))}^{h(G^*, \wt{V})}(\wt{V}) \right) \right|$.
        \item Compute a verifying set using $(\cE(G^*), \wt{G})$ as input to \cref{alg:adaptive-search-with-advice-algo}.
    \end{itemize}
    \item Aggregate the sizes of the verifying sets used based on $\psi(G^*, \wt{V})$ and compute the mean and standard deviations.
    \item Compare against verification number $\nu_1(G^*)$ and the number of interventions used by the fully adaptive search (without advice, which we denote as ``blind search'' in the plots) of \cite{choo2022verification}. 
    \item Compute the empirical distribution of the quality measure amongst the $m$ advice DAGs, then use standard sample complexity arguments for estimating distributions up to $\eps$ error in TV distance to compute a confidence interval for which the true cumulative probability density of all DAGs within the MEC lies within\footnote{For example, see Theorem 1 of \cite{canonne2020short}.}.
    To be precise, it is known that for a discrete distribution $P$ on $k$ elements, when there are $m \geq \max\{k/\eps^2, (2/\eps^2) \cdot \ln (2/\delta)\}$ uniform samples, the probability that the TV distance between the true distribution $P$ and the empirical distribution $P$ is less than $\eps$ is at least $1-\delta$.
    Since the \emph{upper bound} on the domain size of quality measure is the number of nodes $n$, by setting $m=1000$ and $\delta = 0.01$, we can compute $\eps = \max\{\sqrt{n/m}, \sqrt{(2/m) \cdot \ln (2/\delta)}\}$ and conclude that the probability that the true cumulative probability density of all DAGs within the MEC lies within $\eps$ distance (clipped to be between 0 and 1) of the empirical distribution is at least 99\%.
\end{enumerate}

\subsection{Experimental remarks}

\begin{itemize}
    \item The uniform sampling code of \cite{wienobst2021polynomial} is written in Julia and it uses a non-trivial amount of memory, which may make it unsuitable for running on a shared server with memory constraints.
    \item Note that $\psi(G^*, \wt{V}) \leq \psi(G^*, \wt{G}) = \max_{\wt{V} \in \cV(\wt{G})} \left| \rho \left( \wt{V}, N^{h(G^*, \wt{V})}_{\skel(\cE(G^*))}(\wt{V}) \right) \right|$.
    We use $\psi(G^*, \wt{V})$ as a proxy for $\psi(G^*, \wt{G})$ because we do not know if there is an efficient way to compute the latter besides the naive (possibly exponential time) enumeration over all possible minimum verifying sets.
    \item We also experimented with an ``unsafe'' variant of \cref{alg:adaptive-search-with-advice-algo} where we ignore the second tweak of intervening one round before.
    In our synthetic experiments, both variants use a similar number of interventions.
    \item We do not plot the theoretical upper bounds $\cO(\log \psi(G^*, \wt{V}) \cdot \nu_1(G^*))$ or $\cO(\log n \cdot \nu_1(G^*))$ because these values are a significantly higher than the other curves and result in ``squashed'' (and less interesting/interpretable) plots.
    \item Even when $\psi(G^*, \wt{V}) = 0$, there could be cases where \cite{choo2022verification} uses more interventions than $\nu_1(G^*)$.
    For example, consider \cref{fig:two-approx-tight} with $G^* = G_2$ and $\wt{G} = G_1$.
    After intervening on $\wt{V} = \{b,c\}$, the entire graph will be oriented so the $\psi(G^*, \wt{V}) = 0$ while $\nu_1(G^*) = 1 < 2 = |\wt{V}|$.
    Fortunately, \cref{thm:two-approx-ratio} guarantees that $|\wt{V}| \leq 2 \cdot \nu_1(G^*)$.
    \item Note that the error bar may appear ``lower'' than the verification number even though all intervention sizes are at least as large as the verification number.
    For instance, if $\nu_1(G^*) = 6$ and we used $(6,6,7)$ interventions on three different $\wt{G}$'s, each with $\psi(G^*, \wt{V}) = 0$.
    In this case, the mean is $6.3333\ldots$ while the standard deviation is $0.4714\ldots$, so the error bar will display an interval of $[5.86\ldots, 6.80\ldots]$ whose lower interval is below $\nu_1(G^*) = 6$.
\end{itemize}

\subsection{All experimental plots}

For details about the synthetic graph classes, see Appendix E of \cite{wienobst2021polynomial}.
Each experimental plot is for one of the synthetic graphs $G^*$, with respect to $1000 \ll |[G^*]|$ uniformly sampled advice DAGs $\wt{G}$ from the MEC $[G^*]$.
The solid lines indicate the number of atomic interventions used while the dotted lines indicate the empirical cumulative probability density of $\wt{G}$.
The true cumulative probability density lies within the shaded area with probability at least 0.99.

\begin{figure}[htbp]
\centering
\begin{subfigure}[t]{0.3\linewidth}
    \centering
    \includegraphics[width=\linewidth]{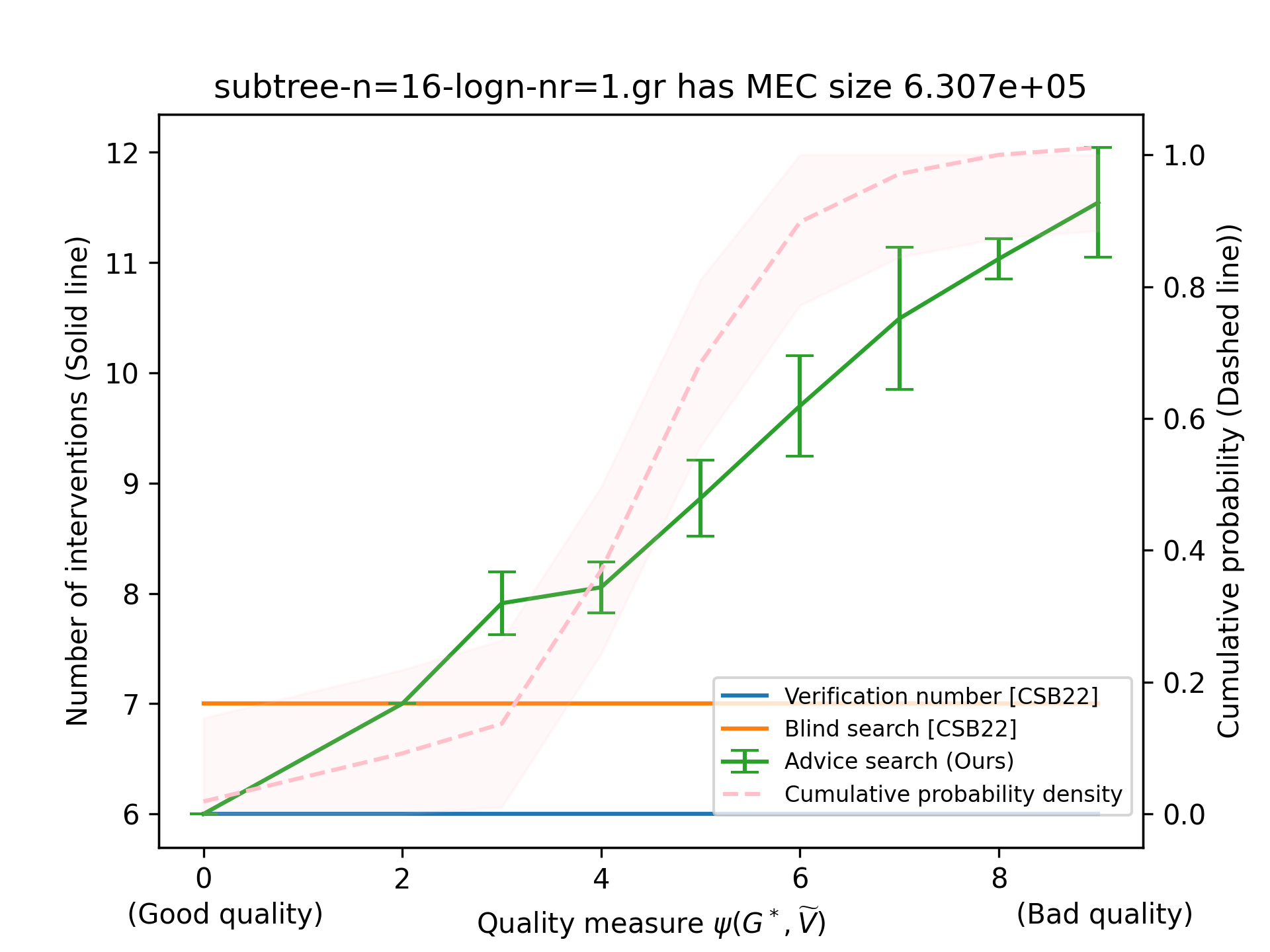}
    \caption{$n = 16$}
\end{subfigure}
\begin{subfigure}[t]{0.3\linewidth}
    \centering
    \includegraphics[width=\linewidth]{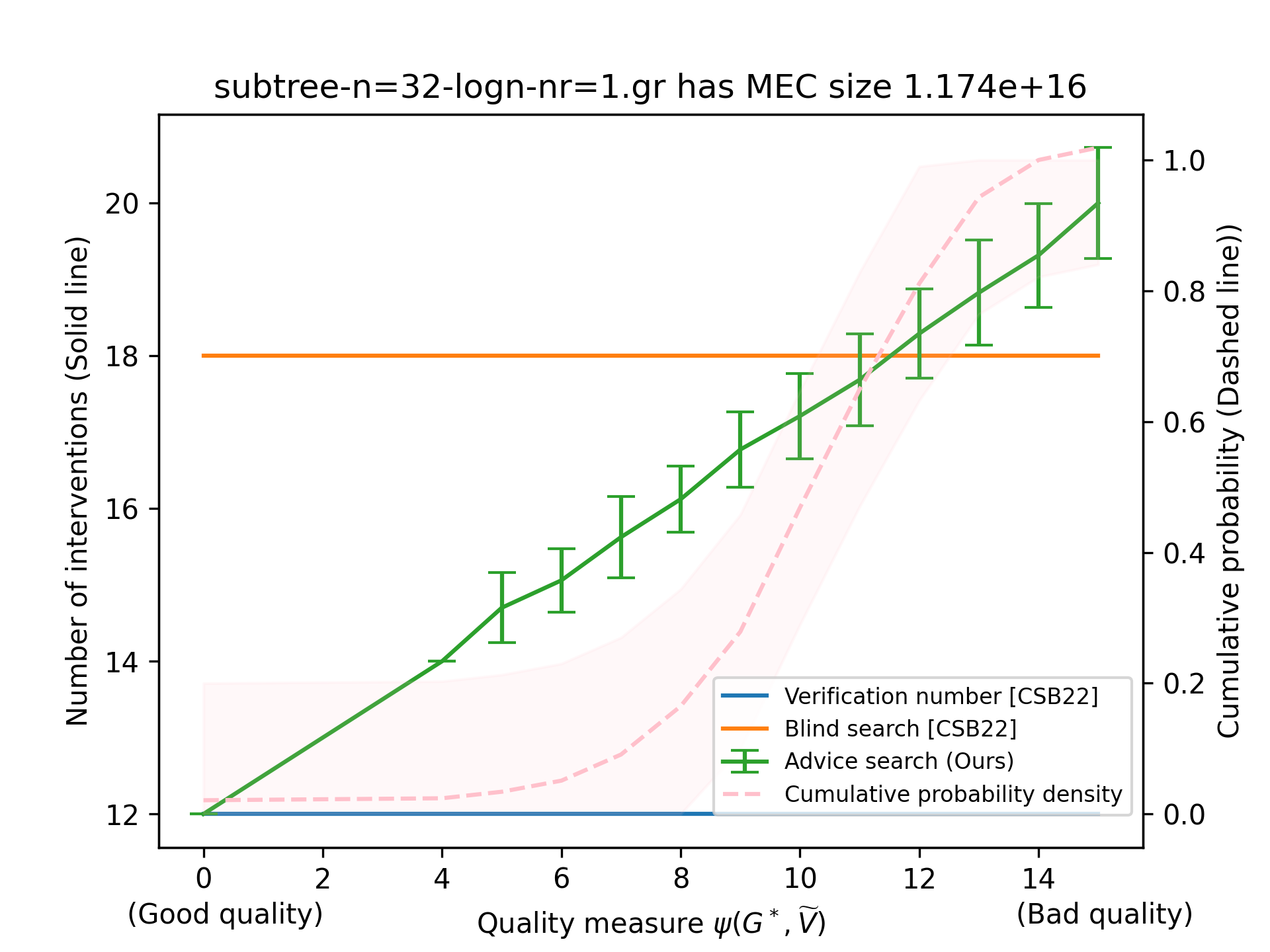}
    \caption{$n = 32$}
\end{subfigure}
\begin{subfigure}[t]{0.3\linewidth}
    \centering
    \includegraphics[width=\linewidth]{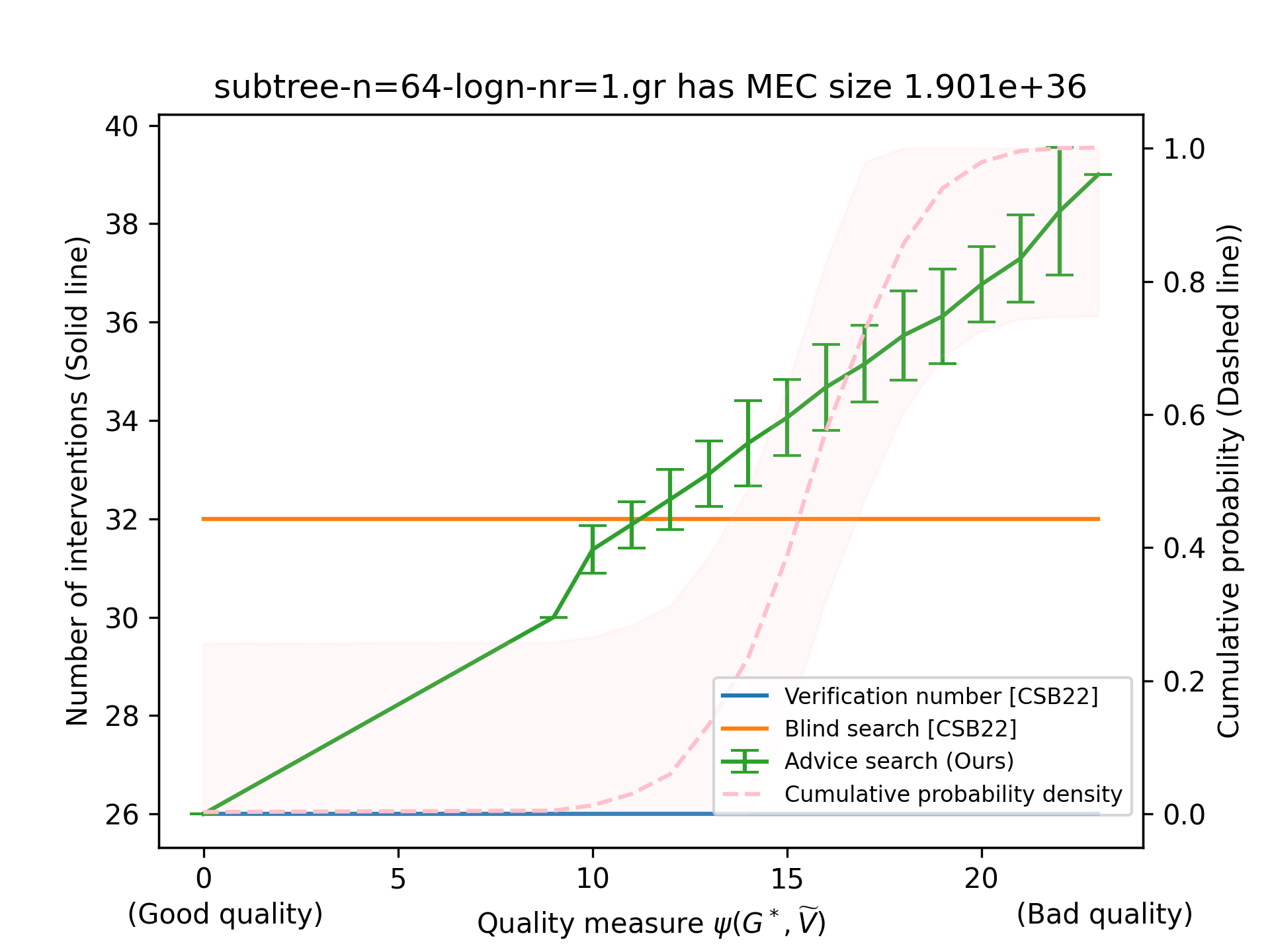}
    \caption{$n = 64$}
\end{subfigure}
\caption{Subtree-logn synthetic graphs}
\label{fig:subtree-logn}
\end{figure}

\begin{figure}[htbp]
\centering
\begin{subfigure}[t]{0.3\linewidth}
    \centering
    \includegraphics[width=\linewidth]{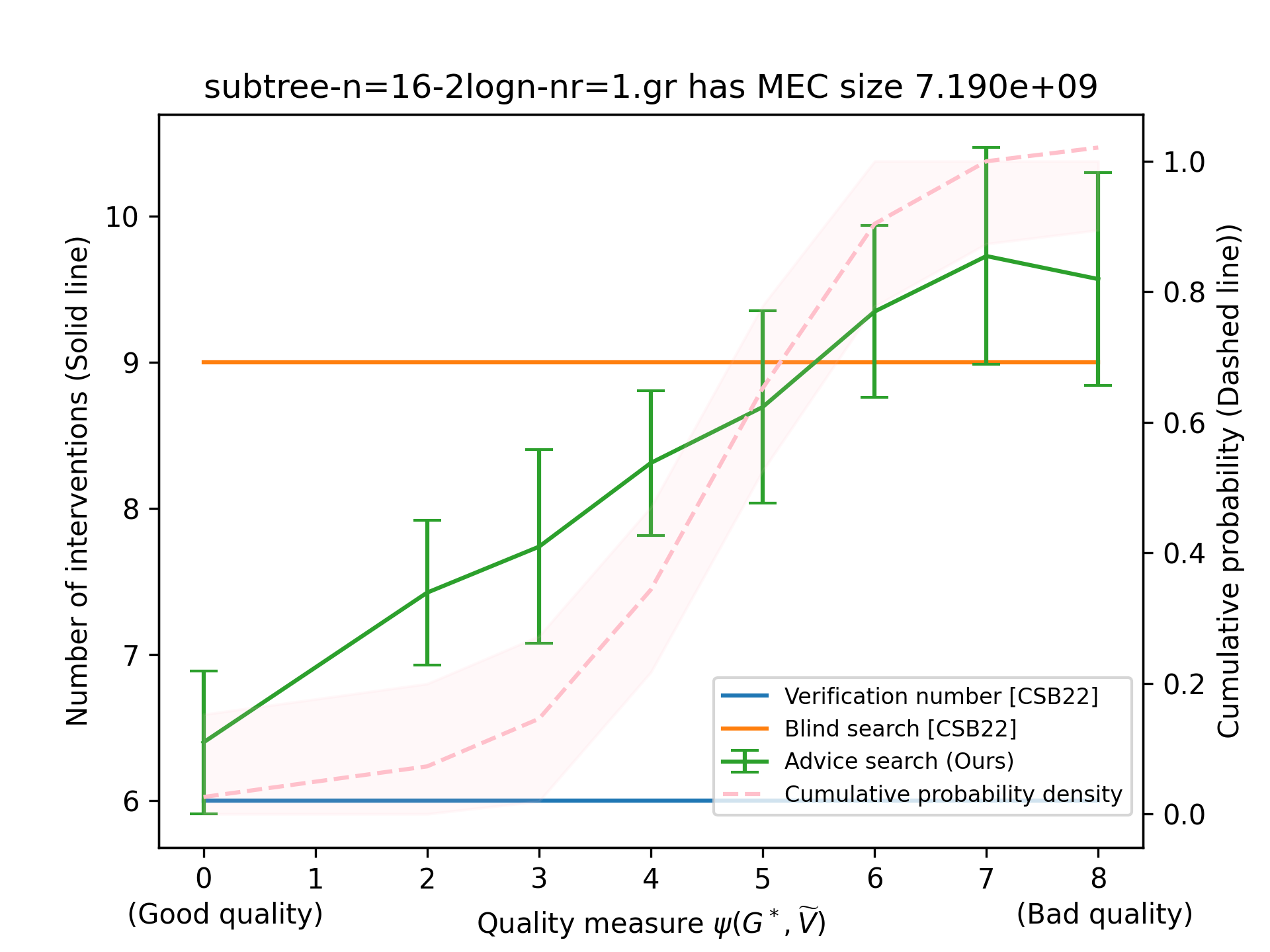}
    \caption{$n = 16$}
\end{subfigure}
\begin{subfigure}[t]{0.3\linewidth}
    \centering
    \includegraphics[width=\linewidth]{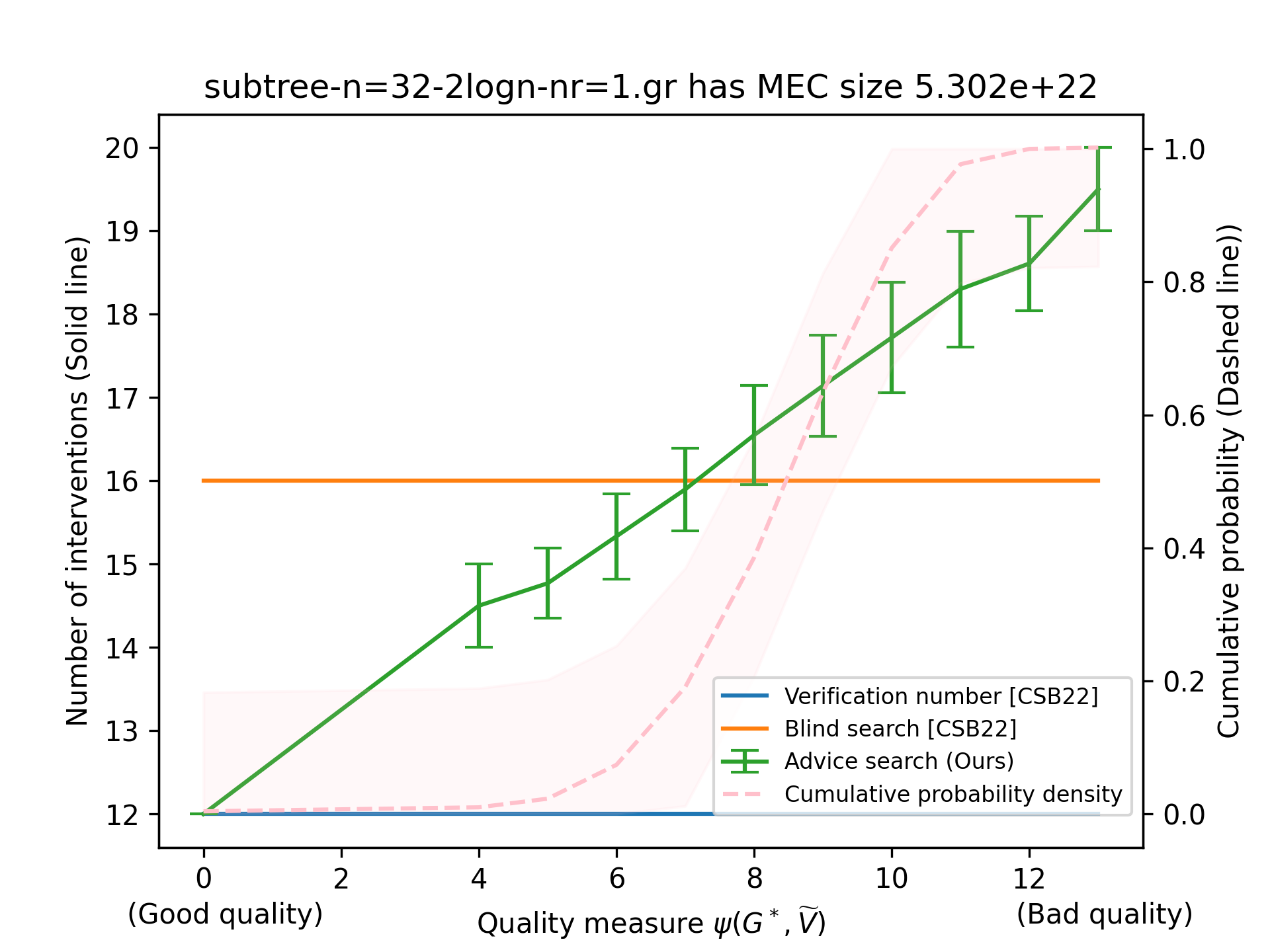}
    \caption{$n = 32$}
\end{subfigure}
\begin{subfigure}[t]{0.3\linewidth}
    \centering
    \includegraphics[width=\linewidth]{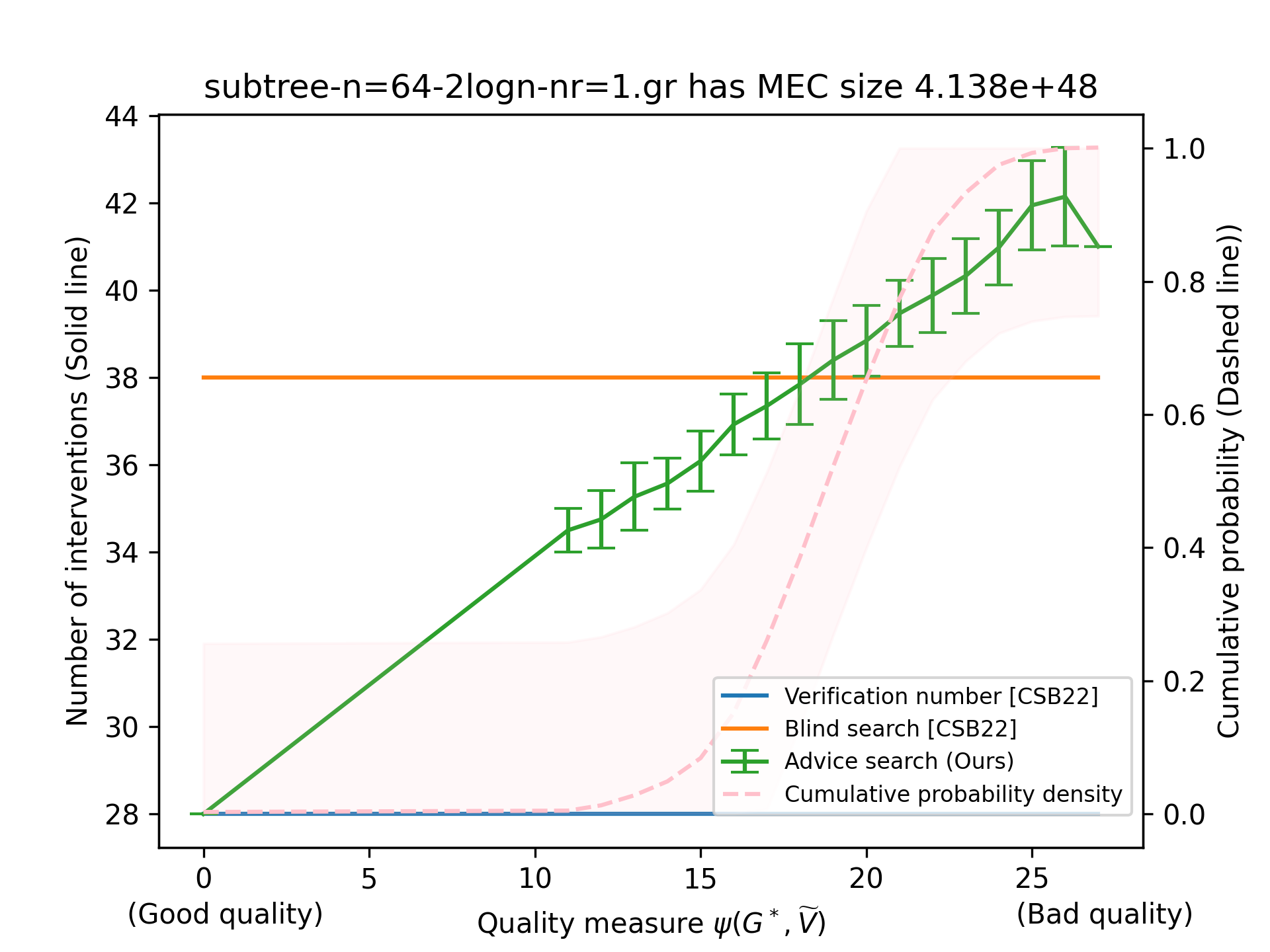}
    \caption{$n = 64$}
\end{subfigure}
\caption{Subtree-2logn synthetic graphs}
\label{fig:subtree-2logn}
\end{figure}

\begin{figure}[htbp]
\centering
\begin{subfigure}[t]{0.3\linewidth}
    \centering
    \includegraphics[width=\linewidth]{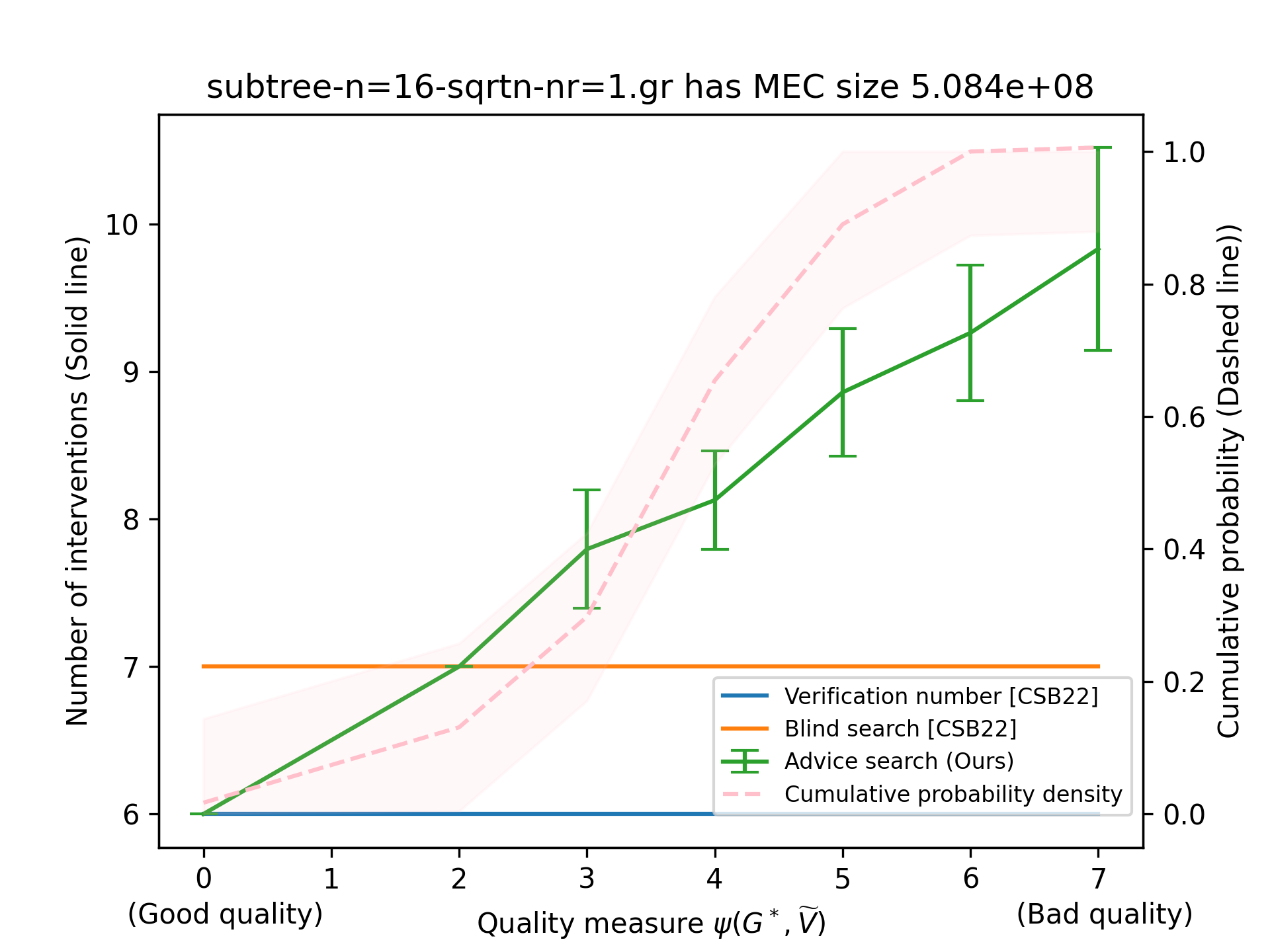}
    \caption{$n = 16$}
\end{subfigure}
\begin{subfigure}[t]{0.3\linewidth}
    \centering
    \includegraphics[width=\linewidth]{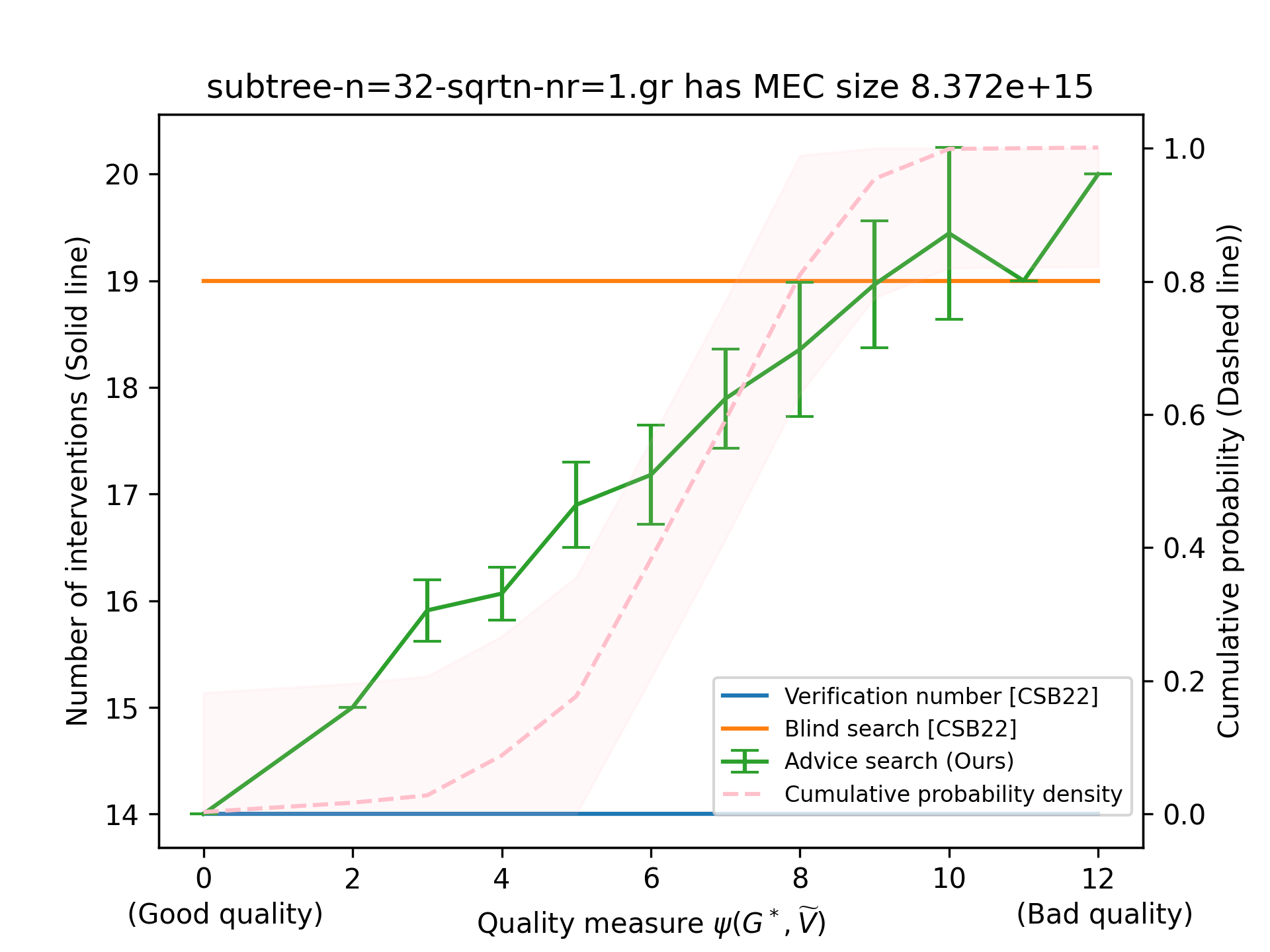}
    \caption{$n = 32$}
\end{subfigure}
\begin{subfigure}[t]{0.3\linewidth}
    \centering
    \includegraphics[width=\linewidth]{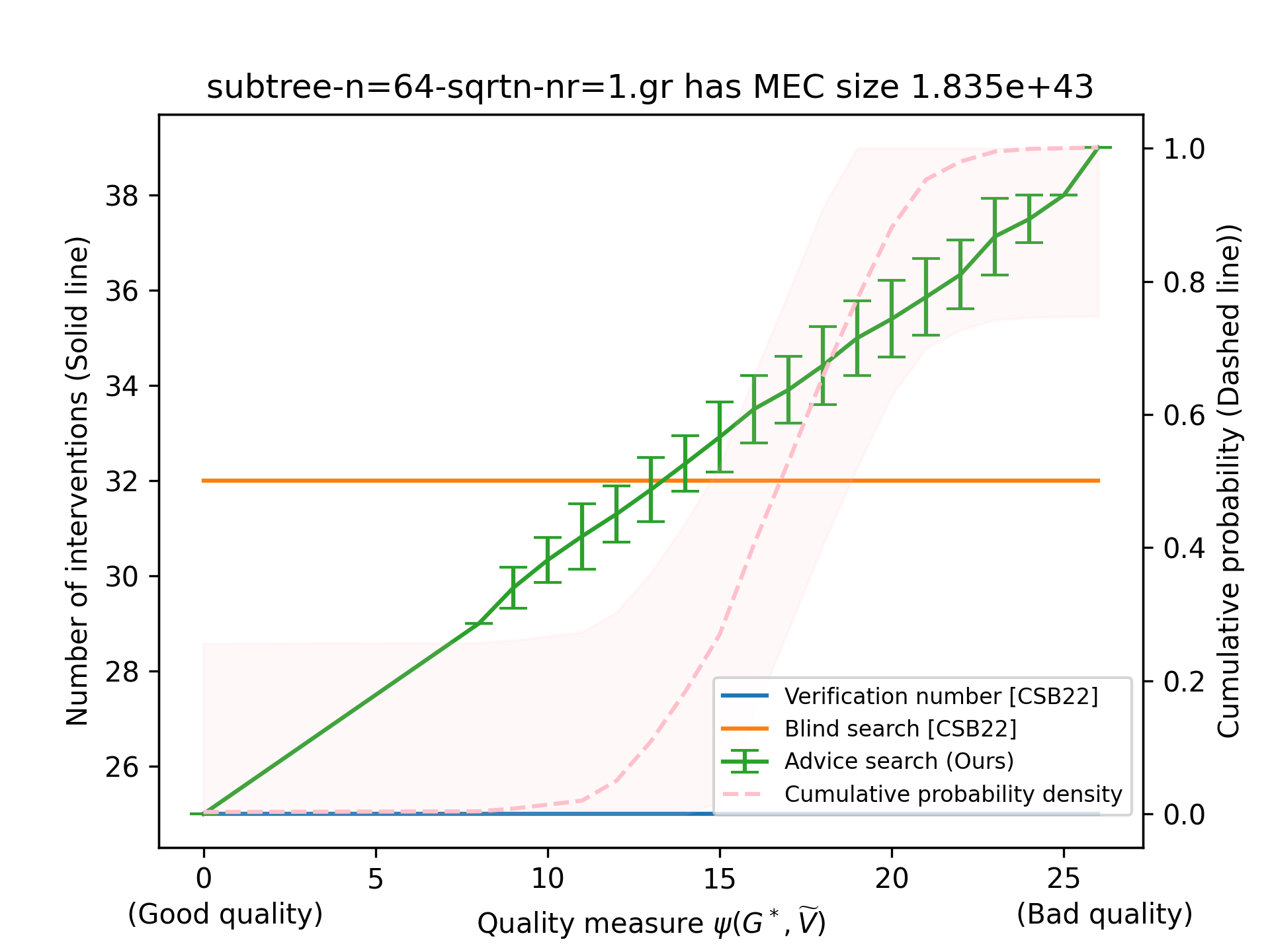}
    \caption{$n = 64$}
\end{subfigure}
\caption{Subtree-sqrtn synthetic graphs}
\label{fig:subtree-sqrtn}
\end{figure}

\begin{figure}[htbp]
\centering
\begin{subfigure}[t]{0.3\linewidth}
    \centering
    \includegraphics[width=\linewidth]{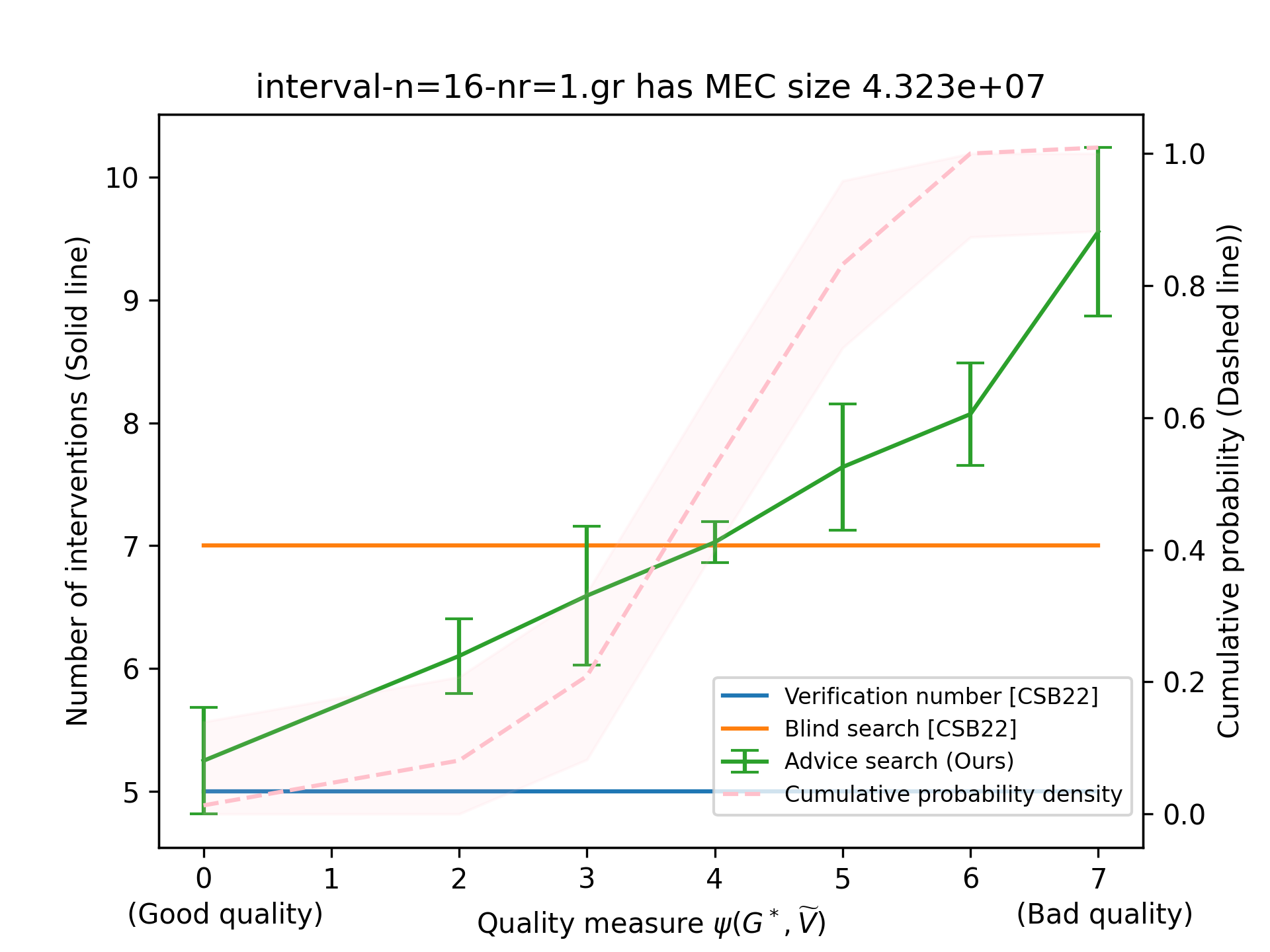}
    \caption{$n = 16$}
\end{subfigure}
\begin{subfigure}[t]{0.3\linewidth}
    \centering
    \includegraphics[width=\linewidth]{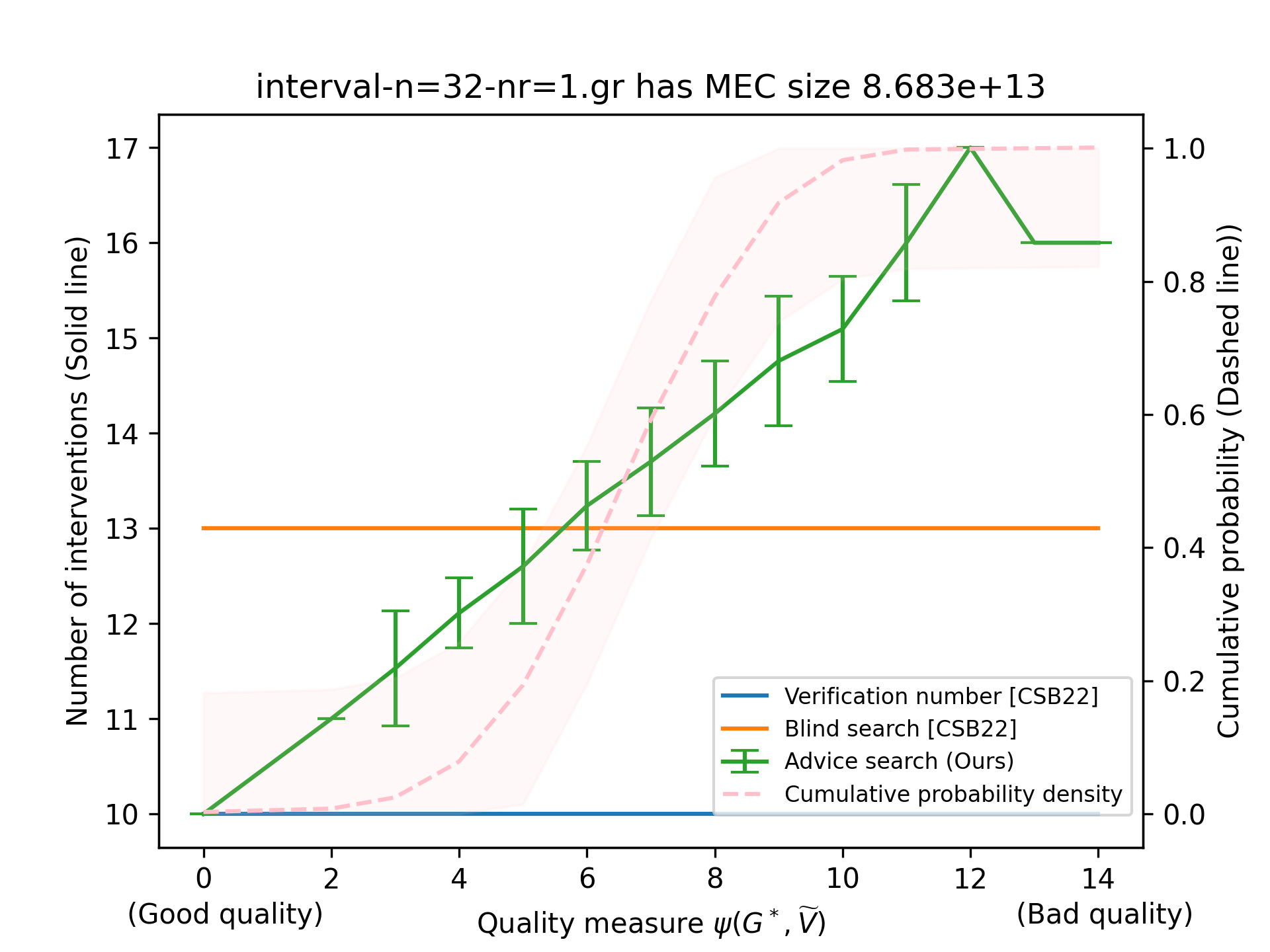}
    \caption{$n = 32$}
\end{subfigure}
\begin{subfigure}[t]{0.3\linewidth}
    \centering
    \includegraphics[width=\linewidth]{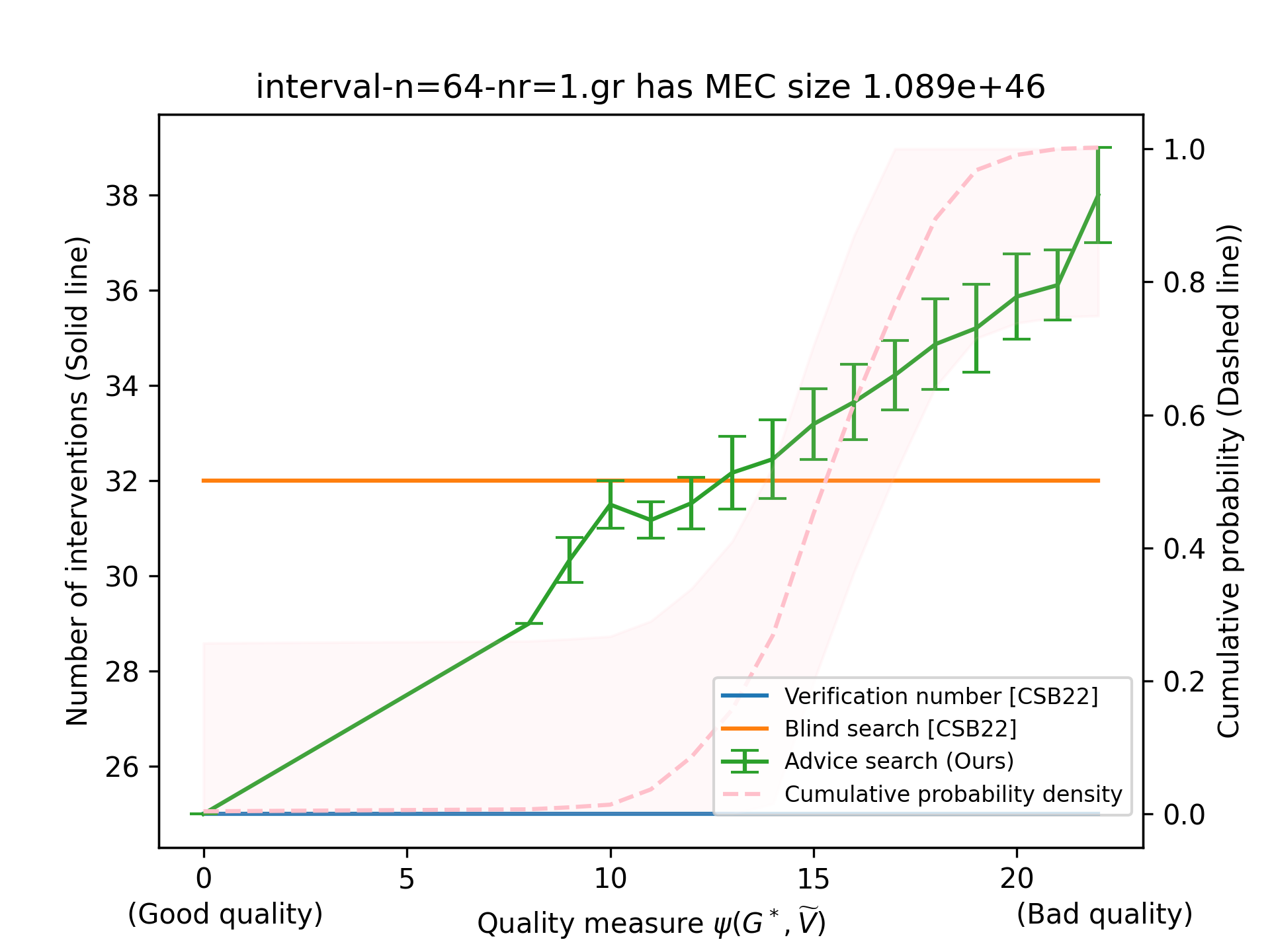}
    \caption{$n = 64$}
\end{subfigure}
\caption{Interval synthetic graphs}
\label{fig:interval}
\end{figure}

\begin{figure}[htbp]
\centering
\begin{subfigure}[t]{0.3\linewidth}
    \centering
    \includegraphics[width=\linewidth]{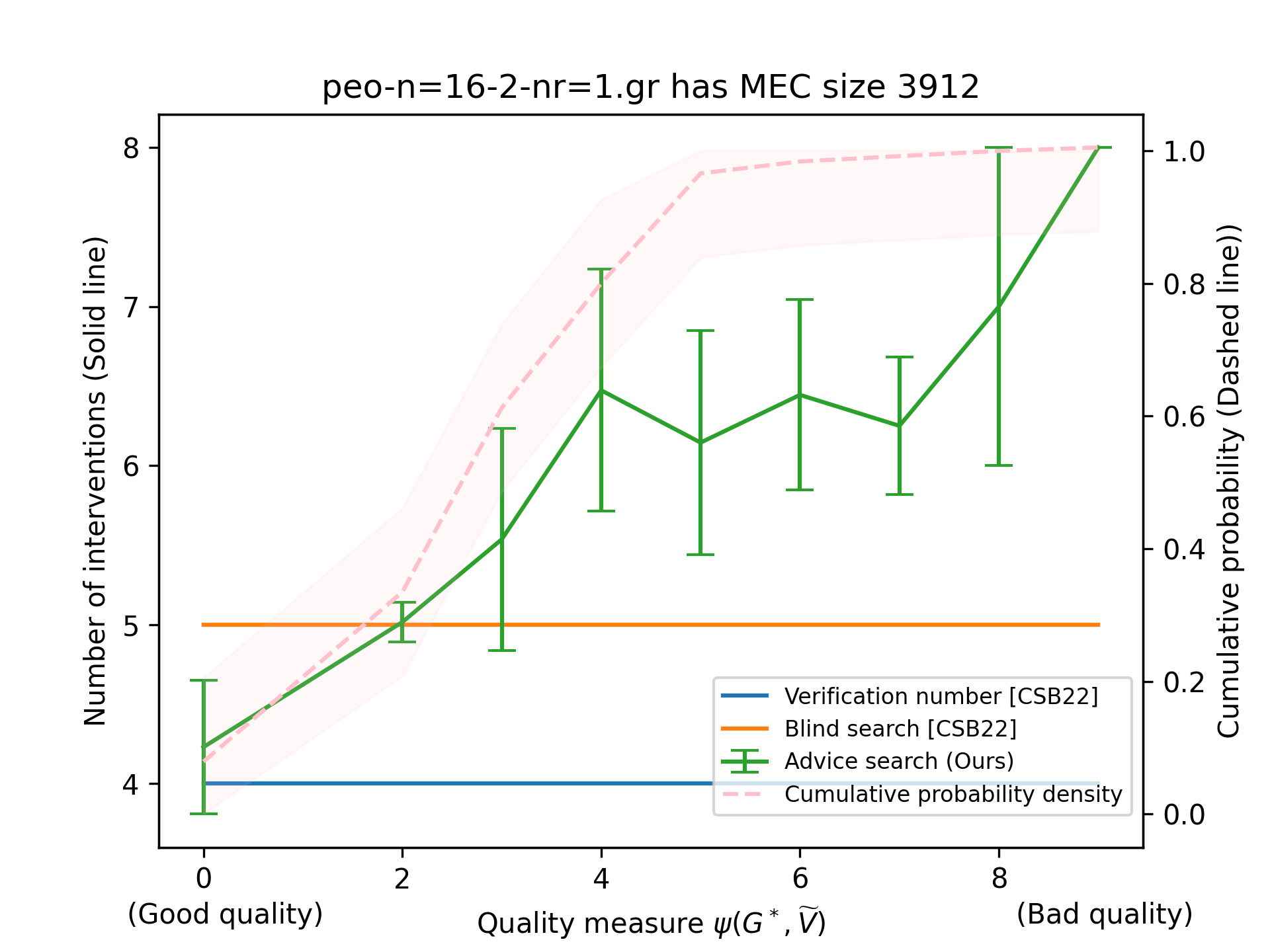}
    \caption{$n = 16$}
\end{subfigure}
\begin{subfigure}[t]{0.3\linewidth}
    \centering
    \includegraphics[width=\linewidth]{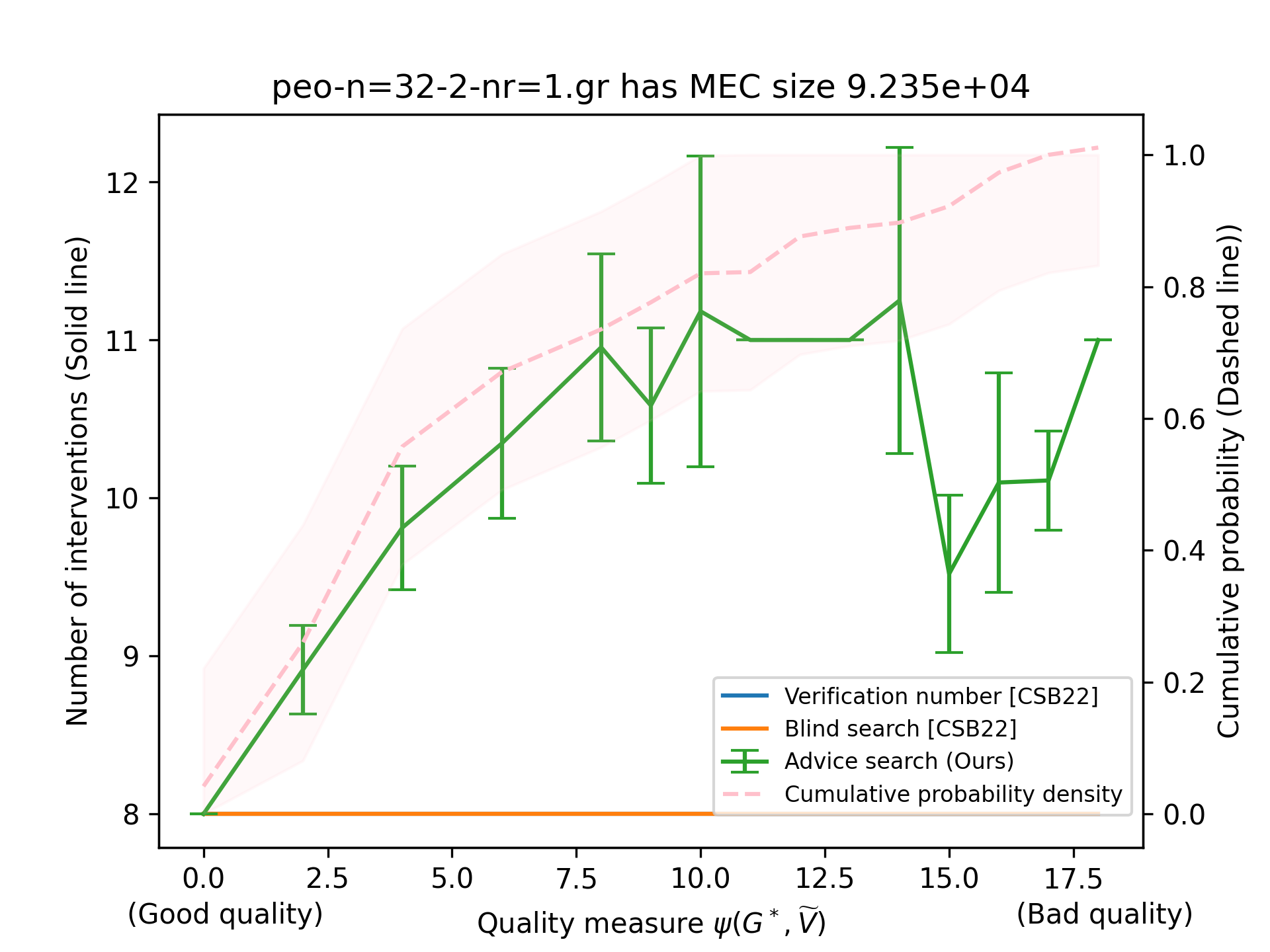}
    \caption{$n = 32$}
\end{subfigure}
\begin{subfigure}[t]{0.3\linewidth}
    \centering
    \includegraphics[width=\linewidth]{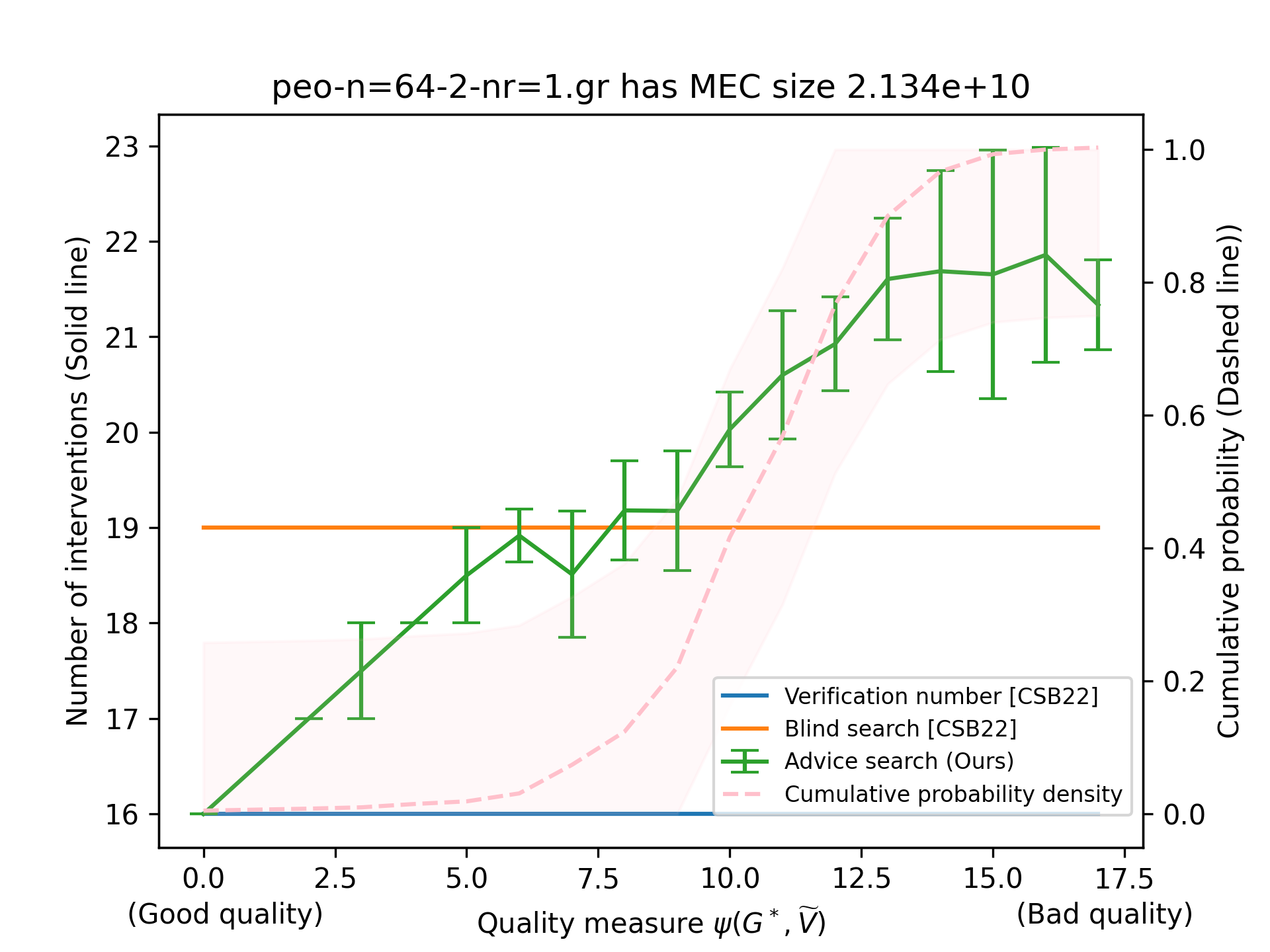}
    \caption{$n = 64$}
\end{subfigure}
\caption{peo-2 synthetic graphs}
\label{fig:peo-2}
\end{figure}

\begin{figure}[htbp]
\centering
\begin{subfigure}[t]{0.3\linewidth}
    \centering
    \includegraphics[width=\linewidth]{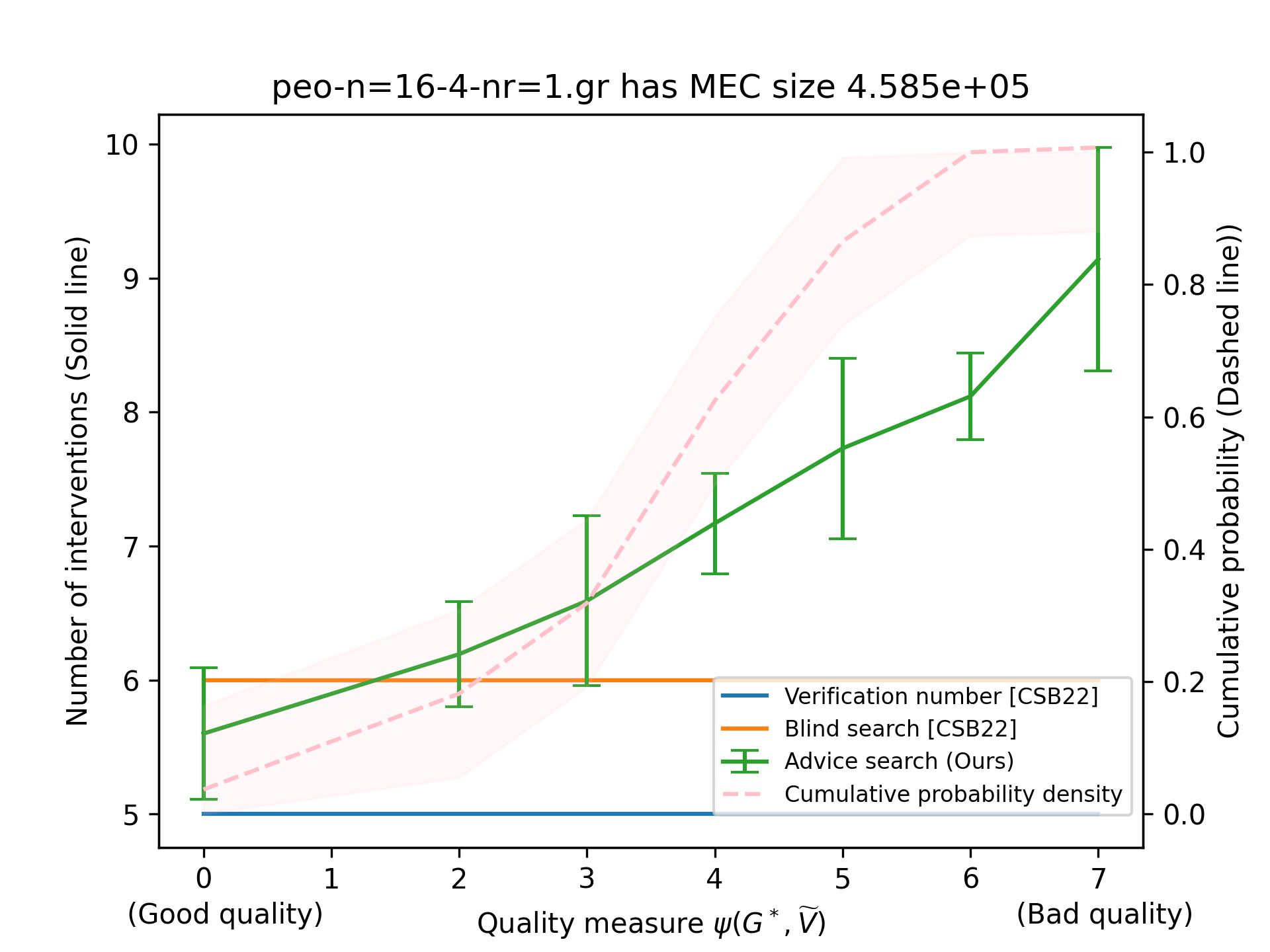}
    \caption{$n = 16$}
\end{subfigure}
\begin{subfigure}[t]{0.3\linewidth}
    \centering
    \includegraphics[width=\linewidth]{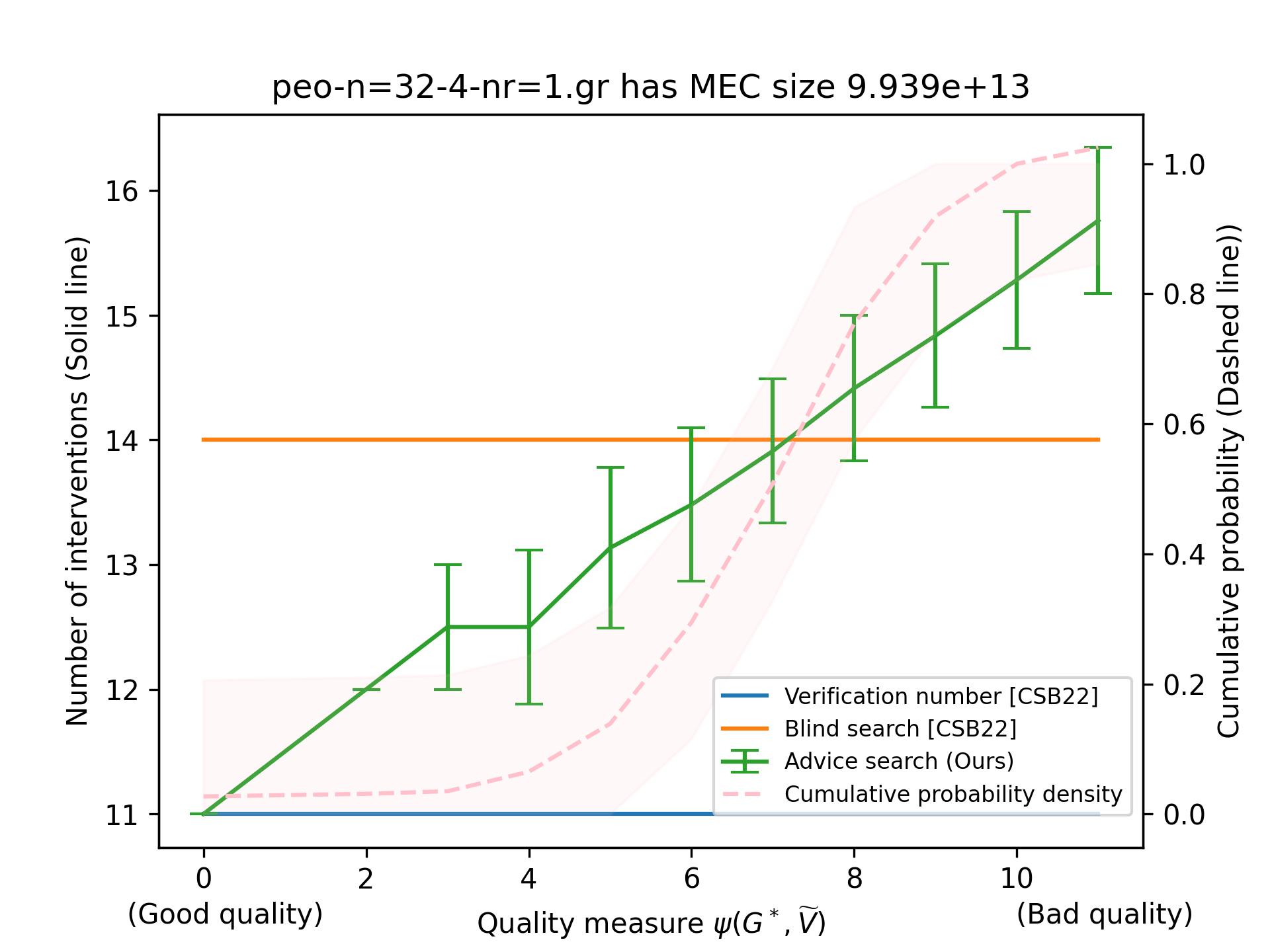}
    \caption{$n = 32$}
\end{subfigure}
\begin{subfigure}[t]{0.3\linewidth}
    \centering
    \includegraphics[width=\linewidth]{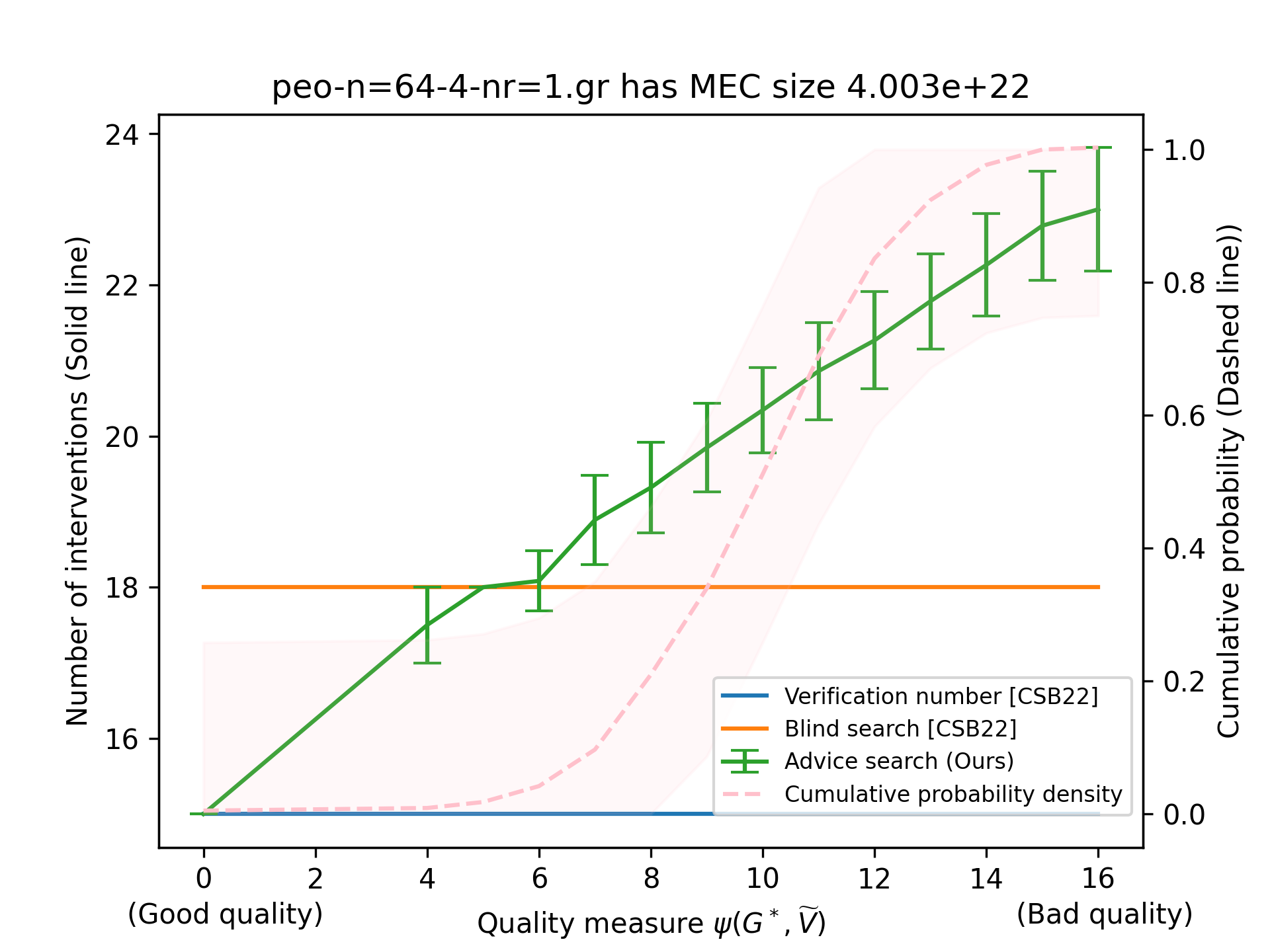}
    \caption{$n = 64$}
\end{subfigure}
\caption{peo-4 synthetic graphs}
\label{fig:peo-4}
\end{figure}

\begin{figure}[htbp]
\centering
\begin{subfigure}[t]{0.3\linewidth}
    \centering
    \includegraphics[width=\linewidth]{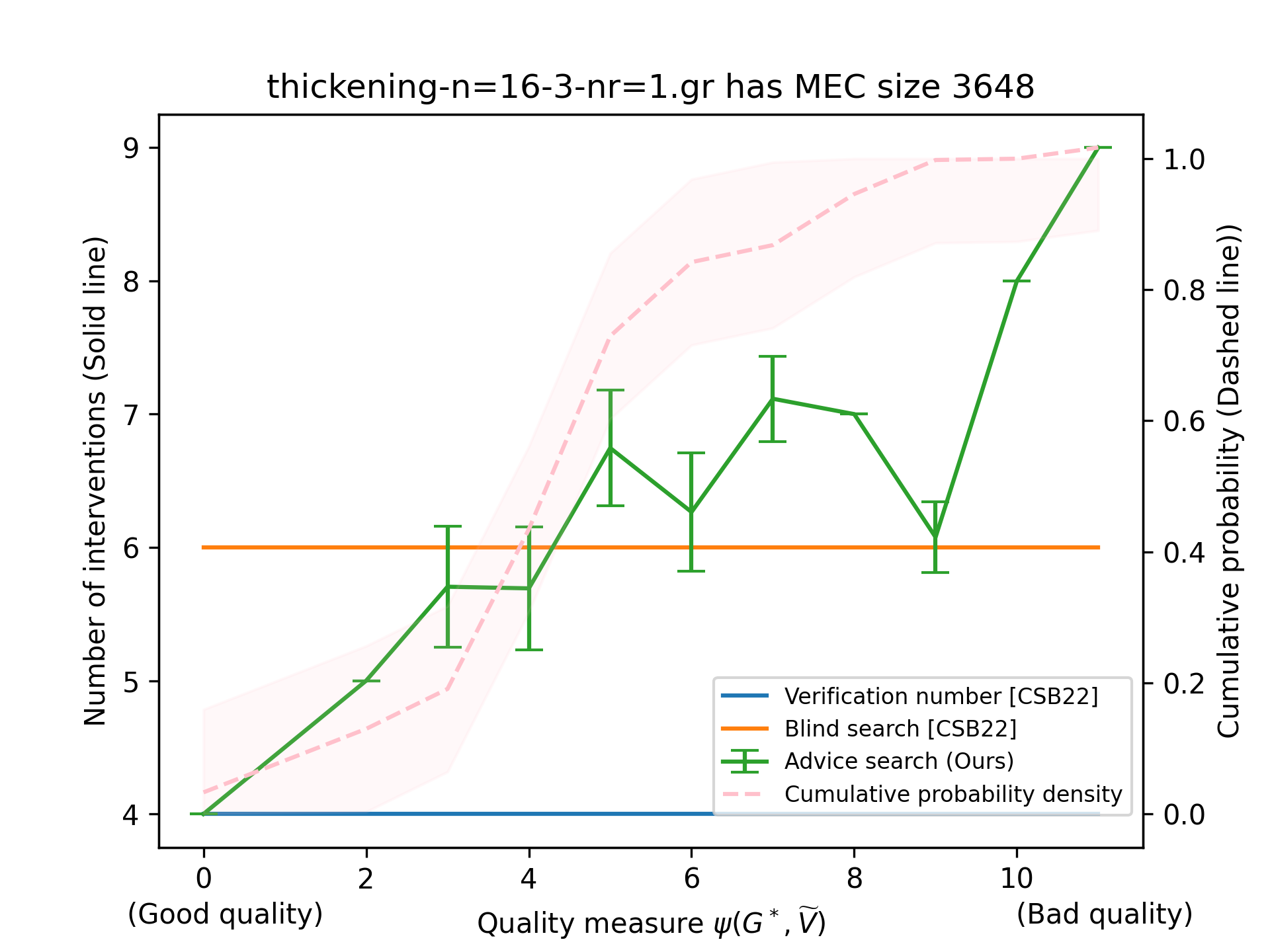}
    \caption{$n = 16$}
\end{subfigure}
\begin{subfigure}[t]{0.3\linewidth}
    \centering
    \includegraphics[width=\linewidth]{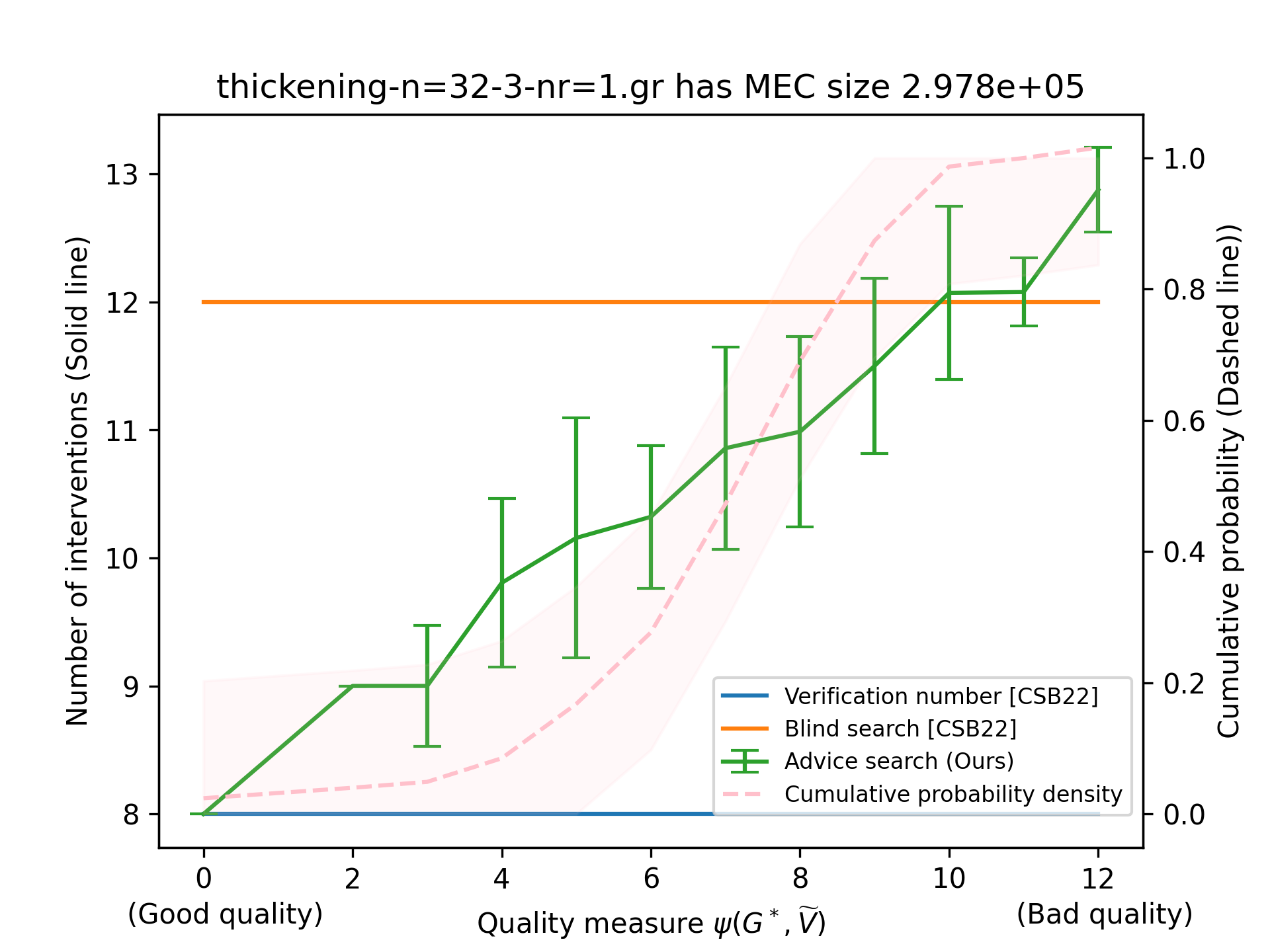}
    \caption{$n = 32$}
\end{subfigure}
\begin{subfigure}[t]{0.3\linewidth}
    \centering
    \includegraphics[width=\linewidth]{plots/thickening-n=64-3-nr=1.gr_1000.png}
    \caption{$n = 64$}
\end{subfigure}
\caption{Thickening-3 synthetic graphs}
\label{fig:thickening-3}
\end{figure}

\begin{figure}[htbp]
\centering
\begin{subfigure}[t]{0.3\linewidth}
    \centering
    \includegraphics[width=\linewidth]{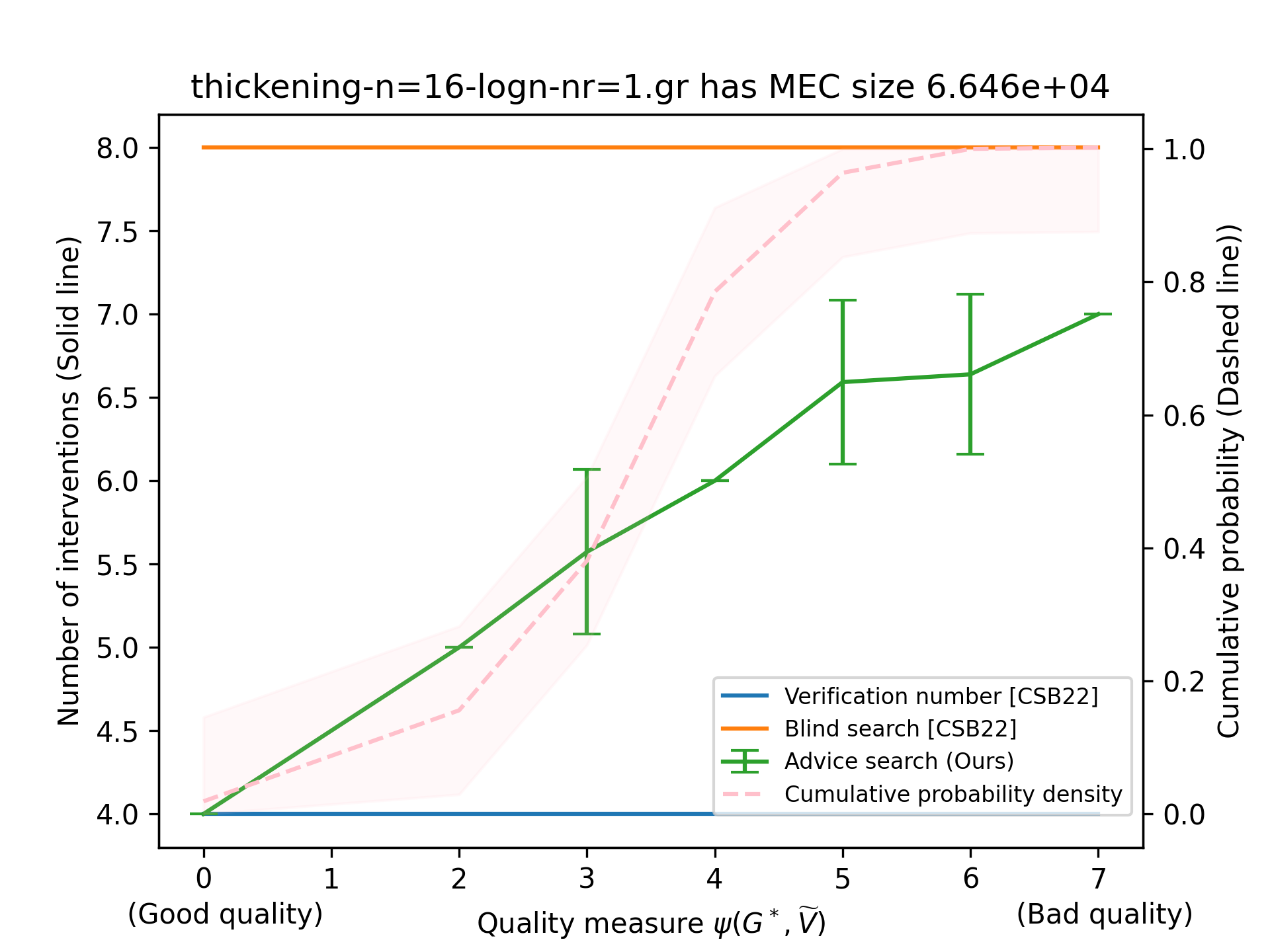}
    \caption{$n = 16$}
\end{subfigure}
\begin{subfigure}[t]{0.3\linewidth}
    \centering
    \includegraphics[width=\linewidth]{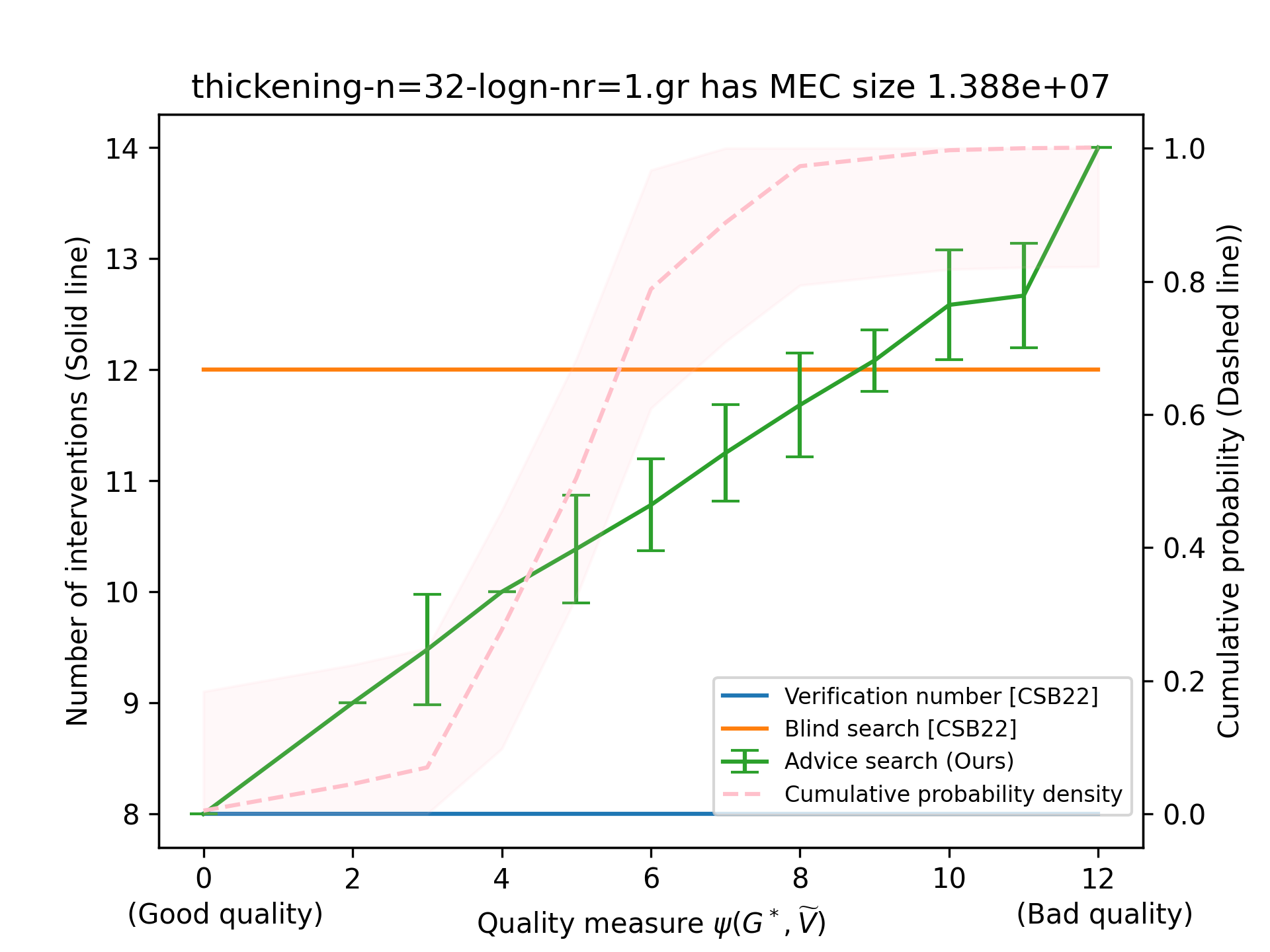}
    \caption{$n = 32$}
\end{subfigure}
\begin{subfigure}[t]{0.3\linewidth}
    \centering
    \includegraphics[width=\linewidth]{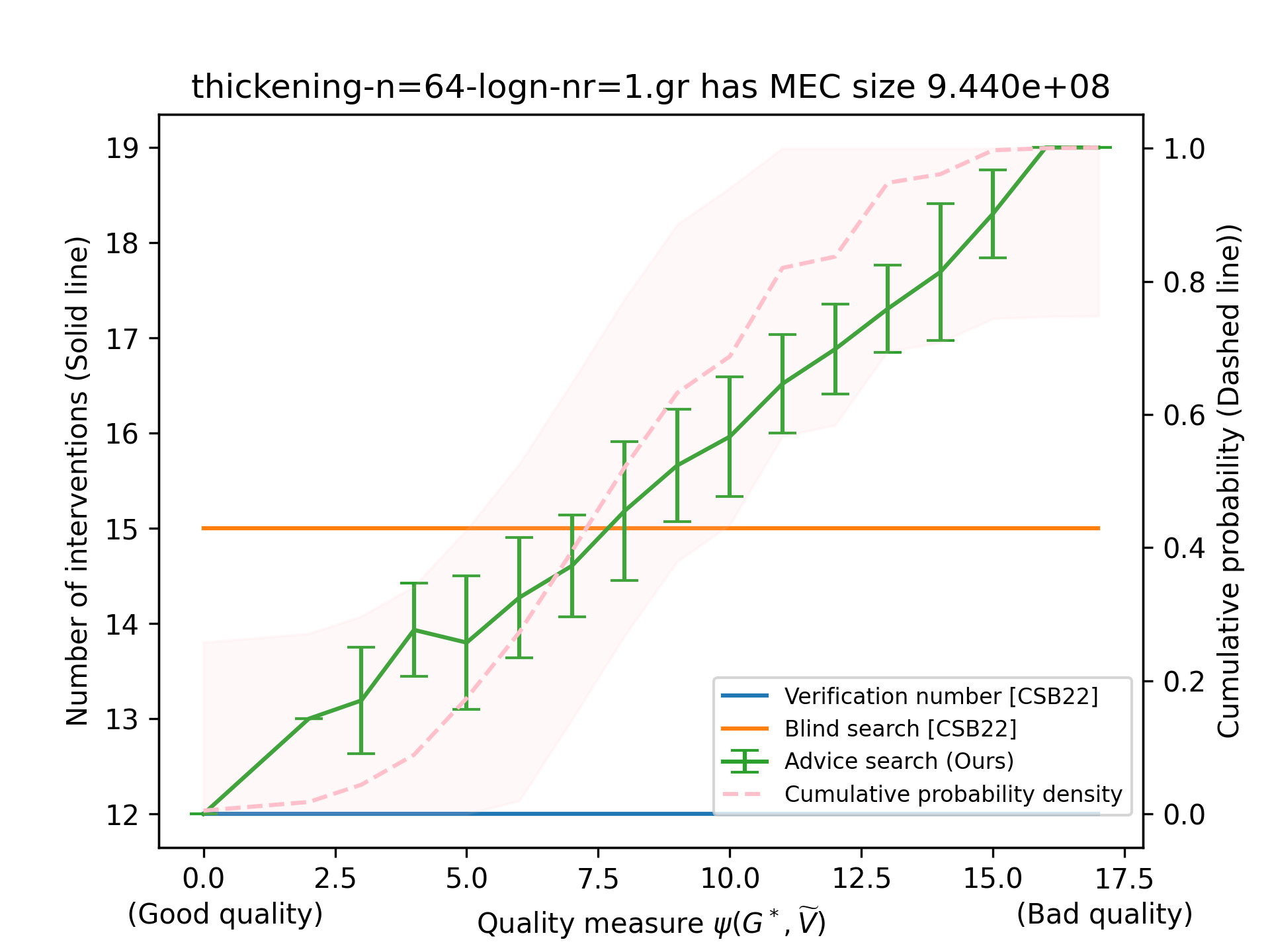}
    \caption{$n = 64$}
\end{subfigure}
\caption{Thickening-logn synthetic graphs}
\label{fig:thickening-logn}
\end{figure}

\begin{figure}[htbp]
\centering
\begin{subfigure}[t]{0.3\linewidth}
    \centering
    \includegraphics[width=\linewidth]{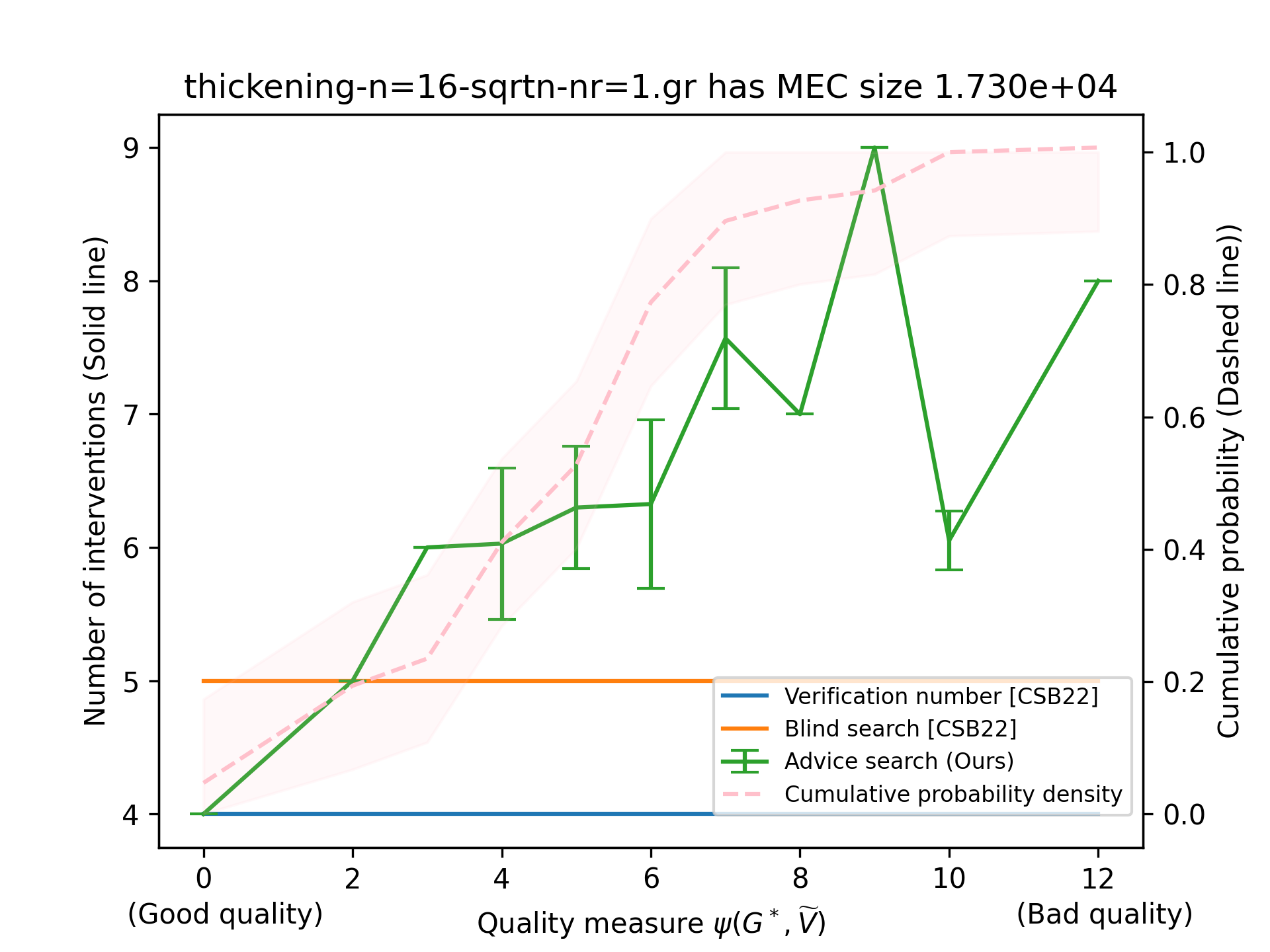}
    \caption{$n = 16$}
\end{subfigure}
\begin{subfigure}[t]{0.3\linewidth}
    \centering
    \includegraphics[width=\linewidth]{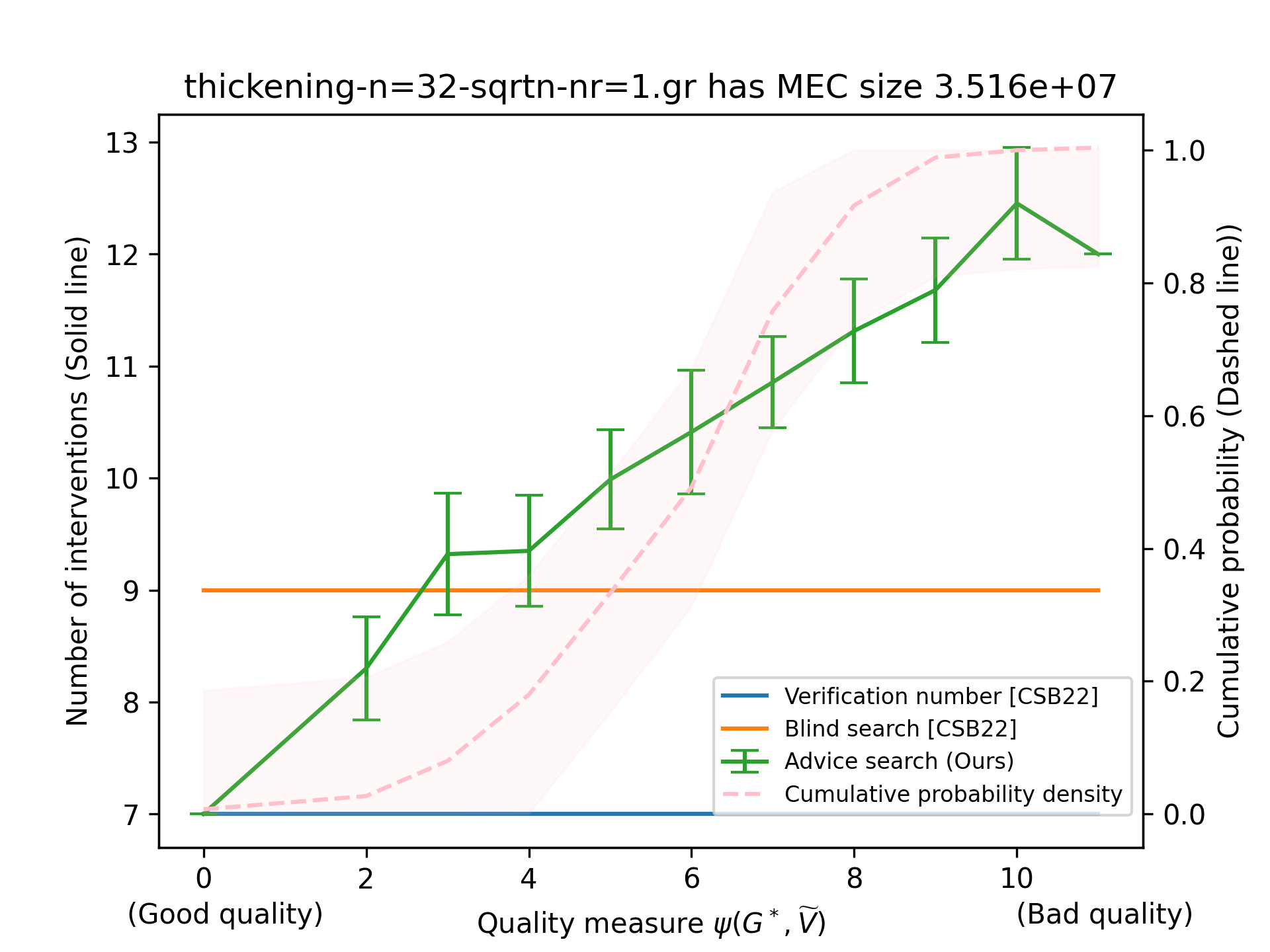}
    \caption{$n = 32$}
\end{subfigure}
\begin{subfigure}[t]{0.3\linewidth}
    \centering
    \includegraphics[width=\linewidth]{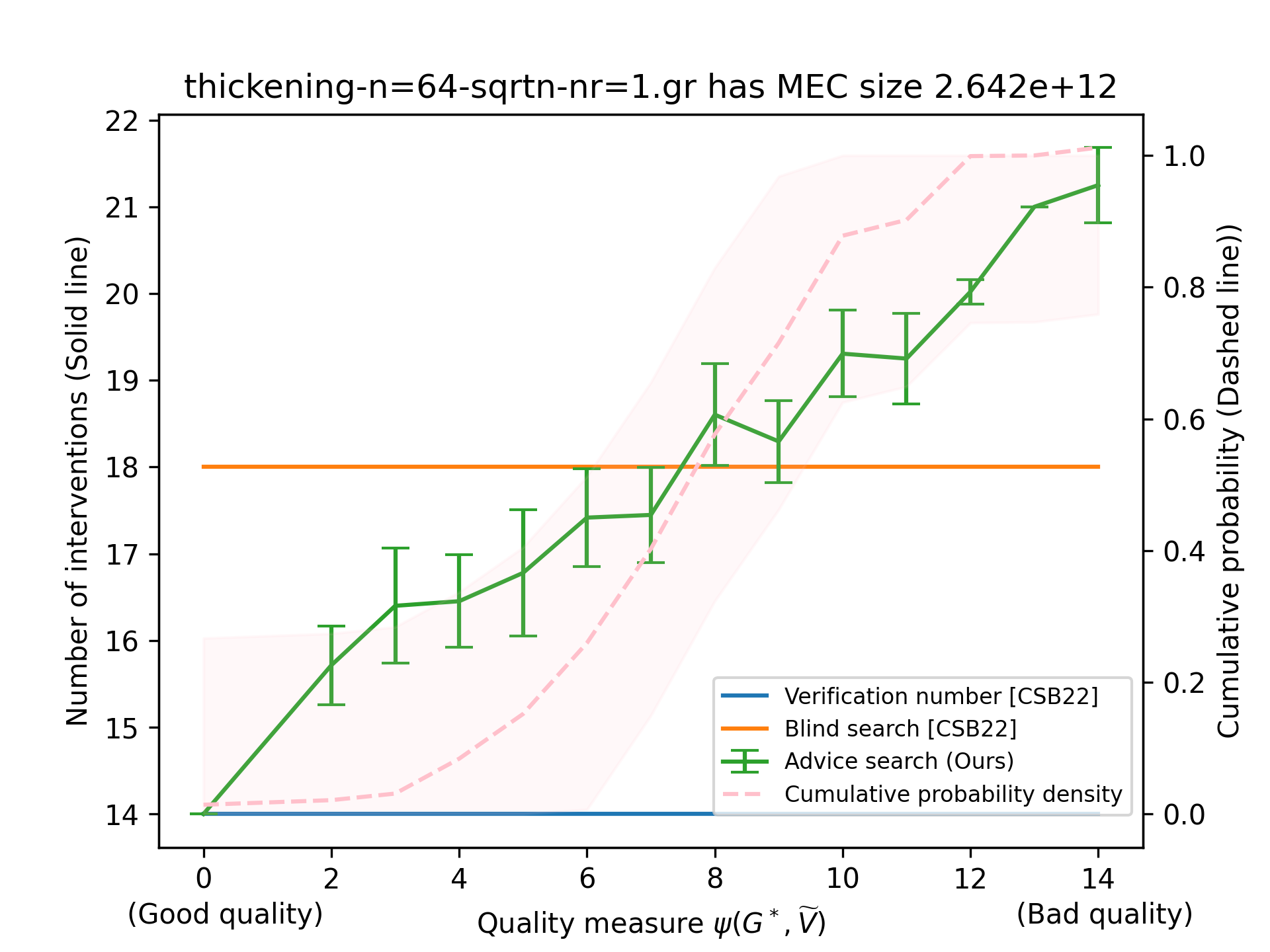}
    \caption{$n = 64$}
\end{subfigure}
\caption{Thickening-sqrtn synthetic graphs}
\label{fig:thickening-sqrtn}
\end{figure}

\end{document}